\pgfplotsset{compat=newest}
\def\GHOUL{GOSDTwG}
  \providecommand\BibTeX{{%
    \normalfont B\kern-0.5em{\scshape i\kern-0.25em b}\kern-0.8em\TeX}}}
\begin{document}
\setlength{\textfloatsep}{3pt}
\everypar{\looseness=-1}

\title{Fast Optimization of Weighted Sparse Decision Trees for use in \\ Optimal Treatment Regimes and Optimal Policy Design}

\author{Ali Behrouz}
\affiliation{%
  \institution{University of British Columbia}
  \city{Vancouver}
  \state{British Columbia}
  \country{Canada}
}
\email{alibez@cs.ubc.ca}

\author{Mathias L\'ecuyer}
\affiliation{%
  \institution{University of British Columbia}
  \city{Vancouver}
  \state{British Columbia}
  \country{Canada}
}
\email{mathias.lecuyer@ubc.ca}

\author{Cynthia Rudin}
\affiliation{%
  \institution{Duke University}
  \city{Durham}
  \state{North Carolina}
  \country{USA}
}
\email{cynthia@cs.duke.edu}

\author{Margo Seltzer}
\affiliation{%
  \institution{University of British Columbia}
  \city{Vancouver}
  \state{British Columbia}
  \country{Canada}
}
\email{mseltzer@cs.ubc.ca}

\begin{CCSXML}
<ccs2012>
   <concept>
       <concept_id>10010147.10010257.10010321</concept_id>
       <concept_desc>Computing methodologies~Machine learning algorithms</concept_desc>
       <concept_significance>300</concept_significance>
       </concept>
   <concept>
       <concept_id>10010147.10010257.10010293.10003660</concept_id>
       <concept_desc>Computing methodologies~Classification and regression trees</concept_desc>
       <concept_significance>300</concept_significance>
       </concept>
 </ccs2012>
\end{CCSXML}
\ccsdesc[300]{Computing methodologies~Classification and regression trees}

\keywords{Optimal Sparse Decision Trees; Interpretable Machine Learning; Explainability; Optimal Treatment Regimes.}

\begin{abstract}
Sparse decision trees are one of the most common forms of interpretable models. While recent advances have produced algorithms that fully optimize sparse decision trees for \textit{prediction}, that work does not address \textit{policy design}, because the algorithms cannot handle weighted data samples. Specifically, they rely on the discreteness of the loss function, which means that real-valued weights cannot be directly used. For example, none of the existing techniques produce policies that incorporate inverse propensity weighting on individual data points. We present three algorithms for efficient sparse weighted decision tree optimization. The first approach directly optimizes the weighted loss function; however, it tends to be computationally inefficient for large datasets. Our second approach, which scales more efficiently, transforms weights to integer values and uses data duplication to transform the weighted decision tree optimization problem into an unweighted (but larger) counterpart. Our third algorithm, which scales to much larger datasets, uses a randomized procedure that samples each data point with a probability proportional to its weight.
We present theoretical bounds on the error of the two fast methods and show experimentally that these methods can be two orders of magnitude faster than the direct optimization of the weighted loss, without losing~significant~accuracy. 
\end{abstract}

\maketitle
 
\section{Introduction}
\label{sec:introduction}


Sparse decision trees are a leading class of interpretable machine learning models that are commonly used for policy decisions \citep[e.g.,][]{ernst2005tree,silva2020optimization,dhebar2020interpretable}. Historically, decision tree optimization has involved greedy tree induction, where trees are built from the top down \cite{Quinlan93,breiman1984classification,dobkin1997induction}, but more recently there have been several approaches that fully optimize sparse trees to yield the best combination of performance and interpretability \cite{ farhangfar2008fast, nijssen2007mining, bertsimas2017optimal,gunluk2021optimal}. Optimization of sparse optimal trees is NP-hard, and many previous works have essentially leveraged the fact that the loss takes on a discrete number of values to provide a computational advantage \cite{Aghaei2019fairness, aghaei2020learning, McTavishZhong, lin2020generalized}. However, if one were to try to create a \textit{policy tree} or estimate causal effects using one of these algorithms, it would become immediately apparent that such algorithms are not able to handle weighted data, because the weights do not come in a small number of discrete values. This means that common weighting schemes, such as inverse propensity weighting or simply weighting some samples more than others \citep{linden2018estimating, cieslak2008learning}, are not directly possible with these algorithms.

For example, let us consider developing a decision tree for describing medical treatment regimes. Here, the cost for misclassification of patients in different stages of the disease could be different. To create an optimal policy, we would weight the loss from each patient and minimize the sum of the weighted losses. While it is possible to approximately optimize this sum using CART's suboptimal greedy splitting procedures \citep{breiman1984classification}, there is not a way to do it with the current fastest optimal decision tree method, GOSDT \cite{lin2020generalized}.


We extend the framework of  GOSDT-with-Guesses \citep{McTavishZhong} to support weighted samples. GOSDT-with-Guesses produces sparse decision trees with closeness-to-optimality guarantees in seconds or minutes for most datasets; we refer to this algorithm as \textit{\GHOUL}. Our work introduces three approaches to allow weighted samples, where the first one is slow, the second one is fast, and the third one is fast and scales to large dataset sizes through the use of sampling.

In more detail, a key contributor to \GHOUL's performance is its use of bitvectors to speed up the computation of the loss function.
However, the introduction of weights requires a vector multiplication between the weights and this bitvector representation, which introduces a runtime penalty of one to two orders of magnitude.
We demonstrate this effect in our first (direct) approach. Our second approach introduces a normalization and data duplication technique to mitigate the slowdown due to having real-valued weights. Here, we transform the weights to small integer values and then duplicate each sample by its transformed weights. In a third approach to this problem, which scales to much larger sample sizes, we propose a stochastic procedure, where we sample each data point with a probability proportional to its weight. Our experimental results show that: (1) the second and third techniques decrease run time by up to two orders of magnitude relative to that achieved by the slower direct weighted computation (the direct approach), (2) we can bound our accuracy loss from using the second approach rather than the first one;  and (3) the proposed weighted optimal decision tree technique can outperform natural baselines in terms of running time, sparsity, and accuracy.

\section{Related Work}
\label{sec:RelatedWork}
Decision trees are one of the most popular forms of interpretable models \citep{rudin2021interpretable}. While full decision tree optimization is NP-hard~\cite{laurent1976constructing}, it is possible to make assumptions, e.g., feature independence, that simplify the hard optimization to cases where greedy methods suffice~\cite{klivans2006toward}. However, these assumptions are unrealistic in practice. Some other approaches~\cite{narodytska2018learning, hu2020learning} assume that the data can be perfectly separated with zero error and use SAT solvers to find optimal decision trees; however, real data are generally~not~separable.

Recent work has addressed optimizing accuracy with soft or hard sparsity constraints on the tree size. Such decision tree optimization problems can be formulated using mixed integer programming (MIP)~\cite{bertsimas2017optimal, verwer2019learning,vilas2021optimal,gunluk2021optimal,ErtekinRu18,aghaei2020learning}, but MIP solvers tend to be slow. To improve the scalability of decision tree optimization, several studies have produced customized dynamic programming algorithms that incorporate branch-and-bound techniques. In particular, analytical bounds combined with bit-vector-based computation have been used to efficiently reduce the search space and improve run time \citep{hu2019optimal,AngelinoEtAl2018,ChenRu2018}. \citet{lin2020generalized} extend this approach to use dynamic programming, which leads to even more improved scalability. \citet{demirovic2020murtree} introduce constraints on both depth and the number of nodes to improve scalability. Recently, \citet{McTavishZhong} proposed smart guessing strategies, based on knowledge gleaned from black-box models, that can be applied to any optimal branch-and-bound-based decision tree algorithm to reduce the run time by multiple orders of magnitude. While these studies focus on improving running time and accuracy, they handle only uniform sample importance and do not consider weighted data points. Our work neatly fills this gap; our weighted objective function, data duplication method, and sampling approach enable us to find optimal decision trees for these problems quickly.

Several studies focus on learning tree- and list-based treatment regimes from data \citep{lakkaraju2017learning, zhang2015using, wang2015causal, laber2015tree, CuiEtAL2017, DoubledayEtAl2018, SunWa2021}.  However, none of these methods fully optimize the policy because it was not known at the time how to perform  optimization of the type we use in this work.


\vspace{-1ex}
\section{Methodology}
Let $\{(\x_i, y_i, w_i)\}_{i=1}^N$ represent our training dataset, where $\x_i$ are $M$-vectors of features, $y_i \in \{0,1, \dots, K\}$ are labels, $w_i \in \mathbb{R}^{\geq 0}$ is the weight associated with data $\x_i$, and $N$ is the size of the dataset. Also, let $\x$ be the $N\times M$ covariate matrix, $\w$ be the $N$-vector of weights, and $\y$ be the $N$-vector of labels, and let $x_{ij}$ denote the $j$-th feature of $\x_i$. To handle continuous features, we binarize them either by using all possible split points \citep{hu2019optimal} to create dummy variables or by using a subset of these splits as done by \citet{McTavishZhong}.  
We let $\tilde{\x}$, the binarized covariate matrix, be notated as $\tilde{\x}_{ij} \in \{0, 1\}$.

\subsection{Objective}
Let $\t$ be a decision tree that gives predictions $\{\hat{y}_i^\t\}_{i=1}^N$. The weighted loss of the tree $\t$ on the training dataset is:
\begin{equation}\label{eq:misclassification}
    \mathcal{L}_\w(\t,\tilde{\x},\y) = \frac{1}{\sum_{i = 1}^{N} w_i}\sum_{i=1}^N \mathds{1}[y_i \neq \hat{y}_i^\t] \times w_i \: .
\end{equation}

To achieve interpretability and prevent overfitting, we provide the options to use either soft sparsity regularization on the number of leaves, hard regularization on the tree depth,~or~both~\citep[see][]{McTavishZhong}:
\begin{equation}\label{eq:objective}
    \underset{\t}{\text{minimize}} \:\: \mathcal{L}_\w(\t,\tilde{\x},\y) + \lambda H_\t \quad s.t. \:\: \text{depth}(\t) \leq d \:,
\end{equation}
where $H_\t$ is the number of leaves in the tree $\t$ and $\lambda$ is a regularization parameter. We define $R_{\w}(\t,\tilde{\x},\y) = \mathcal{L}_\w(\t,\tilde{\x},\y) + \lambda H_\t$. We might refer to $\mathds{1}[y_i \neq \hat{y}_i^\t]$ as $I_i(\t)$, for simplicity. While in practice, depth constraints between 2 and 5 are usually sufficient, \citet{McTavishZhong} provide theoretically-proven guidance to select a depth constraint so that a single tree has the same expressive power (VC dimension) as an ensemble of smaller trees (e.g., a random forest or a boosted decision tree model). The parameter $\lambda$ trades off between the weighted training loss and the number of leaves in the tree.

\subsection{Learning Weighted Trees}
We present three approaches for handling sample weights. The first is the \emph{direct approach}, where we calculate the weighted loss directly. Implementing this approach requires multiplying each misclassification by its corresponding weight, which is computationally expensive in any algorithm that uses bitvectors to optimize loss computation. This overhead is due to replacing fast bitvector operations with slower vector multiplications. The direct approach slows \GHOUL{} down by two orders of magnitude.
To avoid this computational penalty, our second approach, \emph{data-duplication}, involves a transformation of the weights; specifically, we normalize, scale, and round the weights to be small integer values. We then duplicate samples, where the number of duplicates is the value of the rounded weights, and use this larger unweighted dataset to learn the tree. This method avoids costly vector multiplications and does not substantially increase run time compared to the unweighted \GHOUL{}; note that \GHOUL{} scales extremely well with the sample size due to the bit-vector computations, so duplication does not add much to the computational cost. Finally, to scale to larger datasets, we present a randomized procedure, called \emph{weighted sampling}, where we sample each data point with a probability proportional to its weight. This process introduces variance (not bias) and scales to large numbers of samples.

\head{Direct Approach}
We begin with the branch-and-bound algorithm of~\citet{McTavishZhong} and adapt it to support weighted samples. Given a reference model $T$, they prune the search space using three ``guessing'' techniques: \textbf{(1)} guess how
to transform continuous features into binary features, \textbf{(2)} guess tree depth for sparsity-regularized models,
and \textbf{(3)} guess tight lower bounds on the objective for subsets of points to allow faster time-to-completion. It is straightforward to see that the first two techniques apply directly to our weighted loss function. However, we need to adapt the third guessing technique to have an effective and tight lower bound for the weighted loss function. Let $\hat{y}_i^T$ be the predictions of the reference model (perhaps a boosted decision tree model) on training observation $i$. Let $s_a$ be the subset of training observations that satisfy a boolean assertion $a$: 
\begin{align*}
    s_{a} &:= \left\{i:a(\tilde{\x}_i) = \textrm{True}, i \in \{1,...,N\}  \right\} \\
    \tilde{\x}({s_{a}}) &:= \left\{\tilde{\x}_i: i \in s_{a} \right\}\\
    \y({s_{a}}) &:= \left\{y_i: i \in s_{a} \right\}\\
    \w({s_{a}}) &:= \left\{w_i: i \in s_{a} \right\}.
\end{align*}
Motivated by \citet{McTavishZhong}, we define our guessed lower bound as follows: 
\begin{equation}\label{eq:lbguess}
lb_\textrm{guess}(s_a):=\frac{1}{\sum_{i = 1}^{N} w_i}\sum_{i \in s_a} \mathds{1}[y_i \neq \hat{y}_i^T] \times w_i + \lambda \:.
\end{equation}
\autoref{eq:lbguess} is a lower bound guess for $R_{\w}(t,\tilde{\x}(s_a),\y(s_a))$ because we assume that the (possibly black box) reference model $T$ has loss at most that of tree $t$ on data $s_a$ and we know that any tree has at least one node (hence the regularization term's lower~bound~of~$\lambda\times 1$).

Accordingly, in the branch-and-bound algorithm, to optimize the weighted loss function introduced in \autoref{eq:objective}, we consider a subproblem to be solved if we find a subtree that achieves an objective less than or equal to its $lb_\textrm{guess}$. If we find such a subtree, our training performance will be at least as good as that of the reference model. For a subset of observations $s_a$, we let $t_{a}$ be the subtree used to classify points in $s_a$, and $H_{t_{a}}$ be the number of leaves in that subtree. We can then define the subset's contribution to the objective: 
\begin{align*}
&R_{\w(s_a)}(t_a,\tilde{\x}(s_a), \y(s_a)) \\&= \frac{1}{\sum_{i = 1}^{N} w_i}\sum_{i \in s_a} \mathds{1}[y_i \neq \hat{y}_i^{t_{a}}] \times w_i + \lambda H_{t_{a}}.
\end{align*}
For any dataset partition $A$, where $a\in A$ corresponds to the data handled by a given subtree of $t$:
\begin{equation*}
R_{\w}(t, \tilde{\x}, \y) = \sum_{a \in A} R_{\w(s_a)}(t_a, \tilde{\x}(s_a), \y(s_a)) \:. 
\end{equation*}
By introducing the abovementioned lower bound guess, we can now replace the lower bound of \citet{McTavishZhong} with our lower bound and proceed with branch-and-bound. Their approach is provably close to optimal when the reference model makes errors similar to those made in an optimal tree. We now show that our approach using the weighted lower bound is also close to optimal. Let $\MC$ be the set of observations incorrectly classified by the reference model $T$, i.e., $\MC = \{i |y_i \neq \hat{y}_i^T\}$, and $t_{\textrm{g}}$ be a tree returned from our lower-bound guessing algorithm. We have:

\begin{theorem} \label{thm:glb} (Performance Guarantee).
Let $R(t_{\textrm{g}}, \tilde{\x}, \y)$ denote the objective of $t_\textrm{g}$ on the full binarized dataset $(\tilde{\x}, \y)$ for some per-leaf penalty $\lambda$. Then for any decision tree $t$ that satisfies the same depth constraint $d$, we have: \begin{align*}R(t_{\textrm{g}}, \tilde{\x}, \y) \leq &\frac{1}{\sum_{i = 1}^{N} w_i} \left(\sum_{i \in \MC} w_i \right. \\ &\left.+ \sum_{i \in \CC} \mathds{1}[y_i \neq \hat{y}_i^{t}] \times w_i \right)  + \lambda H_{t} \:.\end{align*}
That is, the objective of the guessing model $t_{\textrm{g}}$ is no worse than the union of errors made by the reference model and tree $t$.
\end{theorem}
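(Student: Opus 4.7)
I would prove a stronger, subset-level statement by structural induction on $t$: for every subset $s\subseteq\{1,\dots,N\}$ visited by the branch-and-bound procedure and every decision tree $t$ on $s$ of depth at most $d$, the subtree $t_g(s)$ that the algorithm returns on $s$ satisfies
\[
R_{\w(s)}(t_g(s),\tilde{\x}(s),\y(s)) \leq \tfrac{1}{\sum_j w_j}\Bigl(\sum_{i\in s\cap \MC} w_i + \sum_{i\in s\cap \CC} \mathds{1}[y_i\neq \hat{y}_i^{t}]\, w_i\Bigr) + \lambda H_t .
\]
Specializing this to $s=\{1,\dots,N\}$ and the tree $t$ appearing in the statement then recovers the claimed inequality.

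At each subset $s$ the algorithm behaves in one of two ways, which I would handle separately. \emph{Case (a), early termination:} the algorithm has located a subtree on $s$ whose objective is at most $lb_\textrm{guess}(s)$. Then $R_{\w(s)}(t_g(s))\leq lb_\textrm{guess}(s)=\tfrac{1}{\sum_j w_j}\sum_{i\in s\cap \MC}w_i+\lambda$, which is dominated by the claimed bound because $H_t\geq 1$ and the $\CC$-contribution is non-negative. \emph{Case (b), no early termination:} the algorithm evaluates every candidate split and leaf that branch-and-bound does not prune. If $t$ is a single leaf predicting $\hat{y}$, the matching single-leaf candidate is among those considered; decomposing its weighted error sum along $s\cap\MC$ and $s\cap\CC$ yields the required bound. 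If instead $t$ has a root split on some feature $j^{\star}$ with children $t_L,t_R$ on $s_L,s_R$, the algorithm considers the split on $j^{\star}$ and recursively returns $t_g(s_L)$ and $t_g(s_R)$; the inductive hypothesis bounds each subproblem's contribution by its own RHS, and summing them (using $H_t=H_{t_L}+H_{t_R}$ together with the partition of the $\CC$-errors along $s_L,s_R$) bounds the value of the split-$j^{\star}$ candidate by the full RHS, hence so does $t_g(s)$.

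\textbf{Main obstacle.} The delicate step is ensuring that in Case (b) the algorithm genuinely evaluates the split-on-$j^{\star}$ candidate rather than discarding it via a lower-bound prune. I would resolve this by invoking soundness of the branch-and-bound pruning rule: whenever split-$j^{\star}$ is pruned, the current incumbent's objective is already no larger than the pruned candidate's true value, so the tree ultimately returned by the algorithm inherits a value no larger than that of split-$j^{\star}$, which the inductive hypothesis bounds by the RHS. A secondary, much easier check is that $lb_\textrm{guess}(s)$ is dominated by the RHS in Case (a), which follows immediately from $H_t\geq 1$ and non-negativity of the $\CC$-contribution.
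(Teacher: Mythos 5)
Your skeleton matches the paper's: structural induction over the nodes of the comparison tree $t$, a leaf base case, and an inductive step that follows $t$'s root split. You have also correctly located the crux in the inductive step, namely what happens when the candidate split on $j^{\star}$ is pruned. However, your resolution of that crux is where the argument breaks. You invoke ``soundness of the branch-and-bound pruning rule'': whenever split-$j^{\star}$ is pruned, the incumbent's objective is already no larger than the pruned candidate's \emph{true value}. That property holds for branch-and-bound with \emph{valid} lower bounds, but the whole point of this algorithm is that $lb_\textrm{guess}$ is a guess that can \emph{exceed} the true optimal value of a subproblem (whenever the reference model $T$ makes more errors on $s_a$ than an optimal subtree would). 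A subproblem is declared solved as soon as some subtree meets its guessed bound, so pruning decisions are driven by quantities that are not lower bounds on anything; the incumbent can therefore be strictly worse than the true value of a pruned candidate, and your Case~(b) fallback does not go through as stated. Relatedly, your inductive hypothesis is phrased in terms of $R_{\w(s)}(t_g(s))$, but when a child subproblem is pruned before being solved, $t_g(s_L)$ need not exist, so the hypothesis cannot even be applied to it.

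The paper's proof repairs exactly this by running the induction on a different quantity: $lb_\textrm{max}(s_a,d,\lambda)$, the largest lower-bound \emph{estimate} the algorithm ever assigns to subproblem $s_a$. Two invariants of the algorithm make this work: (i) the objective of the solution returned for a subproblem never exceeds $lb_\textrm{max}$ of that subproblem, and (ii) the parent's lower-bound estimate never exceeds the sum of the children's current estimates for \emph{any} split, in particular for $j^{\star}$, whether or not that split is ever fully explored. Induction on the estimates (which are always defined) then gives the bound, with the leaf-vs-internal bookkeeping handled by $H_{t_a}\geq 1$ and $H_{t_a}\geq 2$ respectively. A second, more minor slip: in your Case~(a) the returned objective is bounded by $ub \leq lb_\textrm{guess}(s) + \lambda$ rather than by $lb_\textrm{guess}(s)$ itself when the algorithm keeps $s$ as a leaf because splitting cannot pay for the extra regularization; this still fits under the right-hand side, but only because $H_t\geq 2$ at an internal node of $t$, so the extra $\lambda$ must be accounted for explicitly. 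Your plan is repairable by substituting ``lower-bound estimate of the pruned candidate'' for ``true value of the pruned candidate'' and carrying the induction on those estimates, but as written the key step relies on a soundness property the algorithm does not have.
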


Hence, the model $t_{\textrm{g}}$ from our lower bound guessing algorithm achieves a weighted objective that is as good as the error of the reference model (which should be small) plus (something smaller than) the error of the best possible tree of the same depth. The proof can be found in Appendix~\ref{app:theorem_1}.

\head{Motivation for Data Duplication} While it might seem counter-intuitive that increasing the size of the dataset by replicating data would be substantially faster than the direct approach, it actually is. Let us consider the computation involved in these two options. In decision tree optimization, evaluation of the objective is performed repeatedly. Any small improvement in that evaluation step leads to a large improvement (possibly orders of magnitude) in the overall computational speed of the algorithm.
In the direct approach, computing the objective~\eqref{eq:objective} requires computing the inner product  $\w \cdot \mathcal{I}$, where $\mathcal{I}_i = \mathds{1}[y_i \neq \hat{y}_i^\t]$. In the unweighted case, as all weights are 1, this computation can be performed using bit operations, which are extremely fast. In the weighted case, we resort to standard inner products, which are two orders of magnitude slower (see Section \ref{sec:experiments}). The data-duplication approach allows us to use bit-vectors as in the unweighted case, preserving fast computation.

\begin{algorithm}[t] 
    \small
    \SetKwInOut{Input}{Input}
    \SetKwInOut{Output}{Output}
    \Input{Dataset $\mathbf{x},\y$ and weights $\w$, duplication factor $p<100$}
    \Output{Duplicated dataset $\tilde{X}$, $\tilde{y}$}
    $\tilde{X} \leftarrow \emptyset; \tilde{y} \leftarrow \emptyset$;\\
    Define $\tilde{w}_i = \textrm{round}\left(p\cdot \left(\frac{w_i}{\sum_{i = 1}^{N} w_i}\right)\right)$;\\
    \For{$x_i \in \x$}{
        \For{$i = 1, 2, \dots, \tilde{w}_i$}{
            $\tilde{X} \leftarrow \tilde{X} \cup {\{x_i\}}$;\\
            $\tilde{y} \leftarrow \tilde{y} \cup {\{y_i\}}$;
        }
    }
    \Return $\tilde{X}$, $\tilde{y}$
    \caption{Data Duplication}
    \label{alg:data_duplication}
\end{algorithm}

\head{Data-duplication Algorithm}
The data-duplication algorithm is shown in Algorithm~\ref{alg:data_duplication}. Given an integer $p > 0$, we first normalize all weights and scale them to $(0, 1]$. Then we multiply each normalized weight by $p$ and round them to integers. Given the scaled integer weights, we then duplicate each sample, $\x_i$, by its corresponding integer weight, $\hat{w}_i$. Once the data are duplicated, \emph{we can use any optimal decision tree technique}. In our experiments, we show that if we choose the value of $p$ appropriately, this method can speed up the training process significantly without losing too much accuracy. After data-duplication, there are no longer weights associated with the samples, and we can use the fast bit-vector computations from the unweighted case. Even in the presence of substantially more data, since the bit-vector computation scales sublinearly with the number of samples, it is much faster than the direct approach.

\head{Correctness of Data Duplication} 
One might ask whether the data duplication approach leads to suboptimal solutions because its loss function is an approximation to the weighted loss. As it turns out, as long as the weights do not change very much when rounding to integers, the minimum of the data duplication algorithm's objective is very close to the minimum of the original weighted objective. 

Recall the objective 
\[R(t) := \frac{1}{\sum_{i = 1}^{N} w_i}\sum_i w_i I_i(t) + \lambda \#\textrm{leaves}.\] Define the objective with the approximate weights as
\[\Rt(t) := \frac{1}{\sum_{i = 1}^{N} \tilde{w}_i}\sum_i \tilde{w}_i I_i(t) + \lambda \#\textrm{leaves}.\] When we rounded, we ensured that the weights did not change much. That is, we know by design that $\|\mathbf{w}-\mathbf{\tilde{w}}\|_\infty\leq \epsilon$. Note that multiplying $w_i$s by a scalar cannot change the value of the objective function. Accordingly, normalizing and also scaling weights by $p$ do not change the value of $R(t)$. Therefore, without loss of generality, we can assume that $w_i$s are weights right before rounding.

\begin{theorem} Let $t^*$ be a minimizer of the objective as $t^* \in \argmin_t R(t)$, and $\tt$ be a minimizer of the approximate loss function as $\tt\in \argmin_t \Rt(t)$. If $\|\mathbf{w}-\mathbf{\tilde{w}}\|_\infty\leq \epsilon$, we have:
\begin{equation*}
|R(t^*) - \Rt(\tt)|\leq \\max\left\{ \frac{(\zeta - 1) \psi + \epsilon}{\zeta}, \frac{(\eta - 1) \psi + \epsilon}{\eta} \right\},
\end{equation*}
where $\eta = \max_{1 \leq i \leq N} \left\{\frac{w_i}{\tilde{w}_i} \right\}$, $\zeta = \max_{1 \leq i \leq N} \left\{\frac{\tilde{w}_i}{w_i} \right\}$, and $\psi = \frac{\max_i\left\{w_i, \tilde{w}_i\right\}}{\min_i \left\{ w_i, \tilde{w}_i \right\}}$.
\end{theorem}
Note that, in other words, we provably will not lose substantial performance when using the rounded solution, as long as we did not change the weights very much when rounding. The value of $\eta$ and $\zeta$ are usually small and near 1. If the value of $\psi$ is very large, then the direct approach is more efficient, and we do not need to use data duplication. Accordingly, when we use data duplication, the value of $\psi$ should also be small. The proof is in Appendix~\ref{app:theorem_2}.

\head{Weighted Sampling}
When the ratio of the biggest weight over the smallest weight is large, the data duplication approach might be inefficient, in that it could require us to create a huge number of samples. To address this issue, we present a stochastic sampling process as a pre-processing step.  We sample each data point based on its weight. Given an arbitrary number $r$, we sample $S = r \times N$ data points such that the probability of choosing $\x_i$ is $\frac{\w_i}{\sum_{i = 1}^{N} \w_i}$. After this step, we can use any unweighted optimal decision tree algorithm on the sampled dataset.

\head{Quality Guarantee of Weighted Sampling}
Let $\tilde{\mathcal{L}}(.)$ be the loss function on the sampled dataset, it is not hard to see that $\mathbb{E}[\tilde{\mathcal{L}}] = \mathcal{L}_{\w}$, where $\mathcal{L}_{\w}$ is the value of the misclassification (Eq.~\ref{eq:misclassification}) on the weighted dataset. Based on this fact, we have the following theorem:

\begin{theorem}\label{thm:weighted_sampling}
Given a weighted dataset $D = \{(\x_i, y_i, w_i)\}_{i=1}^N$, an arbitrary positive real number $r > 0$, an arbitrary positive real number $\varepsilon > 0$, and a tree $\t$, if we sample $S = r \times N$ data points from $D$, $\tilde{D} = \{(\tilde{\x}_i, \tilde{y}_i)\}_{i=1}^S$, we have:
\begin{equation*}
    \mathbb{P}\left( | \tilde{\mathcal{L}}(\t, \tilde{\x}, \tilde{\y}) -  \mathcal{L}_{\w}(\t, \x, \y)| \geq \varepsilon \right) \leq 2 \exp\left( - \frac{2\varepsilon^2}{S} \right)
\end{equation*}
\end{theorem}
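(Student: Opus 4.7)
The plan is to recognize this as a straightforward application of Hoeffding's inequality to the i.i.d.\ Bernoulli variables produced by weighted (with-replacement) sampling, building on the unbiasedness remark the authors have already made.

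First I would set up the relevant random variables. For each of the $S = r \times N$ sampled points, indexed by $j = 1, \dots, S$, define $X_j := \mathds{1}[\tilde{y}_j \neq \hat{y}_j^{\t}]$. Because the samples are drawn independently from the weighted distribution that selects $(\x_i, y_i)$ with probability $w_i / \sum_k w_k$, the $X_j$ are i.i.d.\ and bounded in $[0,1]$. A one-line computation then verifies the unbiasedness claim already stated just before the theorem:
\begin{equation*}
\mathbb{E}[X_j] = \sum_{i=1}^N \frac{w_i}{\sum_{k=1}^N w_k} \mathds{1}[y_i \neq \hat{y}_i^{\t}] = \mathcal{L}_{\w}(\t, \x, \y).
\end{equation*}

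Next, I would express $\tilde{\mathcal{L}}(\t, \tilde{\x}, \tilde{\y}) = \sum_{j=1}^S X_j$ (matching the normalization implicit in the statement, in which the deviation scale $\varepsilon$ sits against the sample-count $S$ in the denominator rather than in the numerator of the exponent; with the alternative normalization $\tilde{\mathcal{L}} = \tfrac{1}{S}\sum_j X_j$ one gets the usual $2\exp(-2S\varepsilon^2)$ form, which for $S \geq 1$ implies the stated bound). Since $X_1, \dots, X_S$ are independent and each is contained in an interval of length $1$, Hoeffding's inequality applied to $\sum_j X_j$ yields exactly
\begin{equation*}
\mathbb{P}\!\left(\Bigl|\sum_{j=1}^S X_j - \mathbb{E}\!\Bigl[\sum_{j=1}^S X_j\Bigr]\Bigr| \geq \varepsilon\right) \leq 2\exp\!\left(-\frac{2\varepsilon^2}{\sum_{j=1}^S 1^2}\right) = 2\exp\!\left(-\frac{2\varepsilon^2}{S}\right),
\end{equation*}
which is the claimed bound once we substitute the expectation from the unbiasedness step.

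Honestly, no step is a serious obstacle: the only subtlety is being explicit about the normalization convention used for $\tilde{\mathcal{L}}$ so that the constants line up with the stated exponent, and noting that we are treating the sampling as i.i.d.\ with replacement (otherwise a Hoeffding-Serfling version for sampling without replacement would be invoked, with the same rate up to a constant). Everything else is a direct invocation of a textbook concentration inequality for bounded i.i.d.\ sums.
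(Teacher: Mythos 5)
Your proof is correct and follows essentially the same route as the paper's: define indicator variables $X_j$ for the sampled points, use the unbiasedness $\mathbb{E}[\tilde{\mathcal{L}}] = \mathcal{L}_{\w}$, and apply Hoeffding's inequality for i.i.d.\ variables bounded in $[0,1]$. If anything, you are more careful than the paper, which writes $\tilde{\mathcal{L}} = \sum_{i=1}^S X_i$ yet simultaneously asserts $\mathbb{E}[\tilde{\mathcal{L}}] = \mathcal{L}_{\w}$ (which requires the $\tfrac{1}{S}$-normalized average); your explicit remark that the normalized version yields the stronger $2\exp(-2S\varepsilon^2)$, which implies the stated bound for $S \geq 1$, resolves that tension.
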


\section{Experiments}
\label{sec:experiments}

Our evaluation addresses the following questions: 
\begin{enumerate}
 \item When is the direct approach more efficient than data-duplication and weighted sampling?    
 \item In practice, how well do the second and third proposed methods perform relative to the direct approach?
 \item How sparse and fast are our weighted models relative to state-of-the-art optimal decision trees?
 \item How can our approach be used for policy making?

\end{enumerate}

 We use sparsity as a proxy for interpretability, because it can be quantified thus providing an objective means of comparision~\cite{rudin2021interpretable}.
\begin{table}[t]
    \small
    \centering
    \caption{Datasets}
    \resizebox{0.35\textwidth}{!} {
    \centering
    \begin{tabular}{l|c|c|c}
    \toprule
    Dataset & samples & features & binary features \\\hline\hline
    Lalonde & 723 & 7 & 447  \\\hline
    Broward & 1954 & 38 & 588
    \\\hline
    Coupon & 2653 & 21 & 87  \\ \hline
    Diabetes & 5000 & 34 & 532
    \\\hline
    COMPAS & 6907 & 7 & 134  \\\hline
    FICO & 10459 & 23 & 1917 \\\hline
    Netherlands & 20000 & 9 & 53890 \\
    \bottomrule
    \end{tabular}
    }
    \label{tab:data}
\end{table}

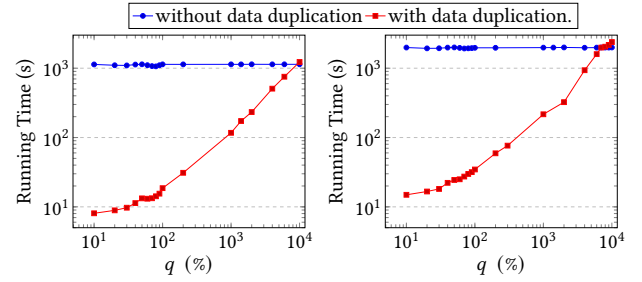
\begin{figure}[t]
\begin{center}
\begin{tikzpicture}[scale=0.45]
\begin{axis}[
    font=\Huge,
    ylabel={Running Time (s)},
    xlabel={$q \:\: (\%)$},
    ymode=log,
    xmode=log,
    log origin=infty,
    xmin=0, xmax=12000,
    ymin=5, ymax=3000,
    ymajorgrids=true,
    grid style=dashed,
    legend columns=2,
    legend style={at={(0.24, 1.1)},anchor=west},
]

\addplot
coordinates {(0, 1134.83) (10, 1133.56) (20, 1102.95) (30, 1096.84) (40, 1128.87) (50, 1138.19) (60, 1106.01) (70, 1072.92) (80, 1063.44) (90, 1103.79) (100, 1136.62) (200, 1136) (1000, 1137) (1400, 1136) (2000, 1139) (4000, 1137) (6000, 1138) (10000, 1138)};
\addlegendentry{without data duplication}

\addplot
coordinates {  (0, 7.916) (10, 8.072) (20, 8.9) (30, 9.72) (40, 11.321) (50, 13.26) (60, 13.1) (70, 13.3) (80, 14.2) (90, 15.5) (100, 18.6) (200, 31) (1000, 117) (1400, 173) (2000, 233) (4000, 507) (6000, 754) (10000, 1229)};
\addlegendentry{with data duplication.}

\end{axis}
\end{tikzpicture}
\hspace{-26ex}
\begin{tikzpicture}[scale=0.45]
\begin{axis}[
    font=\Huge,
    ylabel={Running Time (s)},
    xlabel={$q \:\: (\%)$},
    ymode=log,
    xmode=log,
    log origin=infty,
    xmin=0, xmax=12000,
    ymin=5, ymax=3000,
    ymajorgrids=true,
    grid style=dashed,
]

\addplot
coordinates {(0, 1985.9525) (10, 1986.73) (20, 1933.7625) (30, 1939.5575) (40, 1982.53) (50, 1994.8425) (60, 1963.5175) (70, 1924.87) (80, 1936.2874) (90, 1949.045) (100, 1969.82) (200, 1970.5) (300, 0) (400, 0) (500, 0) (600, 0) (700, 0) (800, 0) (900, 0) (1000, 1981.7173) (1400, 1984.88) (2000, 1993.61) (4000, 1974.44) (5000, 0) (6000, 1982.9425) (7000, 1991.25) (8000, 1986.49) (9000, 1984.675) (10000, 1990.76)};

\addplot
coordinates {(0, 14.8) (10, 14.92) (20, 16.65) (30, 18.13) (40, 22.2) (50, 24.42) (60, 24.975) (70, 27.345) (80, 29.825) (90, 31.675) (100, 34.595) (200, 59.2) (300, 76.39) (1000, 216.45) (2000, 323.75) (3000, 0) (4000, 935.176) (5000, 0) (6000, 1595.27) (7000, 1986.05) (8000, 2043.84) (9000, 2169.37) (10000, 2383.34)};

\end{axis}
\end{tikzpicture}
\vspace{-2ex}
\caption{Training time of the model with and without data duplication on different machines.}
\label{fig:run_time_p}
\end{center}
\end{figure}

\begin{figure*}
\includegraphics[width=0.245\textwidth]{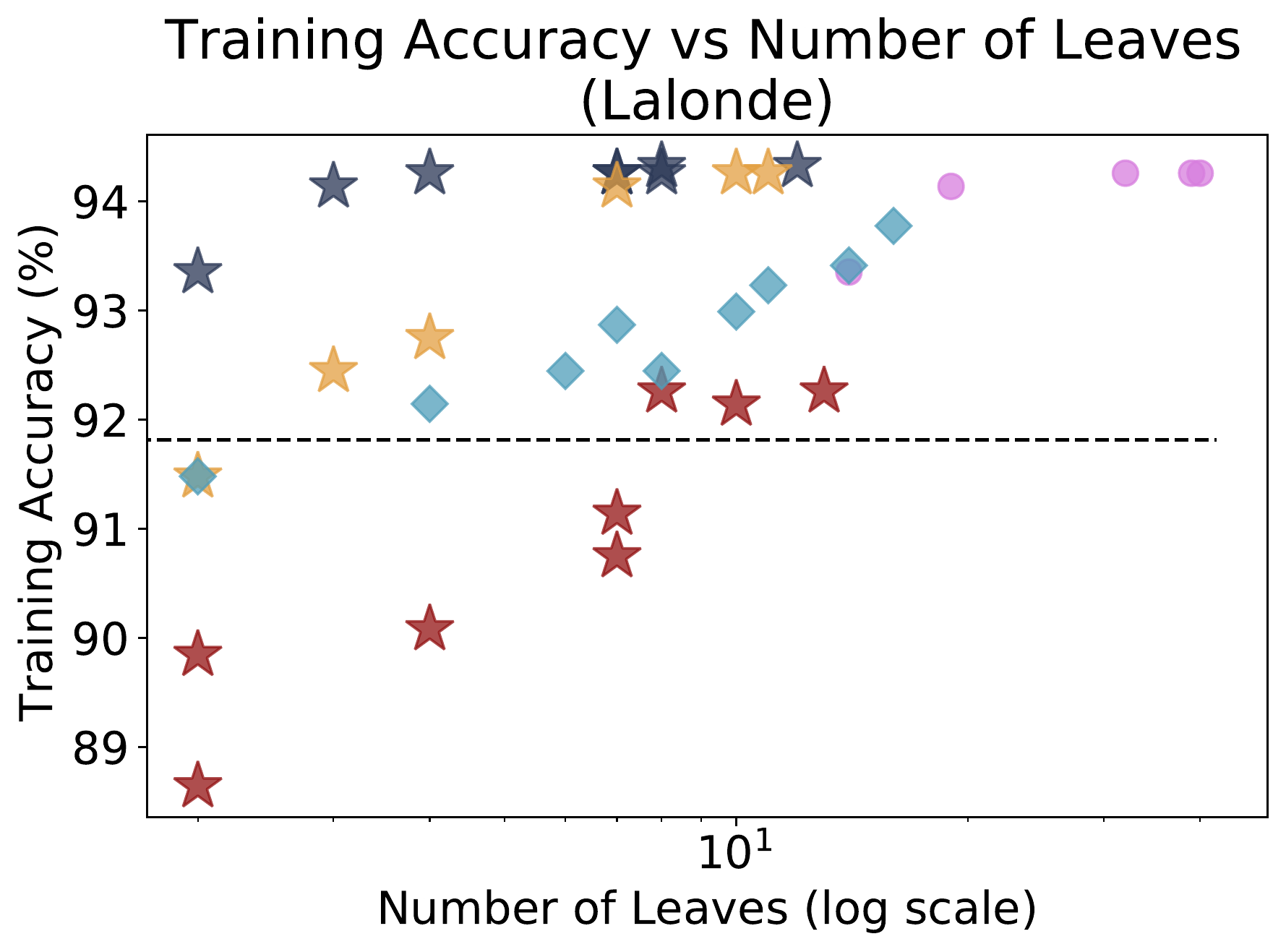}
\includegraphics[width=0.245\textwidth]{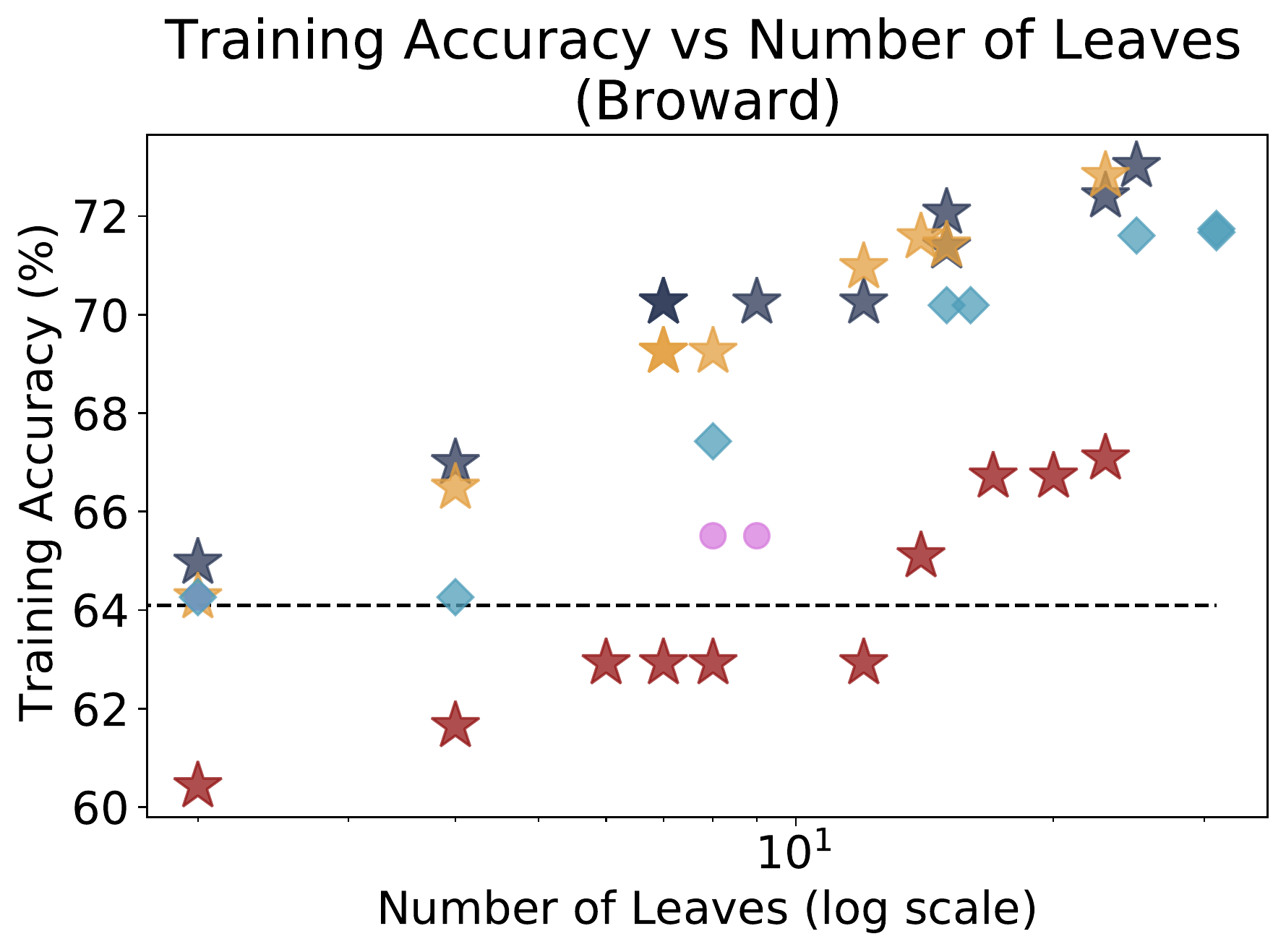}
\includegraphics[width=0.245\textwidth]{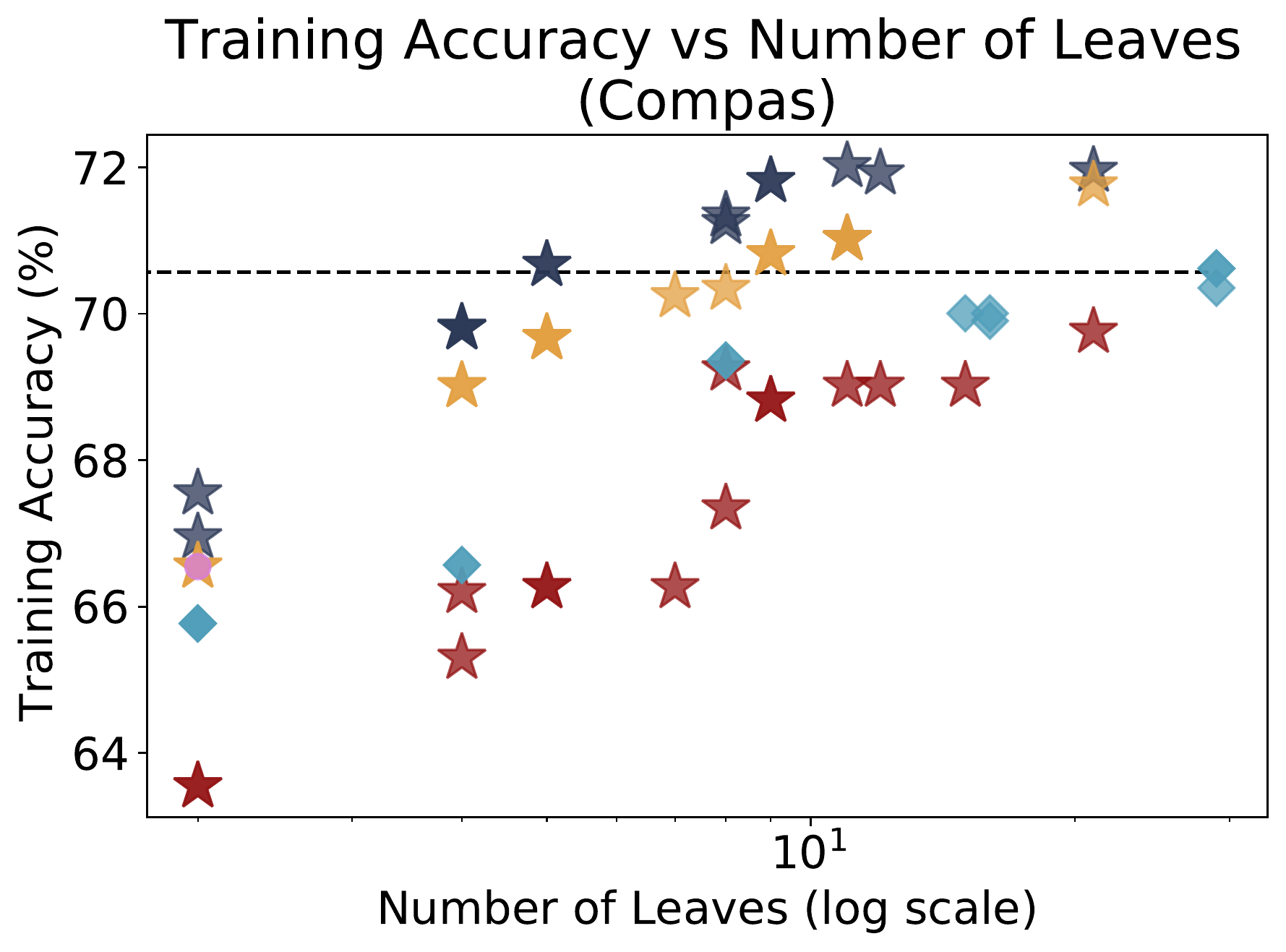}
\includegraphics[width=0.245\textwidth]{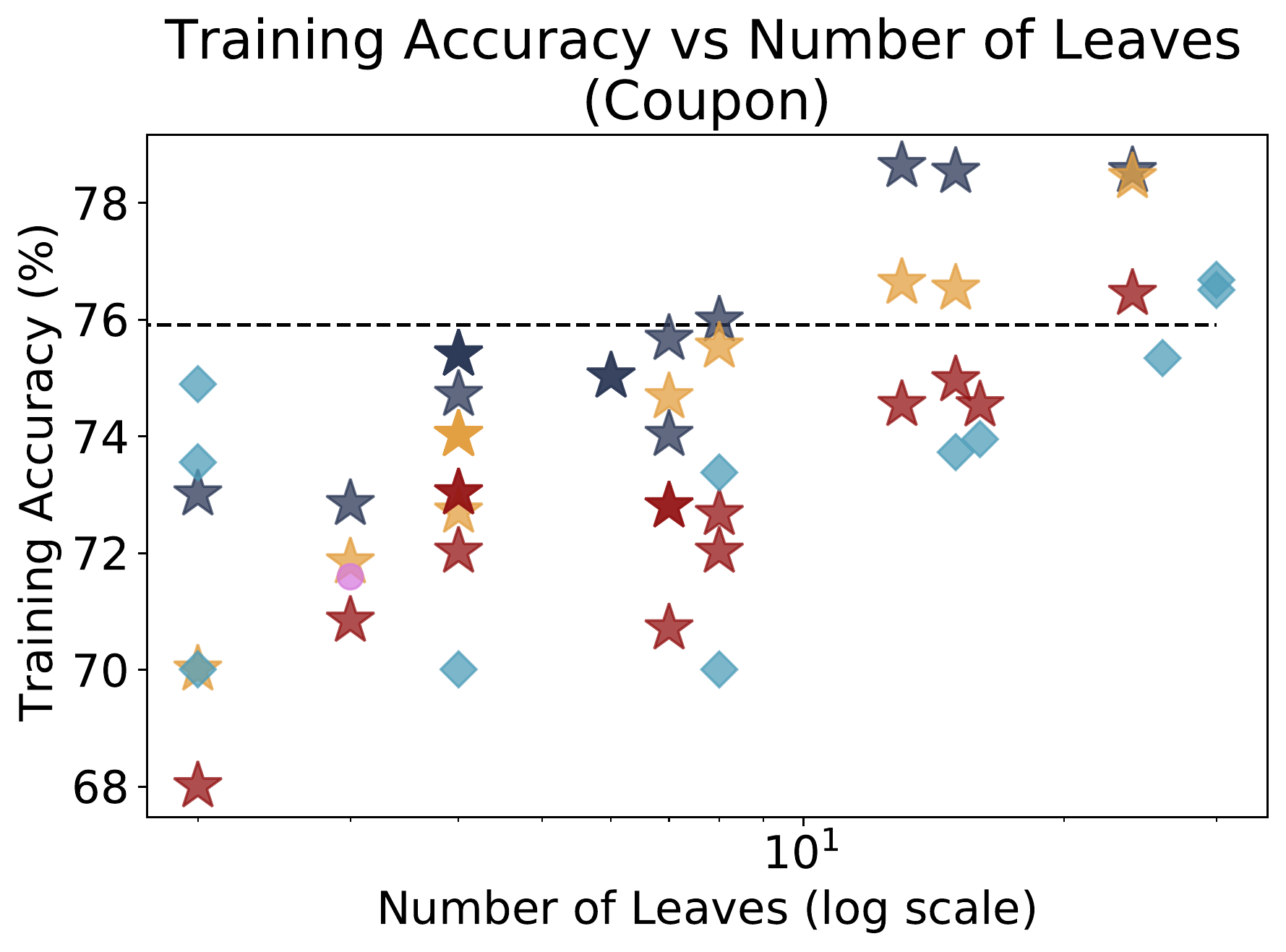}\\
\includegraphics[width=0.245\textwidth]{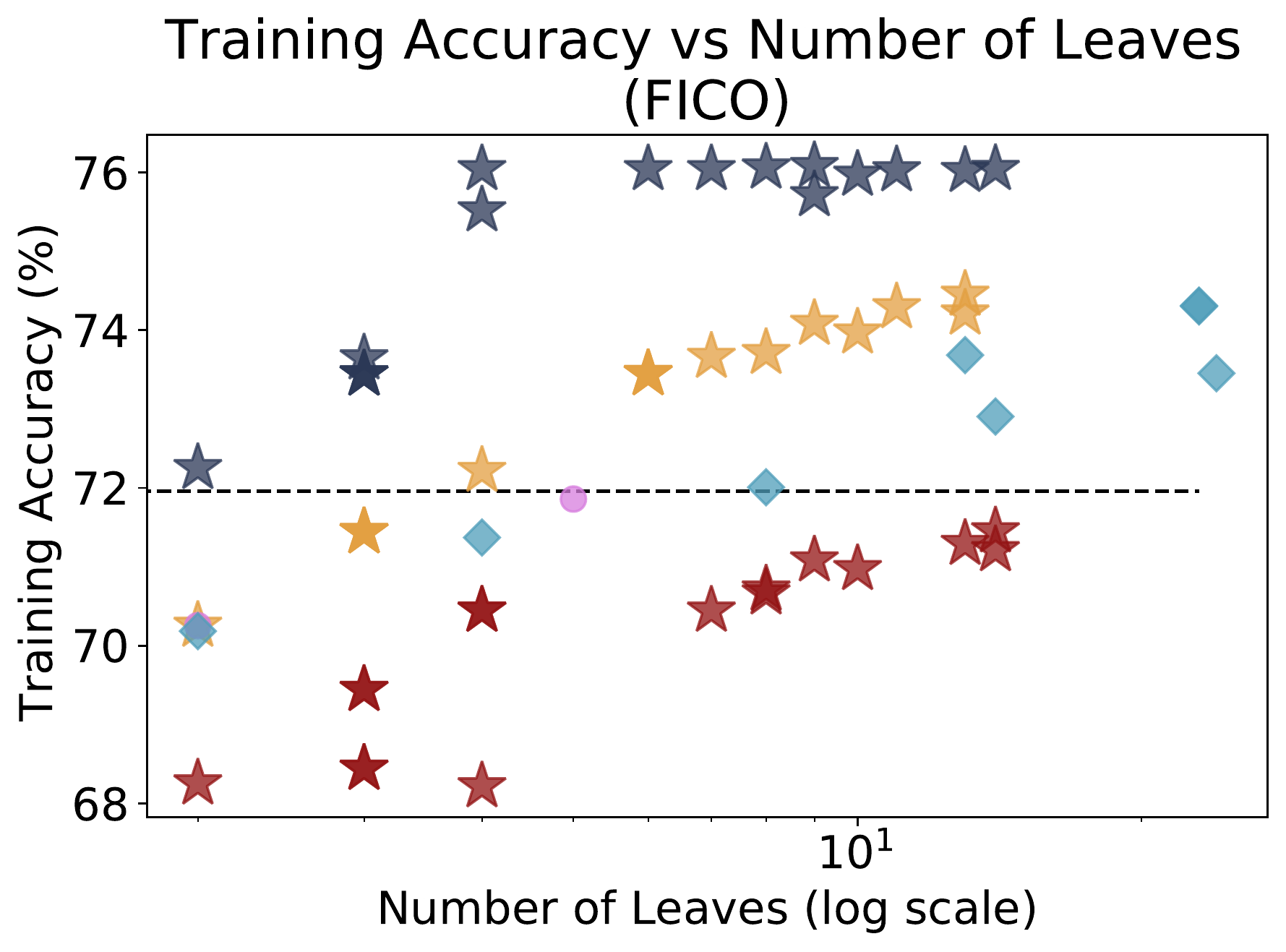}
\includegraphics[width=0.245\textwidth]{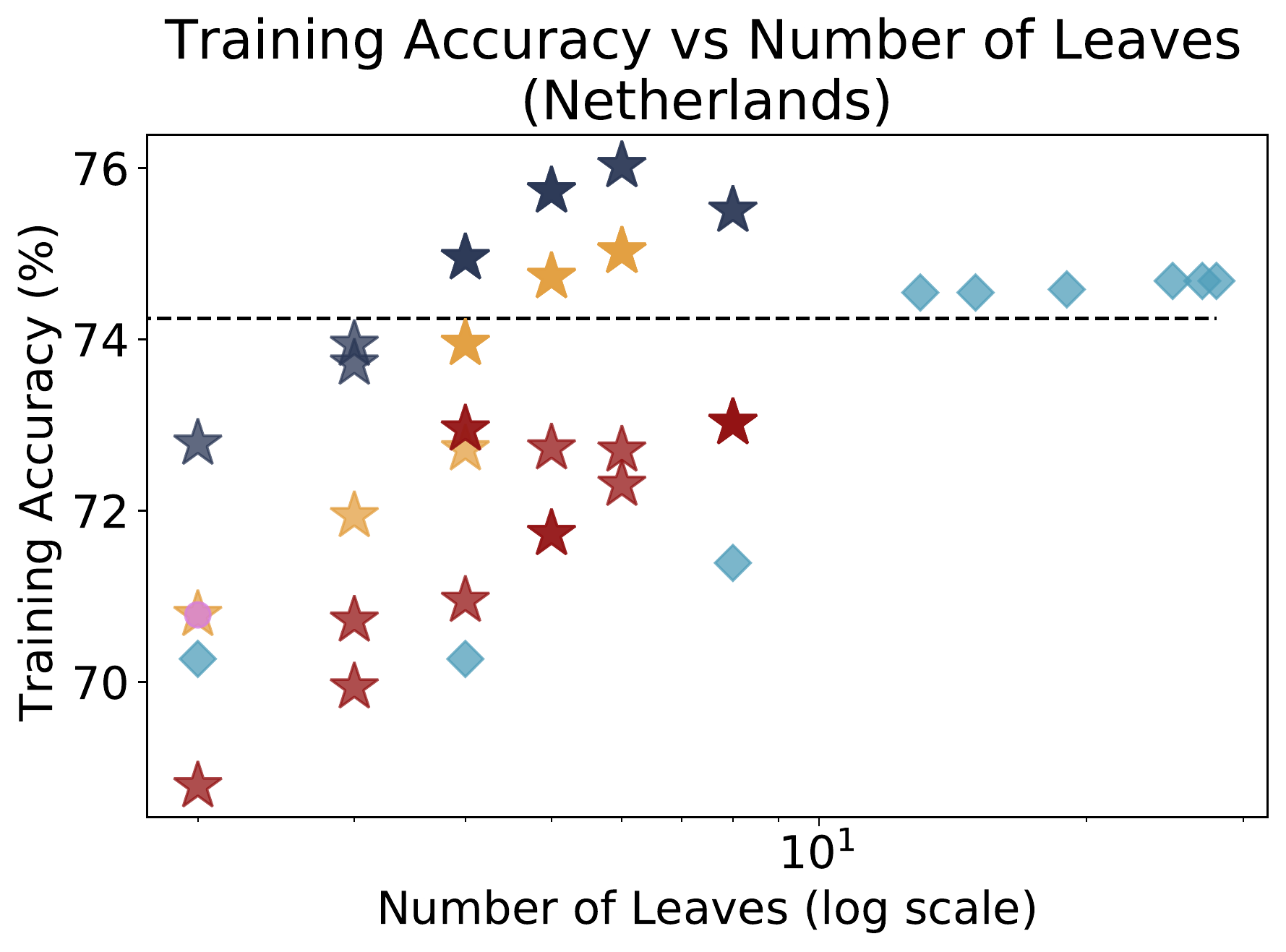}
\includegraphics[width=0.245\textwidth]{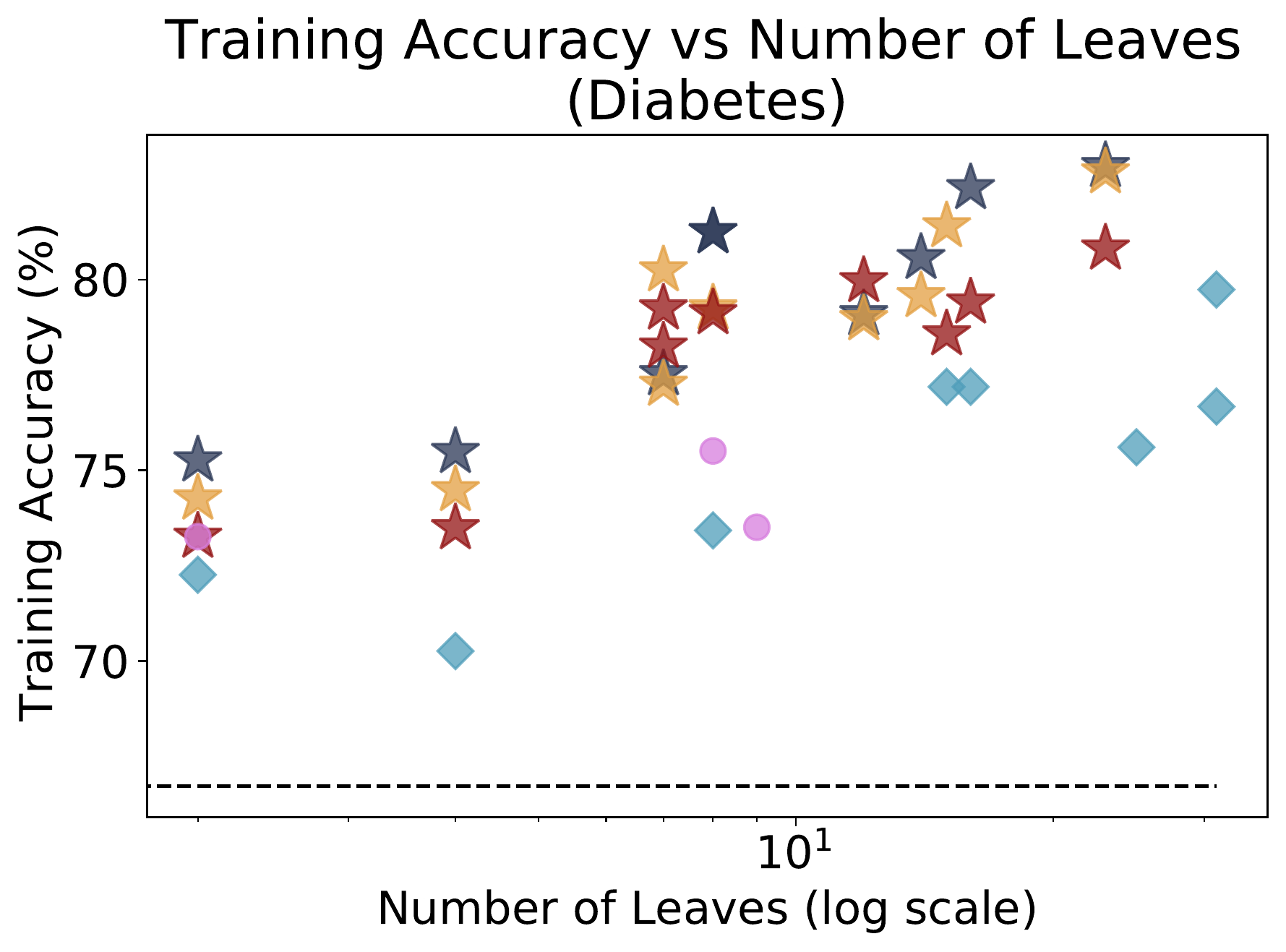}~
\includegraphics[width=0.245\textwidth]{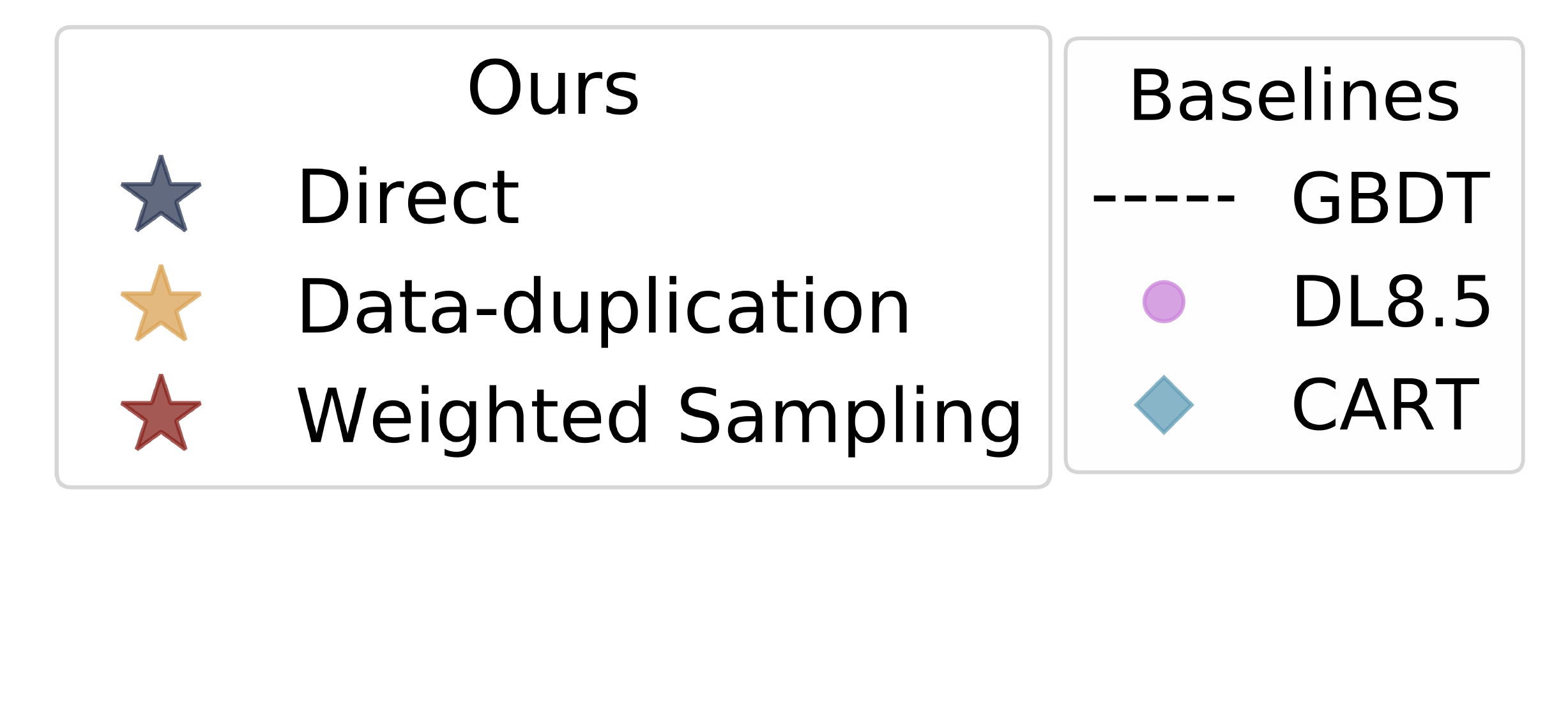}
\vspace{-2ex}
\caption{\textbf{Sparsity vs. training accuracy: } All methods but CART and GBDT use guessed thresholds. GBDT and DL8.5 use data duplication. DL8.5 frequently times out, so there are fewer markers for it. \GHOUL{} achieves the highest accuracy for every level of sparsity.}
\label{fig:sparsity-train-accuracy}
\end{figure*}

\begin{figure*}
\includegraphics[width=0.245\textwidth]{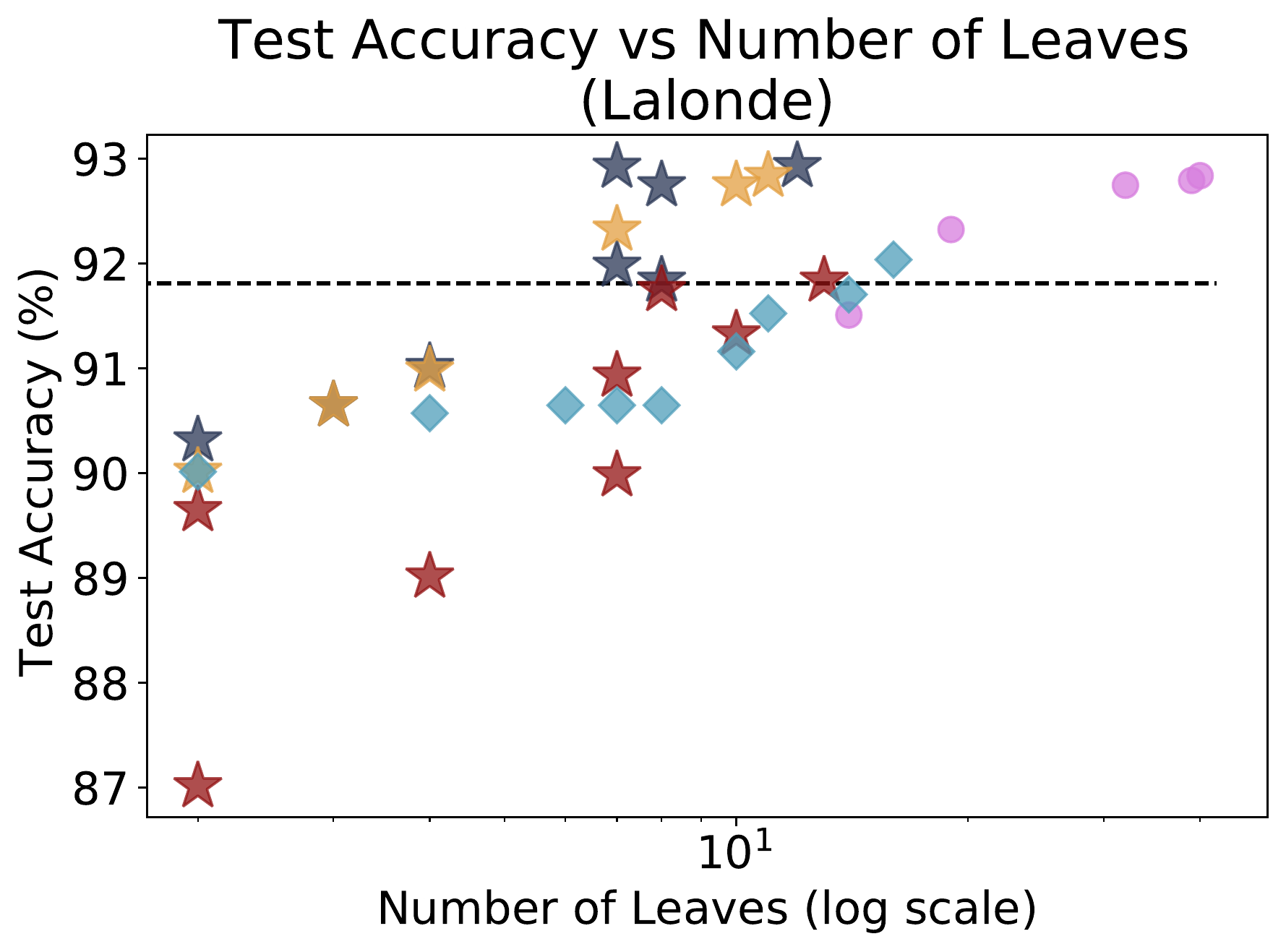}
\includegraphics[width=0.245\textwidth]{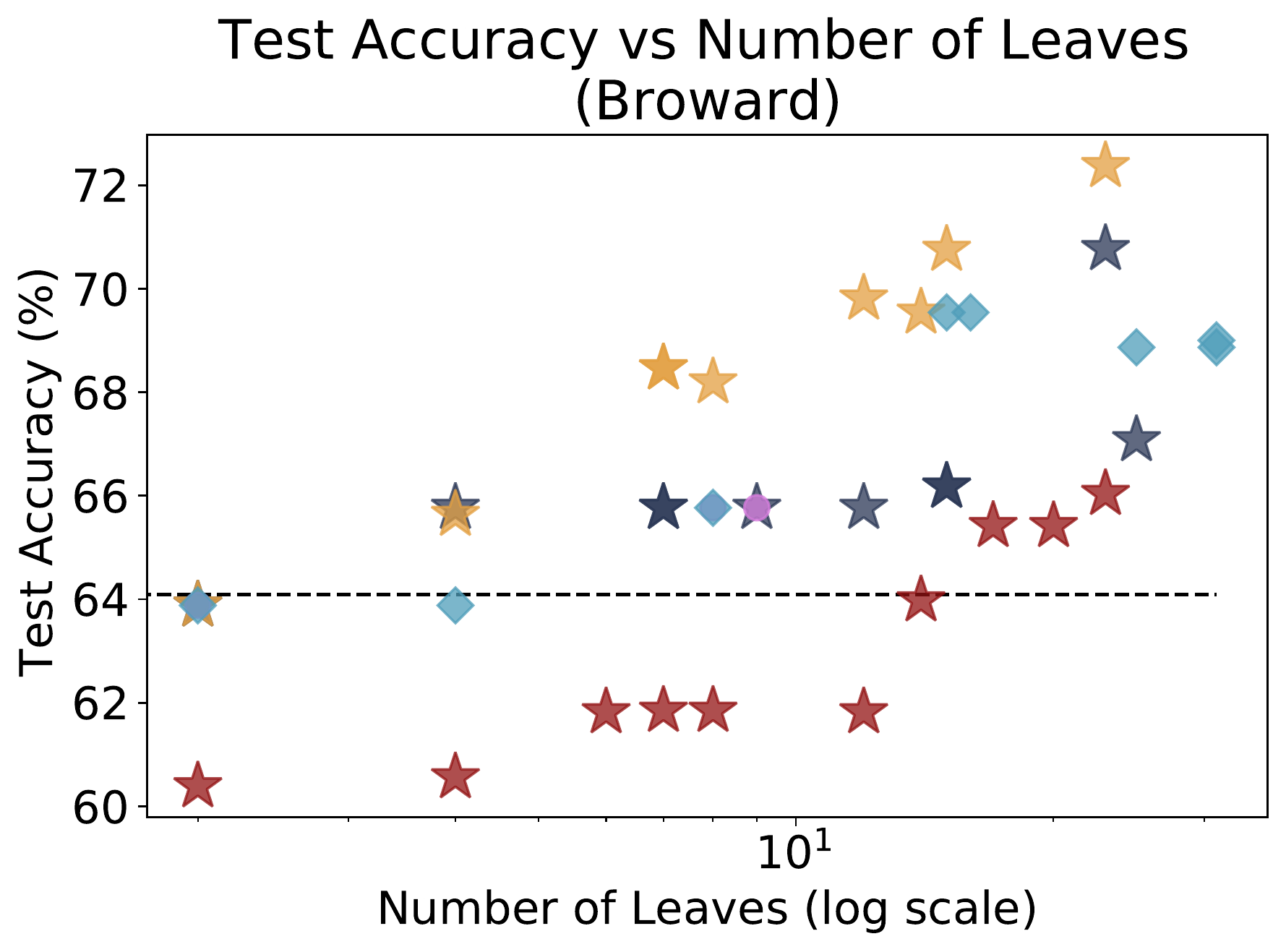}
\includegraphics[width=0.245\textwidth]{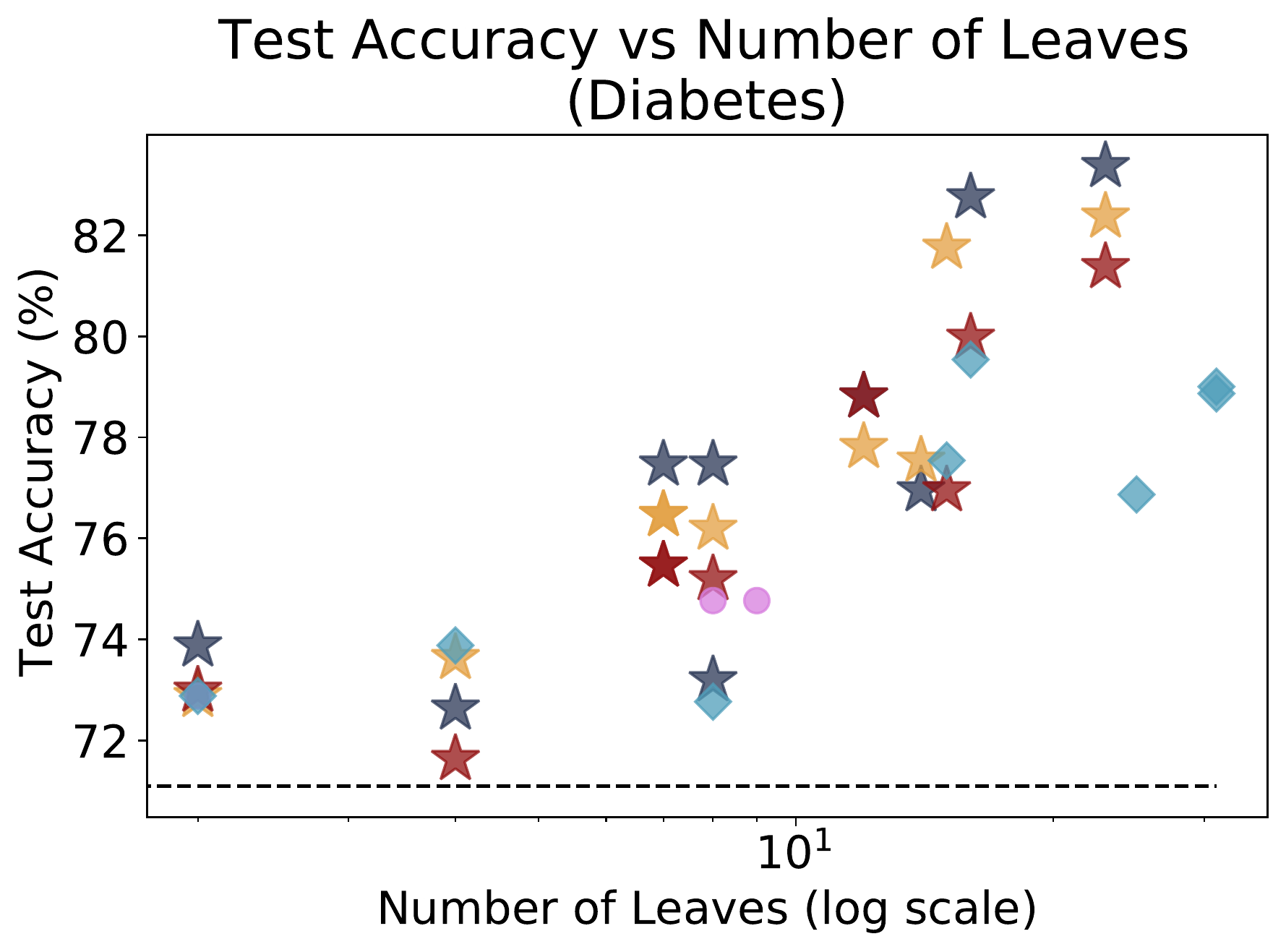}
\includegraphics[width=0.245\textwidth]{legend.pdf}
\vspace{-2ex}
\caption{\textbf{Sparsity vs. test accuracy: } All methods but CART and GBDT use guessed thresholds. GBDT and DL8.5 use data duplication. DL8.5 frequently times out, so there are fewer markers for it. \GHOUL{} achieves the highest accuracy for every level of sparsity.\GHOUL{} also achieves the highest test accuracy for \textbf{\textit{almost}} every level of sparsity. }
\label{fig:sparsity-test-accuracy}
\end{figure*}

\subsection{Datasets}
In our experiments, we use seven publicly available real-world datasets; Table~\ref{tab:data} shows sizes of these datasets: The Lalonde dataset \cite{Lalonde_reference, dehejia1999causal}, Broward~\cite{wang2020pursuit}, the coupon dataset, which was collected on Amazon Mechanical Turk via a survey~\cite{wang2015or}, Diabetes~\cite{Diabetes_dataset}, which is a health care related dataset, the COMPAS~\cite{LarsonMaKiAn16}, the Fair Isaac (FICO) credit risk dataset~\cite{competition} from the Explainable ML Challenge, and Netherlands~\cite{tollenaar2013method} datasets, which are recidivism datasets. Unless stated otherwise, we use inverse propensity score with respect to one of the features as our weights. For more details about datasets and weights see Appendix~\ref{app:datasets}.

For each dataset, we ran the experiments with different depth bounds and regularization, and each point in each plot shows the results for one setting. Table \ref{tab:tree_config} (Appendix~\ref{app:exp_detail}) lists the configurations used for each dataset when training decision trees.

\subsection{Baselines}
We compared our proposed methods with the following baseline models: (1) CART~\cite{breiman1984classification}, (2) DL8.5~\cite{aglin2020learning}, and (3) Gradient Boosted Decision Trees (GBDT)~\citep{freund95, friedman2001greedy}.
CART and GBDT can both handle weighted datasets, so we use their default weighted implementation as the baselines. As DL8.5 does not supported weighted datasets, we use the data-duplication approach with it.

\subsection{Results}

\head{Data duplication}
We begin by demonstrating how much the direct approach penalizes runtime relative to the data-duplication approach.
We use the unweighted FICO dataset and randomly pick $q\%$ of the data points, $\mathcal{S}$,  to double weight, duplicate them and add them to the data set, producing a dataset of size $(1 + \frac{q}{100}) \times N$ , where $N$ is the size of the original data set, $|S|$. We then compare this to the running time of the direct approach in which we assign weight of 2 to all double-weighted samples and weight of 1 to all remaining samples. We run this experiment on two different machines, with different processors and RAMs, in order to show the reliability of the results on different machines. The detailed property of machines can be found in Appendix~\ref{app:machine_property}. Figure~\ref{fig:run_time_p} shows the result of this experiment. We find that when the size of the duplicated dataset is less than 100 times the original dataset, the data-duplication approach is always faster.

\begin{figure*}
\includegraphics[width=0.245\textwidth]{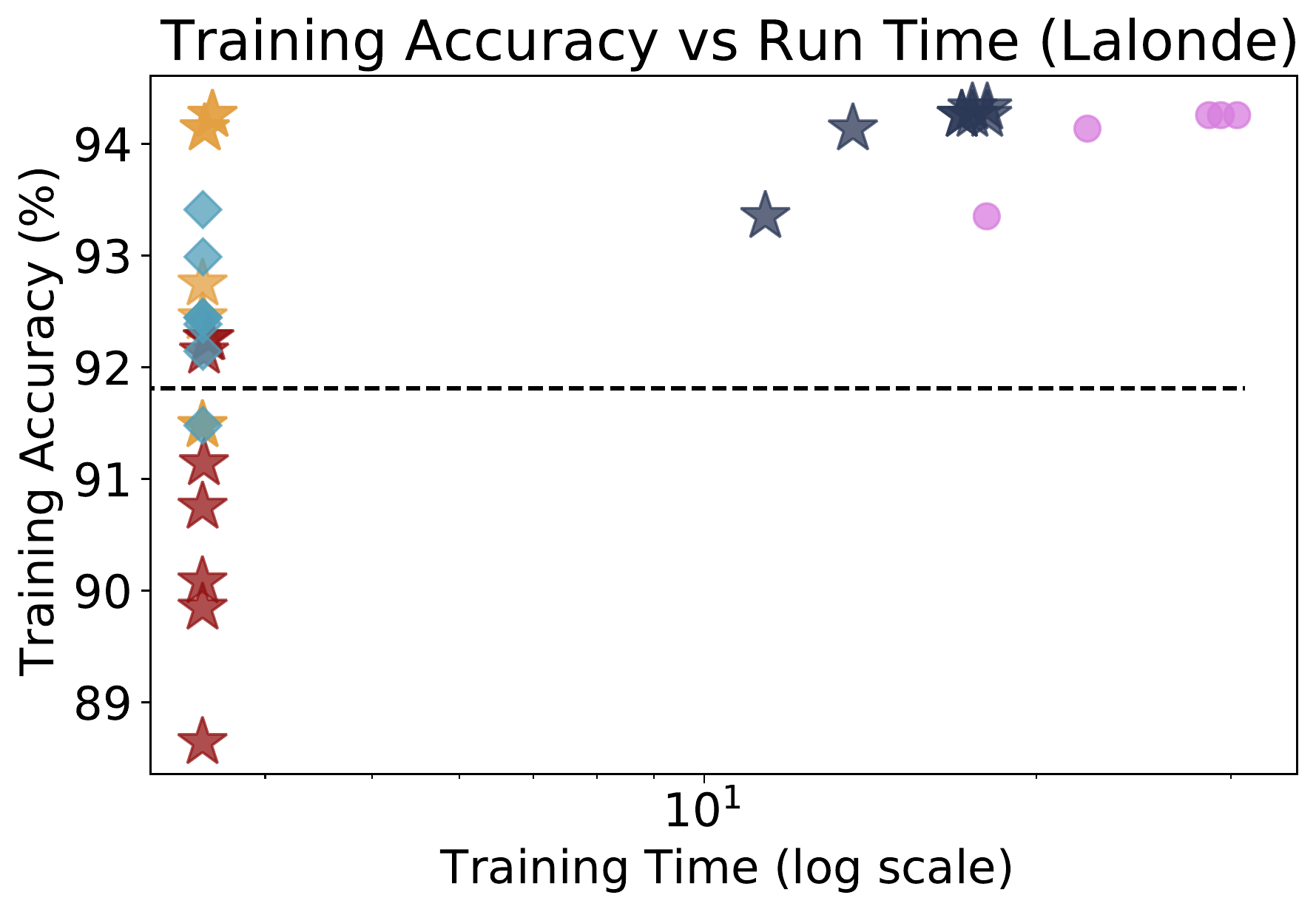}
\includegraphics[width=0.245\textwidth]{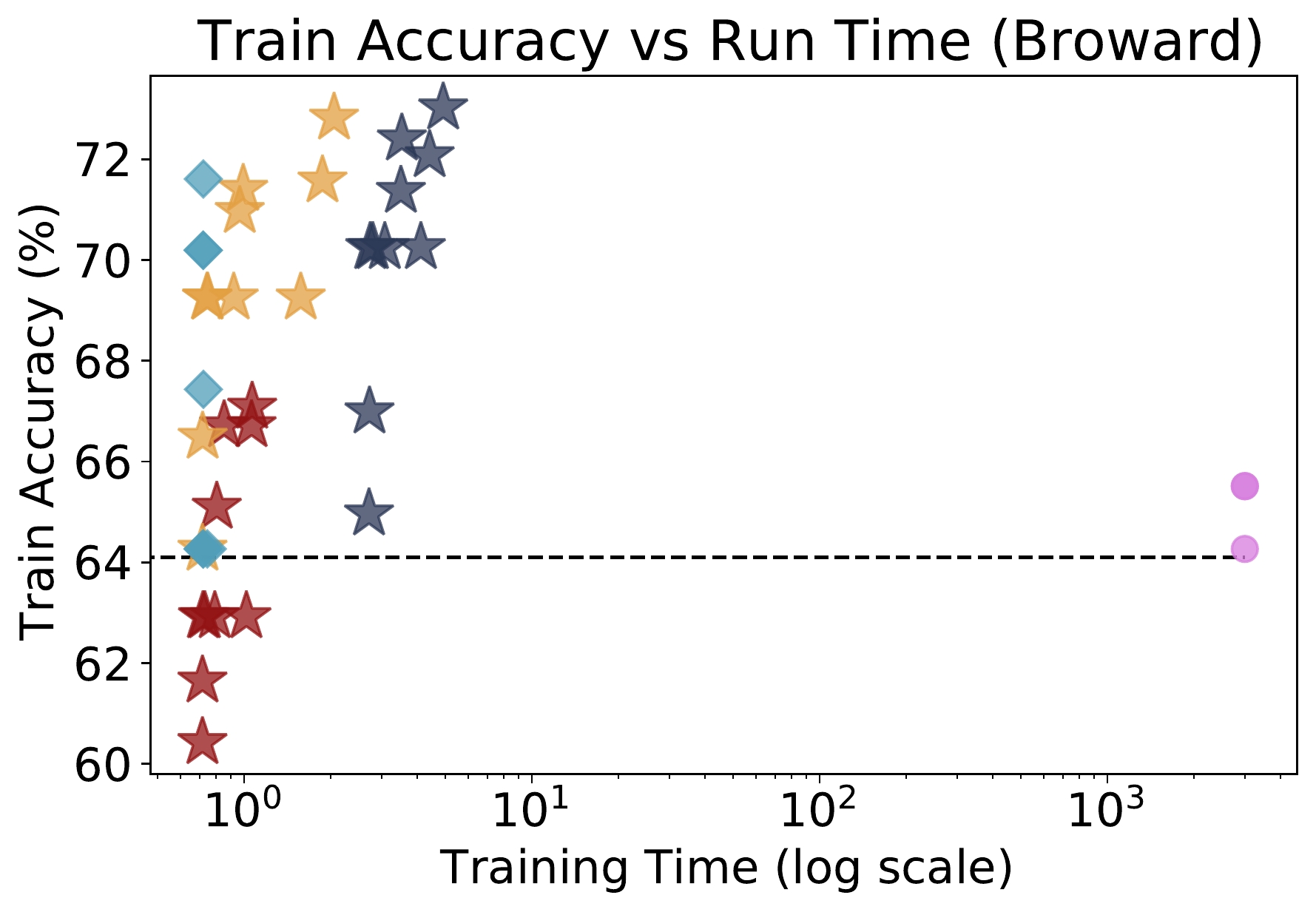}
\includegraphics[width=0.245\textwidth]{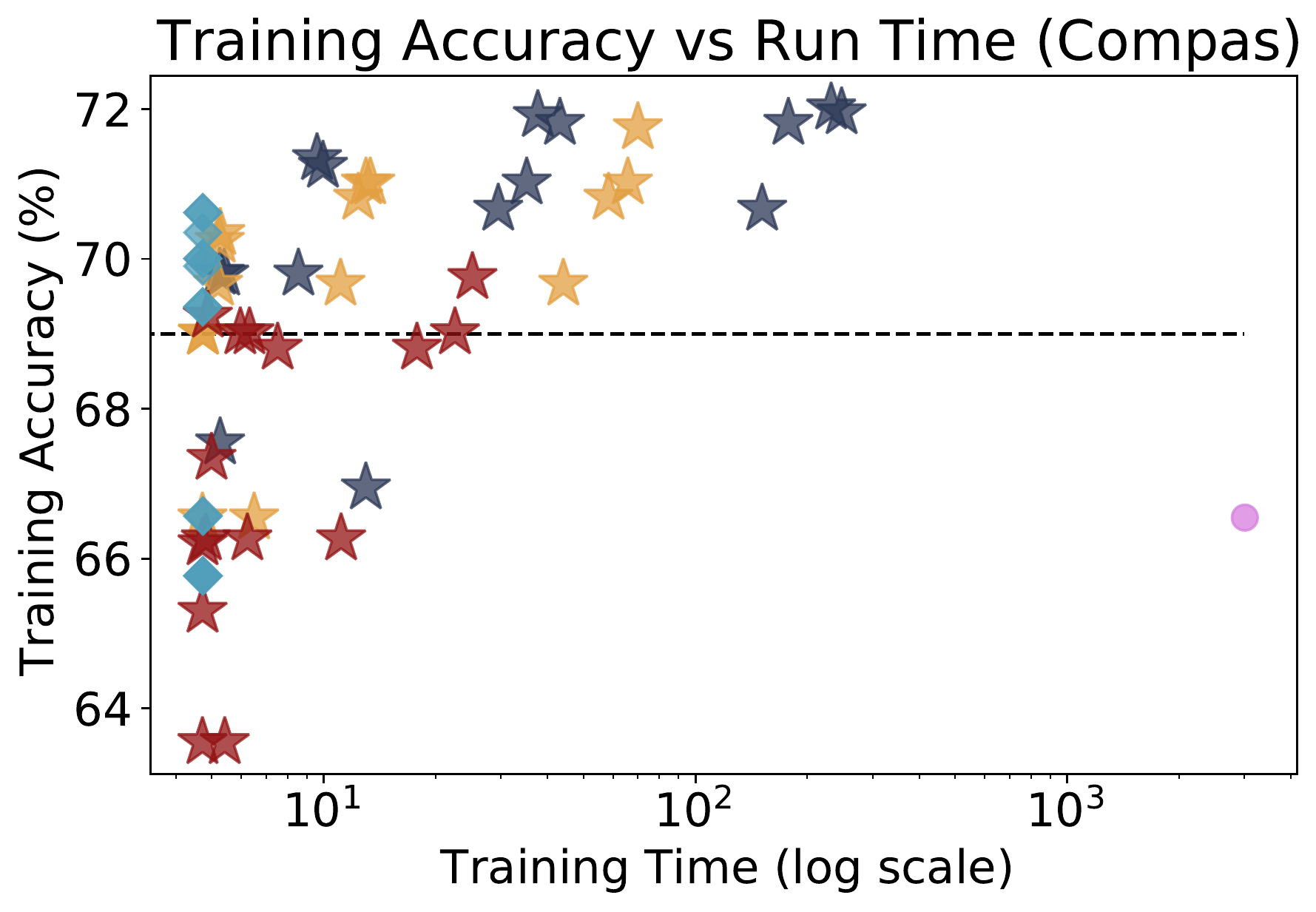}
\includegraphics[width=0.245\textwidth]{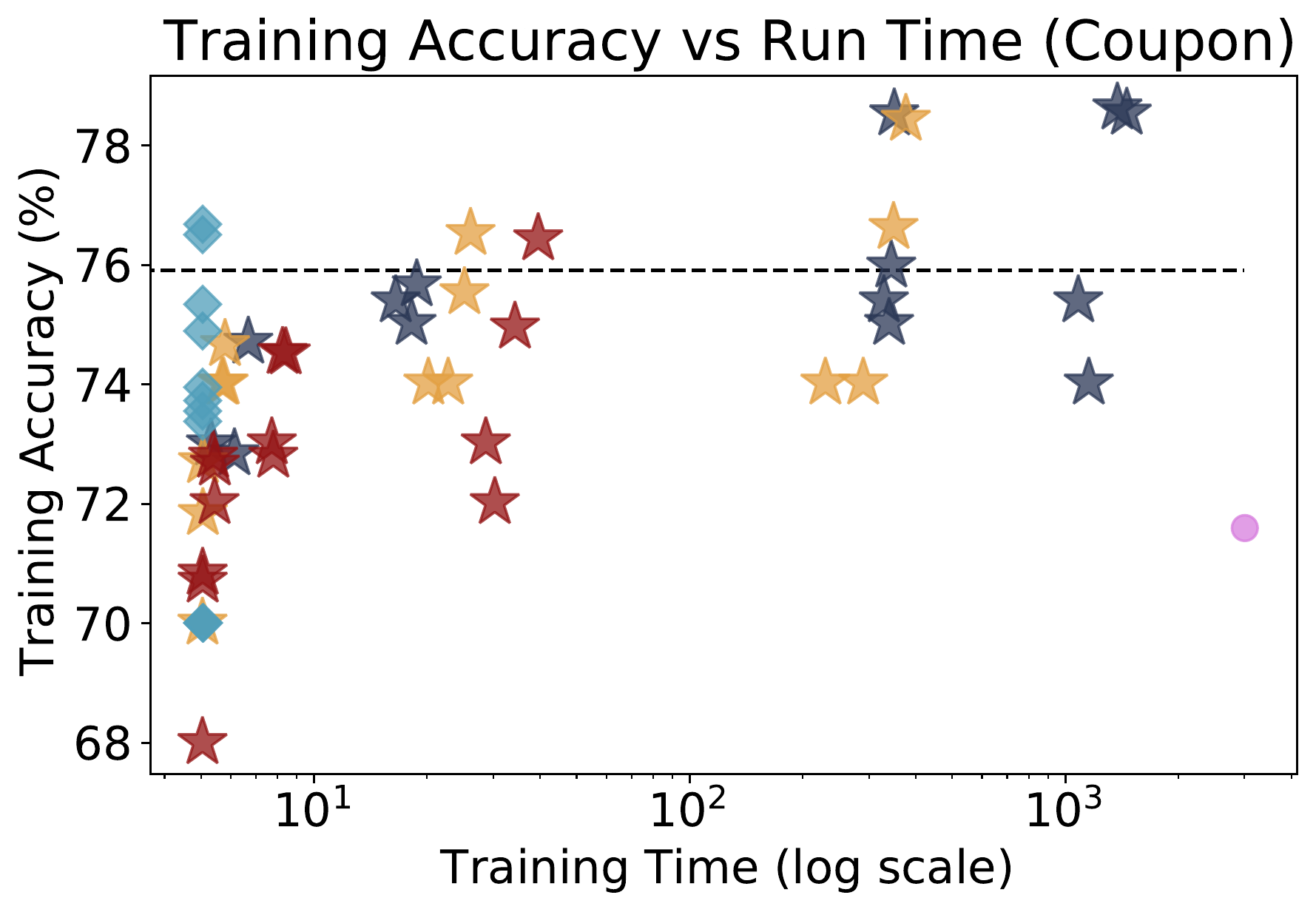}\\
\includegraphics[width=0.245\textwidth]{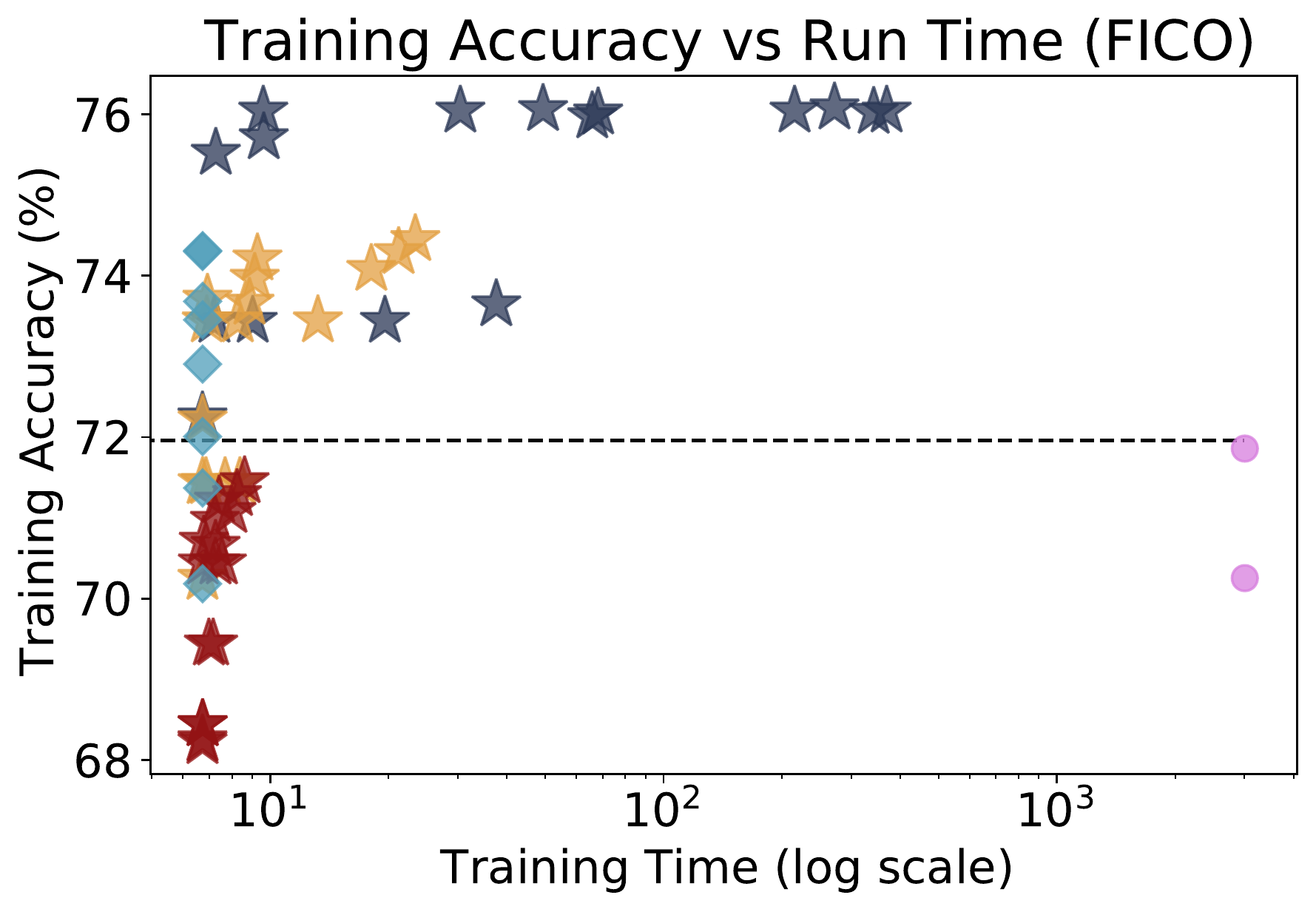}
\includegraphics[width=0.245\textwidth]{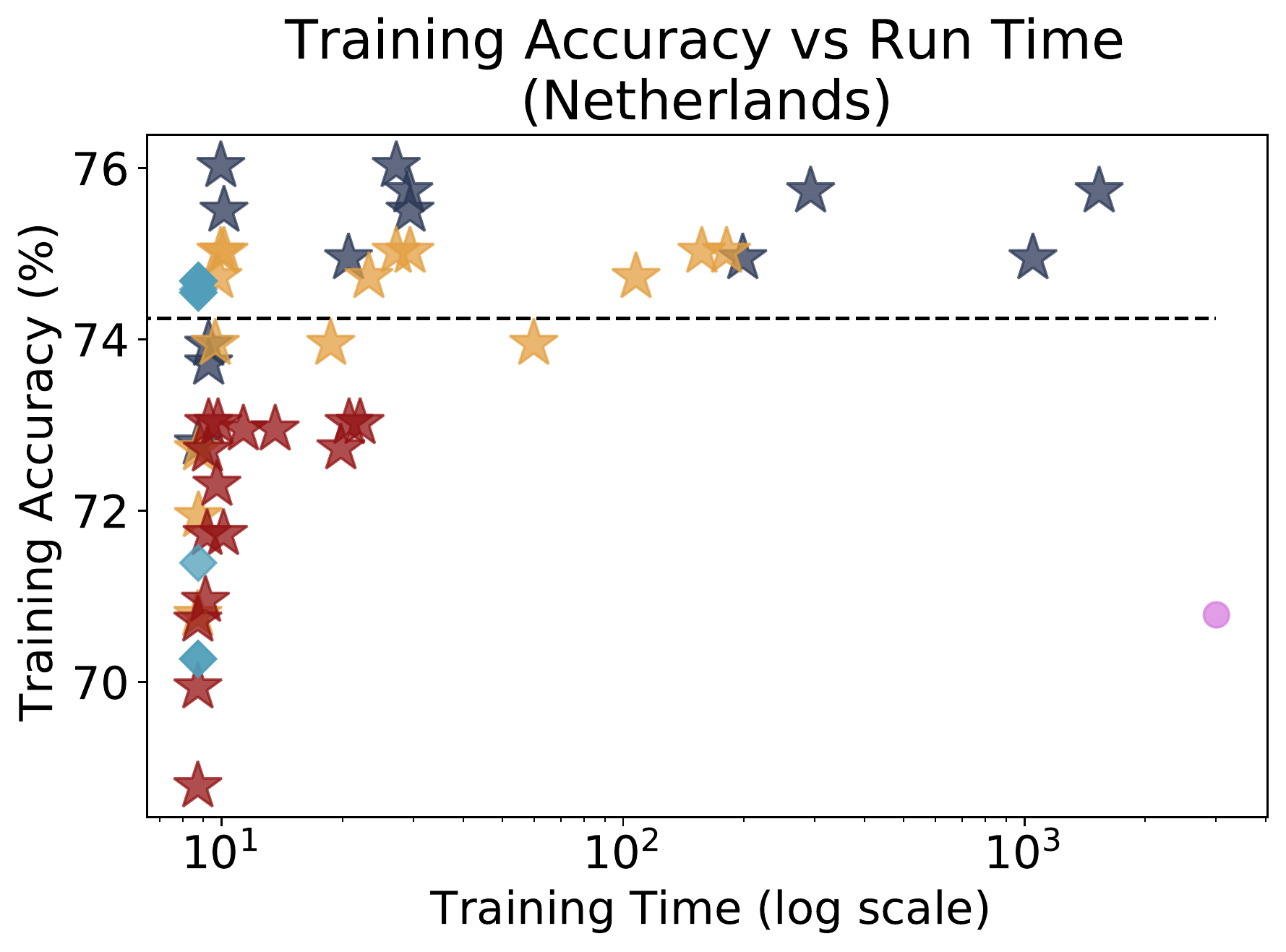}
\includegraphics[width=0.245\textwidth]{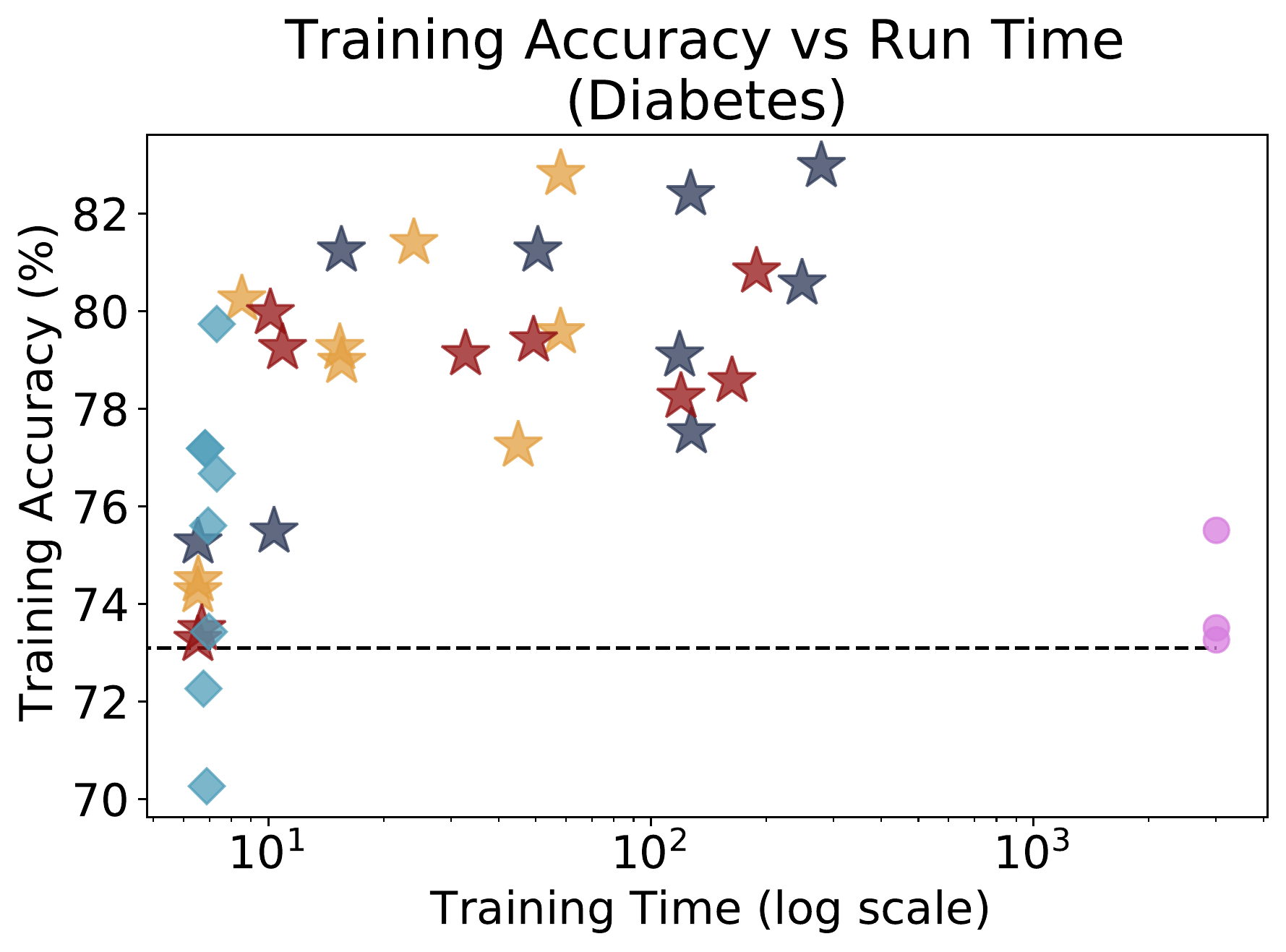}
\includegraphics[width=0.245\textwidth]{legend.pdf}
\vspace{-2ex}
\caption{\textbf{Training time vs. training accuracy:} All methods but CART and GBDT use guessed thresholds. GBDT and DL8.5 use data duplication. DL8.5 frequently times out, so there are fewer markers for it. While CART is the fastest algorithm, \GHOUL uses its additional runtime to produce models that produce higher accuracy.}
\vspace{-2ex}
\label{fig:acc-time-training}
\end{figure*}

\begin{figure*}
\includegraphics[width=0.245\textwidth]{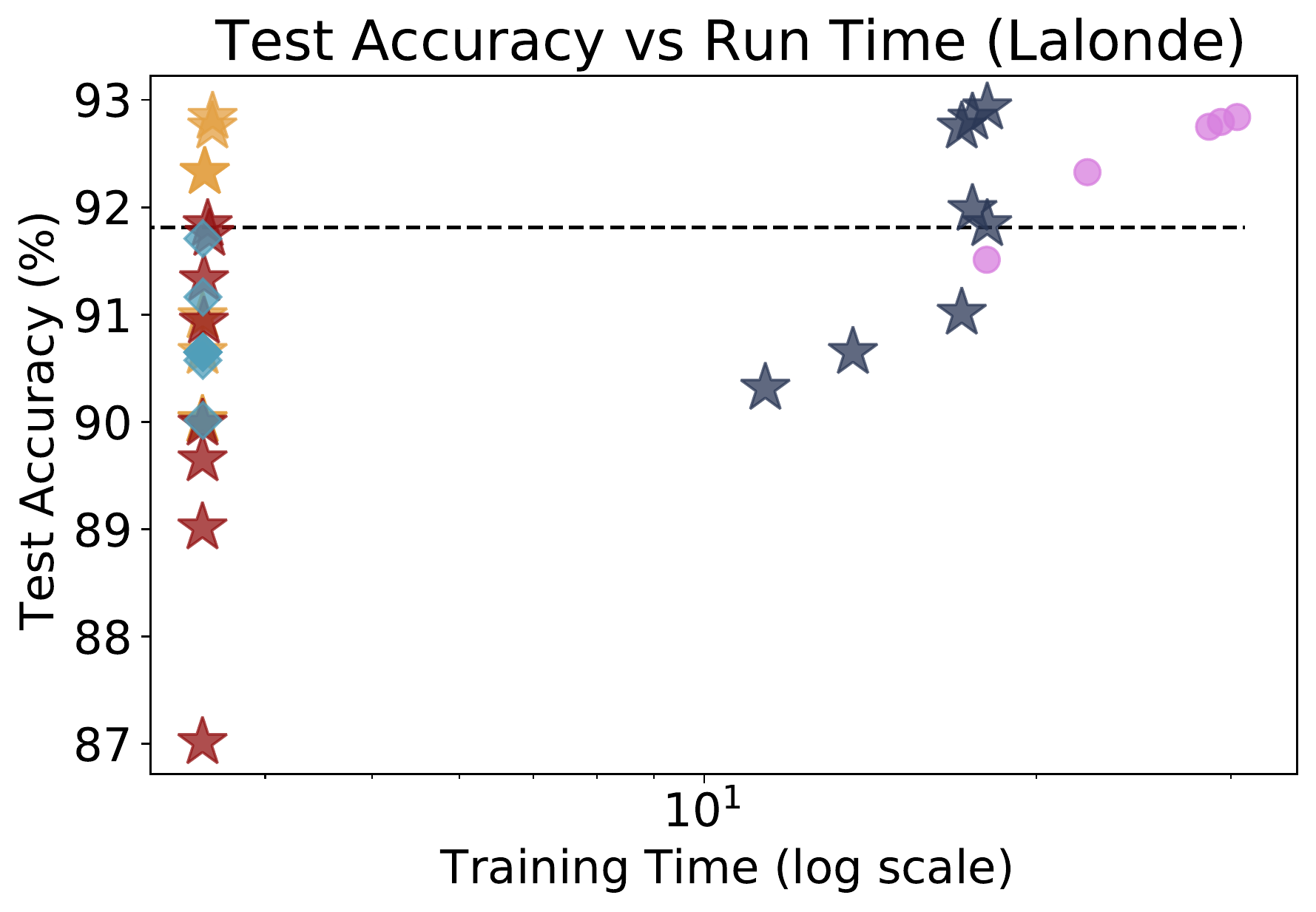}
\includegraphics[width=0.245\textwidth]{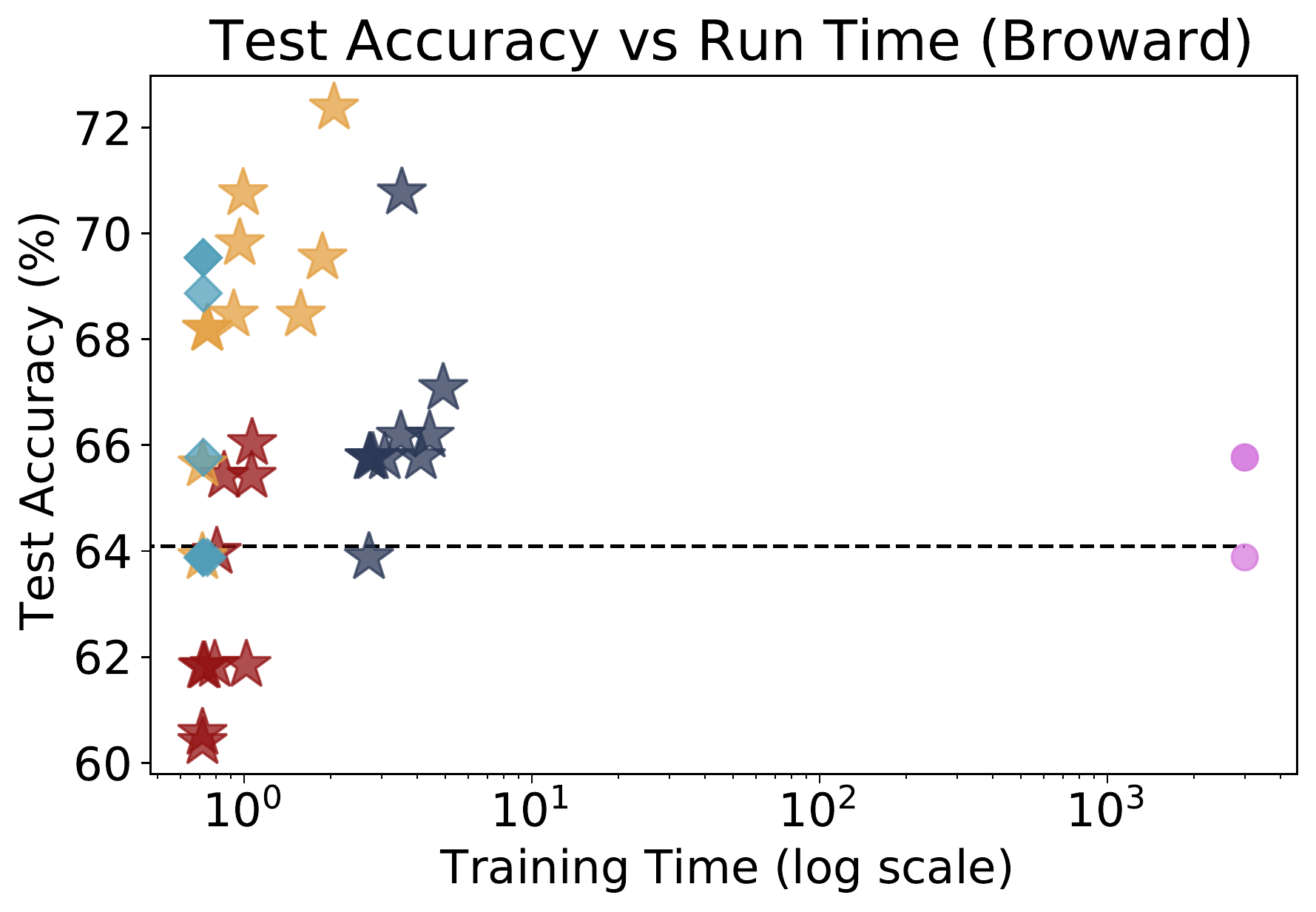}
\includegraphics[width=0.245\textwidth]{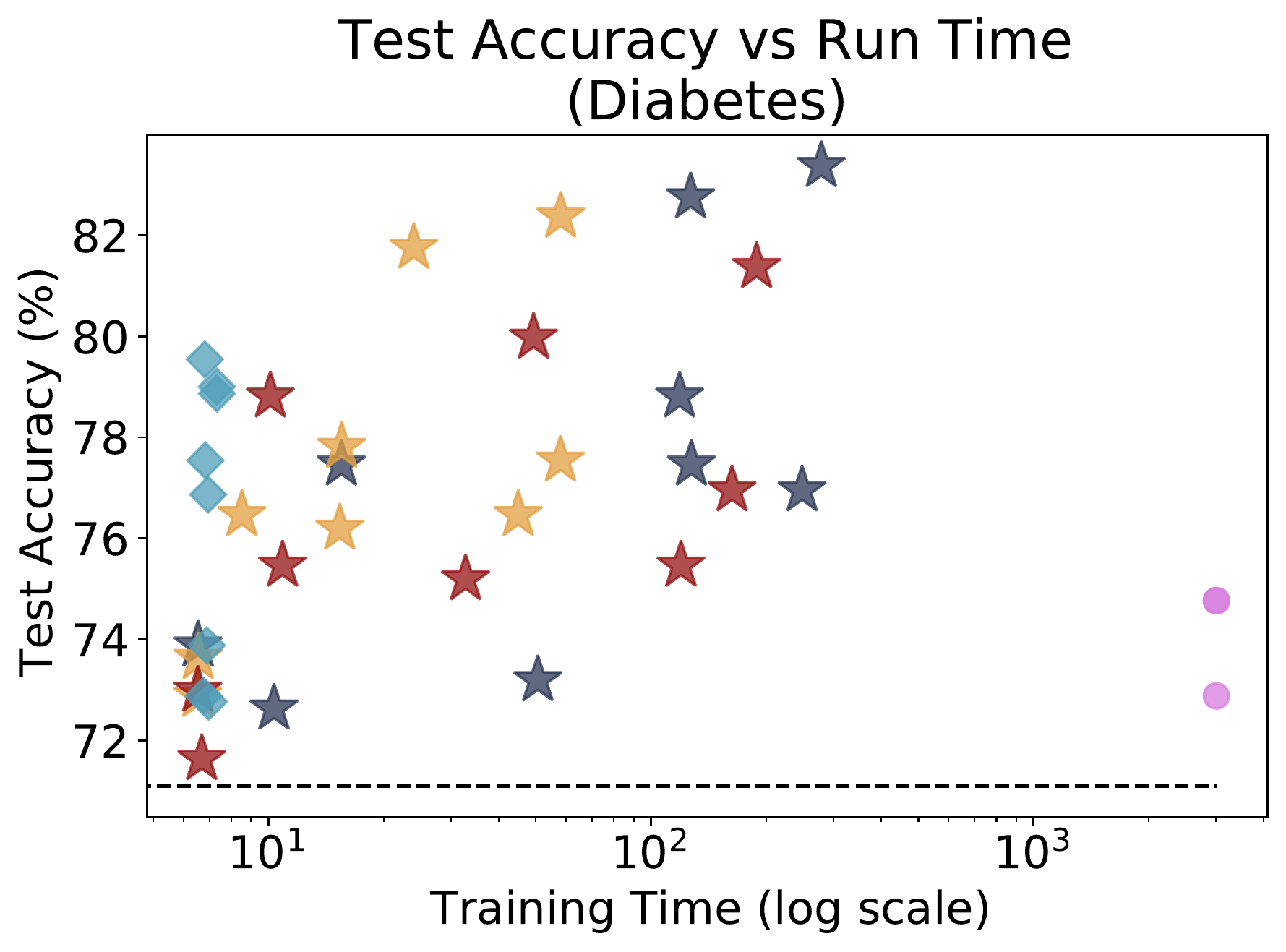}
\includegraphics[width=0.245\textwidth]{legend.pdf}
\vspace{-1ex}
\caption{\textbf{Training time vs$.$ test accuracy:} All methods but CART and GBDT use guessed thresholds. GBDT and DL8.5 use data duplication. DL8.5 frequently times out, so there are fewer markers for it. While CART is the fastest algorithm, \GHOUL uses its additional runtime to produce models with higher accuracy and generalize better.}
\vspace{-2ex}
\label{fig:acc-time-test}
\end{figure*}

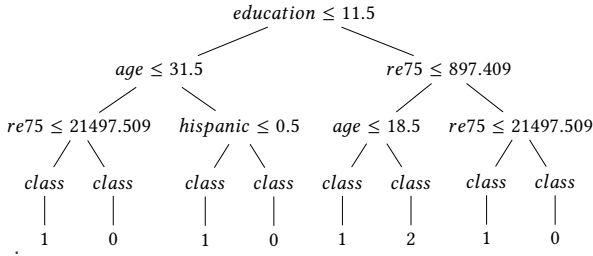
\begin{figure}[ht]
\centering
 \begin{frame}{}  
\Huge
\hspace{-2ex}
\scalebox{0.45}{
\begin{forest}
    [ $education \le 11.5$ [ $age \le 31.5$ [ $re75 \le 21497.509$ [ $class$ [ $1$ ] ] [ $class$ [ $0$ ] ] ] [ $hispanic \le 0.5$ [ $class$ [ $1$ ] ] [ $class$ [ $0$ ] ] ] ] [ $re75 \le 897.409$ [ $age \le 18.5$ [ $class$ [ $1$ ] ] [ $class$ [ $2$ ] ] ] [ $re75 \le 21497.509$ [ $class$ [ $1$ ] ] [ $class$ [ $0$ ] ] ] ] ]
\end{forest}
}
\end{frame}
    \vspace{-1ex}
    \caption{The tree generated by \GHOUL{} (depth limit 3) on the Lalonde dataset.}
    \label{fig:Lalonde_case_study}
\end{figure}

\head{Comparison of our approaches} We next compare the relative accuracy achieved using the direct, data-duplication, and weighted sampling approaches on all our evaluation datasets. The star-shaped points in Figures~\ref{fig:sparsity-train-accuracy} and \ref{fig:sparsity-test-accuracy} show the result of this comparison. These results suggest a trade-off between accuracy and running time. Weighted sampling is the fastest approach, but it has the worst accuracy among our approaches because it uses only subsets of the data. Data duplication, while slower than weighted sampling, is faster than the direct method, without losing much accuracy.

\head{Sparsity vs$.$ accuracy} If we now consider the dotted line and round and diamond shapes in Figures~\ref{fig:sparsity-train-accuracy} and \ref{fig:sparsity-test-accuracy}, we can see the accuracy-sparsity tradeoff for different decision tree models (the black line represents accuracy for GBDT). \GHOUL{} produces excellent training and test accuracy with a small number of leaves, and, compared to other decision tree models, achieves higher accuracy for every level of sparsity. Results of other~datasets~are~in~Appendix~\ref{app:sparse_accuracy}.

\head{Training time vs. accuracy} Figures~\ref{fig:acc-time-training} and \ref{fig:acc-time-test} show the training time and accuracy for different decision tree models. While the training times of \GHOUL{} and CART are almost the same, \GHOUL{} achieves the highest training and test accuracy in almost all cases. DL8.5 struggled with the 1 hour termination condition for all datasets except Lalonde; because DL8.5 did not solve to optimality, it was outperformed by both CART and \GHOUL. Results of other datasets are in Appendix~\ref{app:training_accuracy}.


\head{Lalonde Case Study}
 The Lalonde dataset is from the National Supported Work Demonstration \cite{dehejia1999causal, Lalonde_reference}, a labour market experiment in which participants were randomized between treatment (on-the-job training lasting between nine months and a year) and control groups. Accordingly, for each unit $U_i$, we have a pre-treatment covariate vector $X_i$ and observed assigned treatment $Z_i$. Let $Y^1_i$ be the potential outcome if unit $U_i$ received the treatment, and $Y^2_i$ be the potential outcome if it was not treated. When a unit is treated, we do not observe the potential outcome if it was not treated and vice versa. To address this issue, we use the MALTS model~\cite{parikh2018malts} that estimates these missing values by matching. MALTS gives us an estimate of the conditional average treatment effect, which we call $TE$. Since our weighted decision tree is designed for classification, we classify participants into three groups–-``should be treated,'' ``should be treated if budget allows,'' and ``should not be treated'' -– based on their conditional average treatment effect estimate. We selected features \textit{age, education, black, hispanic, married, nodegree, re75} for the pre-treatment covariate vector. Then we labelled the data points as $2, 1,$ and $0$ if the estimated treatment effect is larger than $2000$, between $-5000$ and $2000$, and less than $-5000$, respectively. Here, the penalty for each misclassification is defined as follows:
\vspace{-1ex}
\[   
\text{cost} = 
     \begin{cases}
       \text{0,} &\quad\text{correctly classified},\\
       200 + 3 \times age &\quad\text{label = 0 and misclassified,} \\
       100 + 3 \times age &\quad\text{label = 1 and misclassified,} \\
       300  &\quad\text{label = 2 and misclassified.} \\ 
     \end{cases}
\]
We linearly scale the above costs to the range from 1 to 100, and in the case of data-duplication, we round them to integers and treat them as weights of the dataset. Figure~\ref{fig:Lalonde_case_study} shows the generated tree by \GHOUL{} with a depth limit of 3. Generated trees with other depth limits can be found in Appendix~\ref{app:lalonde_trees}.

\section{Conclusions}
\label{sec:conclusions}
To find the optimal weighted decision tree, we first suggest directly optimizing a weighted loss function. However, this approach might be time-consuming for large weighted datasets. To improve efficiency, we present the data-duplication approach, which rounds all weights to integers and then duplicates each sample by its weights. Finally, to further improve efficiency, we present a stochastic process, where we sample an unweighted dataset from our weighted dataset. Our results suggest a trade-off of accuracy and running time among these approaches.

\begin{acks}
    We acknowledge the following grant support: NIH/NIDA under grant number DA054994 and NSF under grant number IIS-2147061. This research was enabled in part by support provided by WestGrid (\href{https://www.westgrid.ca}{https://www.westgrid.ca}) and The Digital Research Alliance (\href{https://alliancecan.ca/en}{https://alliancecan.ca/en}).
\end{acks}


\bibliography{main}
\bibliographystyle{ACM-Reference-Format}

\newpage
\appendix
\onecolumn
\section{Theorems and Proofs}

\subsection{Proof of Theorem~\ref{thm:glb}}
\label{app:theorem_1}
\setcounter{theorem}{0}
\begin{theorem} (Performance Guarantee).
Let $R(t_{\textrm{g}}, \tilde{\x}, \y)$ denote the objective of $t_\textrm{g}$ on the full binarized dataset $(\tilde{\x}, \y)$ for some per-leaf penalty $\lambda$. Then for any decision tree $t$ that satisfies the same depth constraint $d$, we have: \begin{align*}R(t_{\textrm{g}}, \tilde{\x}, \y) \leq &\frac{1}{\sum_{i = 1}^{N} w_i} \left(\sum_{i \in \MC} w_i + \sum_{i \in \CC} \mathds{1}[y_i \neq \hat{y}_i^{t}] \times w_i \right) + \lambda H_{t}.\end{align*}
That is, the objective of the guessing model is no worse than the union of errors made by the reference model and tree $t$.
\end{theorem}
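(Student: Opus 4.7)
The plan is to exploit the algorithmic guarantee of the branch-and-bound procedure at the root subproblem and then do an elementary pointwise comparison of indicators. The key observation is that for any subproblem $s_a$, the algorithm's returned subtree $t_{g,a}$ satisfies
\[
R_{\w(s_a)}(t_{g,a}) \;\leq\; \max\!\bigl(lb_{\text{guess}}(s_a),\, R_{\w(s_a)}(t')\bigr)
\]
for every tree $t'$ respecting the remaining depth budget. This is because guess-based stopping only halts the search once some subtree has been found whose objective is already $\leq lb_{\text{guess}}$ (so the returned tree is $\leq lb_{\text{guess}}$); otherwise the search exhausts, and standard branch-and-bound returns the exact optimum, which is in turn $\leq R_{\w(s_a)}(t')$. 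Applying this at the root subproblem $s_{\text{root}} = \{1,\dots,N\}$ with $t'=t$ yields
\[
R_\w(t_{\textrm{g}}) \;\leq\; \max\bigl(lb_{\text{guess}}(s_{\text{root}}),\, R_\w(t)\bigr).
\]

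Next I would expand both terms using their definitions. By~\eqref{eq:lbguess},
\[
lb_{\text{guess}}(s_{\text{root}}) = \tfrac{1}{\sum_i w_i}\sum_{i\in\MC} w_i + \lambda,
\]
and by definition,
\[
R_\w(t) = \tfrac{1}{\sum_i w_i}\sum_{i=1}^N \mathds{1}[y_i\neq \hat{y}_i^t]\,w_i + \lambda H_t.
\]
For the penalty terms, $\lambda \leq \lambda H_t$ because every tree has at least one leaf. For the loss terms, the pointwise comparison $\mathds{1}[i\in\MC] \leq \mathds{1}[i\in\MC \vee y_i\neq \hat{y}_i^t]$ and $\mathds{1}[y_i\neq \hat{y}_i^t] \leq \mathds{1}[i\in\MC \vee y_i\neq \hat{y}_i^t]$ both hold. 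Hence each argument of the $\max$ is bounded above by
\[
\tfrac{1}{\sum_i w_i}\sum_{i=1}^N \mathds{1}[i\in\MC \vee y_i\neq \hat{y}_i^t]\,w_i + \lambda H_t,
\]
and so is the maximum itself. Finally, splitting the union indicator into its $\MC$ and $\CC$ pieces gives exactly the claimed bound.

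The main obstacle is justifying the first displayed inequality, i.e., the algorithmic guarantee with guess-based pruning. I would argue it carefully by case analysis: either (i) the algorithm encounters a subtree with objective $\leq lb_{\text{guess}}(s_{\text{root}})$ before exhausting the search space, in which case the returned tree inherits that bound, or (ii) no such subtree exists in the (pruned) search space, in which case the standard branch-and-bound guarantee applies and the returned tree is optimal, hence no worse than any feasible $t$ of the same depth. A subtle point worth verifying is that the tree $t$ we are comparing against lies in the algorithm's search space (same binarization and depth budget); the theorem's assumption that $t$ satisfies the same depth constraint $d$ on the binarized features $\tilde{\x}$ is exactly what is needed. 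After that, the rest is the clean pointwise bound in the second paragraph, which requires no delicate work.
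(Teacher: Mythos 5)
There is a genuine gap in the first (and load-bearing) step. Your key lemma, $R_{\w}(t_{\textrm{g}})\leq \max\bigl(lb_\textrm{guess}(s_\textrm{root}),\,R_{\w}(t)\bigr)$, is justified by a dichotomy whose second branch is false: if the root subproblem is never closed by its guessed lower bound, the exhausted search does \emph{not} return the exact optimum, because lower-bound guessing is applied recursively to \emph{every} subproblem, and each child subproblem may itself terminate early with a suboptimal subtree (one that merely matches the reference model's performance on that subset). The tree assembled at the root is then the best combination of possibly-suboptimal child solutions, which need not be $\leq R_{\w}(t)$. Concretely, suppose the root split separates the data into $s_L$ and $s_R$, the reference model $T$ errs only on points in $s_L$ with weighted loss $0.3$, and the comparison tree $t$ errs only on points in $s_R$ with weighted loss $0.3$. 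The left subproblem can legitimately stop with a subtree of loss $0.3$ (it meets its guessed bound) even if a perfect subtree exists there, and the right subproblem can return loss $0.3$; the total loss $0.6$ exceeds both $lb_\textrm{guess}(s_\textrm{root})$ and $\mathcal{L}_{\w}(t,\tilde{\x},\y)$ individually, so the max bound fails while the union bound still holds. This is why the theorem is stated with the \emph{sum} of the reference model's errors and $t$'s errors on $\CC$, not their maximum: your lemma, if true, would prove a strictly stronger statement than the paper's, and it is not true.

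The fix is the argument the paper actually uses: induct over the nodes of the comparison tree $t$, bounding for each subproblem $s_a$ the quantity $lb_\textrm{max}(s_a,d,\lambda)$ by $\frac{1}{\sum_i w_i}\bigl(\sum_{i\in\MC\cap s_a}w_i+\sum_{i\in\CC\cap s_a}w_i\,\mathds{1}[y_i\neq\hat{y}_i^{t}]\bigr)+\lambda H_{t_a}$. The base case (leaves of $t$) is essentially your max-then-relax computation, which is fine; but at internal nodes the bound propagates by \emph{summing} the two children's bounds (since $s_a\cap s_j$ and $s_a\cap s_j^c$ partition $s_a$), and a sum of two union-type bounds is again a union-type bound, not a max. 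Your second paragraph (the pointwise indicator comparison and $\lambda\leq\lambda H_t$) is correct as far as it goes, but it only converts the max bound into the union bound; it cannot repair the fact that the max bound itself does not hold at the root.
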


\begin{proof}
We adapt the proof in the work of~\citet{McTavishZhong} to the weighted setting. Similar to them, we use the following notation to discuss lower bounds:
\begin{itemize} 

\item {$\lambda \geq 0$ is a regularizing term } 

\item For some decision tree $\t$, we calculate its risk as {${R}(\t,\tilde{\x},\y) = \frac{1}{\sum_{i = 1}^{N} w_i}\sum_{i=1}^N w_i \times \mathds{1}[y_i \neq \hat{y}_i^{\t}] + \lambda H_{\t}$}, where $H_{\t}$ refers to the number of leaves in tree $\t$.

\item $T$ is a reference model we use to guess lower bounds.

\item $\CC$ and $\MC$ are, respectively, the indices of the set of observations in the training set correctly classified by $T$ and the set of observations incorrectly classified by $T$.

\item $s_a$ is the set of training set observations captured in our current subproblem.

\item $d$ represents the maximum allowed depth for solutions to our current subproblem. If $d=0$, no further splits are allowed.

\item {$t_{g}(s_a, d, \lambda)$} is the lower-bound-guessing-algorithm's solution for subproblem $s_a$, depth limit $d$, and regularizer $\lambda$. When we just specify $t_g$ without arguments, we are referring to the lower-bound-guessing-algorithm's solution for the root subproblem (the whole dataset), with the depth limit argument provided for the tree as a whole.

\item Consider a subproblem $s_a$ corresponding to the full set of points passing through a specific internal or leaf node of the optimal tree $\t^*$ (call it node$_{\t^*, s_a}$). Define $H_{\t^*_a}$ as the number of leaves below node$_{s_a}$  (or 1 if node$_{s_a}$ is a leaf). Note that this is also the number of leaves needed in an optimal solution for subproblem $s_a$. Similarly define $H_{t_{g, a}}$ as the number of leaves below node$_{t_g,s_a}$ in $t_g$ (when $s_a$ corresponds to the full set of points passing through a node in $t_g$). Note that this does not necessarily correspond to the number of leaves needed in an optimal solution for subproblem $s_a$ because $t_g$ has not been fully optimized.

\item {$R_{g}(s_a, d, \lambda)$} is the objective of the solution found for subproblem $s_a$, depth limit $d$, and regularizer $\lambda$ when we use lower bound guessing:
\begin{align*} R_{g}(s_a, d, \lambda) := &\frac{1}{\sum_{i = 1}^{N} w_i}\sum_{i\in s_a} w_i \times \mathds{1}[y_i \neq \hat{y}_i^{t_{g}(s_a, d, \lambda)}]  + \lambda H_{t_{g, a}}.\end{align*} 

\item $lb_{g}(s_a)$ was defined as {$$lb_{g}(s_a) := \frac{1}{\sum_{i = 1}^{N} w_i}\sum_{i \in s_a} w_i \times \mathds{1}[y_i \neq \hat{y}_i^{T}] + \lambda,$$} which could be obtained at equality if we achieve the accuracy of $T$ in a single leaf. We add that this is equivalent (by definition) to {$$lb_{g}(s_a) = \frac{1}{\sum_{i = 1}^{N} w_i}\left(\sum_{i \in \MC \cap s_a} w_i\right) + \lambda.$$}

\item We additionally define {$t^*(s_a, d, \lambda)$} as an optimal solution for subproblem $s_a$, depth limit $d$, and regularizer $\lambda$ (that is, the solution found when we do not use lower bound guessing). When we just specify $t^*$ without arguments, we are referring to a solution for the root subproblem (the whole dataset), with the depth limit argument provided for the tree as a whole.

\item We also define {$R(s_a, d, \lambda)$} as the objective of the optimal solution found for subproblem $s_a$, depth limit $d$, and regularizer $\lambda$ (that is, the objective of the solution found when we do not use lower bound guessing):  {$$R(s_a, d, \lambda) := \frac{1}{\sum_{i = 1}^{N} w_i}\sum_{i\in s_a} w_i \times \mathds{1}[y_i \neq \hat{y}_i^{t^*(s_a, d, \lambda)}] + \lambda H_{t^* \cap s_a}.$$}
Here, by $t^* \cap s_a$ we refer to the part of the subproblem $s_a$ captured by a part of tree $t^*$.

\item {Define $lb_\textrm{max}(s_a, d, \lambda)$ as the highest lower bound estimate that occurs for a given subproblem $s_a$, depth budget $d$, and regularization $\lambda$, across the algorithm's whole execution when using lower bound guessing. Note that $$R_{g}(s_a, d, \lambda) \leq lb_\textrm{max}(s_a, d, \lambda)$$ because when a subproblem is solved, the current lower bound is made to match the objective of the solution returned for that subproblem. As a reminder, $R_{g}$ is computed after the subproblem is solved. When using lower bound guesses, it is possible for intermediate lower bound estimates (and therefore $lb_\textrm{max}$) to exceed $R_{g}(s_a, d, \lambda)$, and then the lower bound (but not $lb_\textrm{max}$) is decreased to match the objective of the best solution found when the subproblem is solved.
}
\end{itemize}

Now, without loss of generality, select a tree $t$ that is within some depth constraint $d$. We wish to prove that the {risk} on the full dataset (with some regularization $\lambda$) is bounded as: 
\begin{equation} \label{eq:lbguesstoprove}
R(t_{g}, \tilde{\x}, \y) \leq \frac{1}{\sum_{i = 1}^{N} w_i} \left(\sum_{i \in \MC} w_i + \sum_{i \in \CC} w_i \times \mathds{1}[y_i \neq \hat{y}_i^{t}]\right) + \lambda H_{t}.\end{equation}
or, equivalently, (defining $s_\textrm{all}$ as the set of all points in the dataset):
$$R_g(s_\textrm{all}, d, \lambda) \leq \frac{1}{\sum_{i = 1}^{N} w_i} \left(\sum_{i \in \MC} w_i + \sum_{i \in \CC} w_i \times \mathds{1}[y_i \neq \hat{y}_i^{t}]\right) + \lambda H_{t}.$$

To show this, it is sufficient to show a result that is strictly more general. Specifically, we show that for any subproblem $s_a$ (with $d, \lambda \geq 0$) that occurs as an internal or leaf node in $t$ (including the root), we can bound $lb_\textrm{max}(s_a, d, \lambda)$ as follows. Equation \eqref{eq:lbguesstoprove} is a direct consequence, using $s_a$ as the full dataset $s_{\textrm{all}}$.
\begin{equation} \label{eq:lbguesslambda}
\begin{aligned}
lb_\textrm{max}(s_a, d, \lambda) \leq \frac{1}{\sum_{i = 1}^{N} w_i} \left(\sum_{i \in \MC \cap s_a} w_i + \sum_{i \in \CC \cap s_a} w_i \times \mathds{1}[y_i \neq \hat{y}_i^{t}]\right) 
 + \lambda H_{t_a}. 
\end{aligned}
\end{equation}
What we originally wished to prove then follows from noting that, by definition of $lb_\textrm{max}$, we have
$R_g(s_{\textrm{all}}, d, \lambda) \leq lb_\textrm{max}(s_\textrm{all}, d, \lambda)$. 
Here, $d, \lambda$ are the depth limit and regularization provided for $t_g$ (where $d$ matches or exceeds the depth of $t$).

We prove this sufficient claim (hereafter referred to as Equation \ref{eq:lbguesslambda}) for all subproblems in $t$ using induction.
\\\\
\textbf{Base Case}: Let us take any subset of data $s_a$, depth constraint $d$, and regularization $\lambda$ which corresponds to a subproblem in tree $t$ whose solution in $t$ is a leaf node.

Because the solution to $s_a$ was a leaf in $t$, then its objective (without making further splits) is $ub = \frac{1}{\sum_{i = 1}^{N} w_i}(\sum_{i \in s_a} w_i \times \mathds{1}[y_i \neq \hat{y}_i^{t}]) + \lambda\cdot 1$. 
We want to show, in this case, that  \eqref{eq:lbguesslambda} holds.

Our initial lower bound guess is $lb_g$. Either $lb_g > ub$, or $lb_g \leq ub$. If $lb_g > ub$, we are done with the subproblem as per the Branch-and-Bound algorithm in \cite{McTavishZhong}. Otherwise, when $lb_g \leq ub$, we know from their algorithm that the lower bound can never increase above $ub$. Therefore, the highest value of the lower bound during execution,  $lb_\textrm{max}(s_a, d, \lambda)$, obeys  $lb_\textrm{max}(s_a, d, \lambda)\leq \max (ub, lb_{g})$. Then,
\begin{equation*}
    \begin{aligned}
    lb_\textrm{max}(s_a, d, \lambda) &\leq \max (ub, lb_{g}) \;\;\;\;\textrm{(by argument just above)}\\
    &= \max \left(\frac{1}{\sum_{i = 1}^{N} w_i}\sum_{i \in s_a} (w_i \times \mathds{1}[y_i \neq \hat{y}_i^{t}]) + \lambda, \frac{\sum_{i \in \MC \cap s_a}w_i}{\sum_{i = 1}^{N} w_i} + \lambda \right) \;\;\;\;\textrm{(by definition of $ub$ and $lb_g$)}\\
    &= \frac{1}{\sum_{i = 1}^{N} w_i} \max \left(\sum_{i \in s_a} w_i \times \mathds{1}[y_i \neq \hat{y}_i^{t}],  \sum_{i \in \MC \cap s_a}w_i\right) + \lambda\\
    &= \frac{1}{\sum_{i = 1}^{N} w_i} \max \left(\sum_{i \in \CC \cap s_a} w_i \times \mathds{1}[y_i \neq \hat{y}_i^{t}]  + \sum_{i \in \MC \cap s_a} w_i \times \mathds{1}[y_i \neq \hat{y}_i^{t}], \sum_{i \in \MC \cap s_a}w_i\right) + \lambda.\\
\end{aligned}
\end{equation*}
Note that both terms inside the max are at most $\sum_{i \in \CC \cap s_a}w_i \times \mathds{1}[y_i \neq \hat{y}_i^{t}] + \sum_{i \in \MC \cap s_a}w_i$. Therefore, 
\begin{align*} 
lb_\textrm{max}(s_a, d, \lambda) \leq \frac{1}{\sum_{i = 1}^{N} w_i} \left( \sum_{i \in \CC \cap s_a}w_i \times \mathds{1}[y_i \neq \hat{y}_i^{t}] + \sum_{ i \in \MC \cap s_a} w_i\right)  + \lambda.\end{align*}
Moreover, the number of leaves in the optimal tree for subproblem $s_a$ is 1, i.e., $H_{t_a} = 1$ (since $s_a$ corresponds to a leaf in $t$), so 
\begin{equation*}
    \begin{aligned}
    lb_\textrm{max}(s_a, d, \lambda) \leq  \frac{1}{\sum_{i = 1}^{N} w_i} \left(\sum_{i \in \MC \cap s_a} w_i +\sum_{i \in \CC \cap s_a} w_i \times \mathds{1}[y_i \neq \hat{y}_i^{t}]\right) + \lambda H_{t_a}. 
    \end{aligned}
\end{equation*}
And this matches Equation \eqref{eq:lbguesslambda}, as required. Thus, we have shown that the base case obeys the statement of the theorem.
\\\\
\textbf{Inductive Step}: Let the set of points $s_a$, depth constraint $d >0$, and regularization $\lambda \geq 0$ be a subproblem that corresponds to an internal node in $t$. Let $j$ indicate the feature that was split on in $t$ for this node, and define $s_j$ as the set of data points $\tilde{\x}_i$ such that $\tilde{x}_{ij} = 1$ and $s_j^c$ as the set of data points $\tilde{\x}_i$ such that $\tilde{x}_{ij} = 0$. We assume \eqref{eq:lbguesslambda} holds for both the left and right child subproblems and aim to show that it holds for their parent subproblem. The left subproblem is $s_a \cap s_j$ with depth $d-1$ and the right subproblem is $s_a \cap s_j^c$ with depth $d-1$. Thus, assuming (as per the inductive hypothesis) that \eqref{eq:lbguesslambda} holds, i.e., 
\begin{equation*}
    \begin{aligned}
    lb_\textrm{max}(s_a \cap s_j, d-1, \lambda) \leq & \frac{1}{\sum_{i = 1}^{N} w_i} \left(\sum_{i \in \MC \cap s_a \cap s_j} w_i + \sum_{i \in \CC \cap s_a \cap s_j} w_i \times \mathds{1}[y_i \neq \hat{y}_i^{t}]\right) + \lambda H_{t_a \cap s_j}
    \end{aligned}
\end{equation*}
and
\begin{equation*}
    \begin{aligned}
    lb_\textrm{max}(s_a \cap s_j^c, d-1, \lambda) \leq \frac{1}{\sum_{i = 1}^{N} w_i} \left(\sum_{i \in \MC \cap s_a \cap s_j^c} w_i + \sum_{i \in \CC \cap s_a \cap s_j^c} w_i \times \mathds{1}[y_i \neq \hat{y}_i^{t}]\right) + \lambda H_{t_{a \cap j^c}},
    \end{aligned}
\end{equation*}
it remains to show that Equation \eqref{eq:lbguesslambda} holds for $s_a$:
\begin{equation*}
    \begin{aligned}
    lb_\textrm{max}(s_a, d, \lambda) \leq & \frac{1}{\sum_{i = 1}^{N} w_i} \left(\sum_{i \in \MC \cap s_a} w_i   + \sum_{i \in \CC \cap s_a} w_i \times \mathds{1}[y_i \neq \hat{y}_i^{t}]\right) + \lambda H_{t_a}.
    \end{aligned}
\end{equation*}

We prove the inductive step by cases:
\begin{enumerate}
    \item If $ub \leq lb_g + \lambda$, then 
    as per the branch-and-bound algorithm in \cite{McTavishZhong},
    $s_a$ corresponds to a leaf in $t_g$, with a loss for this subproblem equal to $ub$. Since the algorithm returns immediately after changing the lower bound to $ub$, the maximum value the lower bound takes (that is, $lb_\textrm{max}$) is whichever of $lb_g$ or $ub$ is higher. We have: 
    \begin{equation*}
        \begin{aligned}
        lb_\textrm{max}(s_a, d, \lambda) &\leq \max(lb_g, ub) \\
        &\leq \max(lb_g, lb_g + \lambda) \; \; \; \; \text{(Since we conditioned on $ub \leq lb_g + \lambda$)} \\
         &\leq lb_g + \lambda \\
         &= \frac{1}{\sum_{i = 1}^{N} w_i} \sum_{i \in \MC \cap s_a} w_i + 2 \lambda \\& \leq \frac{1}{\sum_{i = 1}^{N} w_i}\left(\sum_{i \in \MC \cap s_a}w_i + \sum_{i \in \CC \cap s_a} w_i \times \mathds{1}[y_i \neq \hat{y}_i^{t}]\right) + 2 \lambda.
        \end{aligned}
    \end{equation*}
    Noting that because $s_a$ corresponds to an internal node in $t$, there are at least two leaves below it, so $H_{t_a} \geq 2$. Thus,
    \begin{equation*}
        \begin{aligned}
            lb_\textrm{max}(s_a, d, \lambda) \leq  \frac{1}{\sum_{i = 1}^{N} w_i}\left(\sum_{i \in \MC \cap s_a} w_i + \sum_{i \in \CC \cap s_a} w_i \times \mathds{1}[y_i \neq \hat{y}_i^{t}]\right) + \lambda H_{t_a}.
        \end{aligned}
    \end{equation*}
    This equation matches Equation \ref{eq:lbguesslambda}, as required.
    
    \item Else, 
    the lower bound (and therefore $lb_\textrm{max}(s_a, d, \lambda)$) cannot exceed the combined lower bounds of the left and right subproblems from splitting on feature $j$. We know the split for feature $j$ will lead to a lower bound estimate no more than  $lb_\textrm{max}(s_a \cap s_j, d-1, \lambda) + lb_\textrm{max}(s_a \cap s^c_j, d-1, \lambda)$. Thus we have: 
    \begin{equation*}
            lb_\textrm{max}(s_a, d, \lambda) \leq  lb_\textrm{max}(s_a \cap s_j, d-1, \lambda) + lb_\textrm{max}(s_a \cap s_j^c, d-1, \lambda).
    \end{equation*}
    Using the inductive hypothesis, this reduces to 
    \begin{align*}
    lb_\textrm{max}(s_a, d, \lambda) &\leq  \frac{1}{\sum_{i = 1}^{N} w_i} \left(\sum_{i \in \MC \cap s_a \cap s_j} w_i + \sum_{i \in \CC \cap s_a \cap s_j} w_i \times \mathds{1}[y_i \neq \hat{y}_i^{t}]\right) + \lambda H_{t_{a \cap j}} \\ & \ \ \ \ \ + \frac{1}{\sum_{i = 1}^{N} w_i} \left(\sum_{i \in \MC \cap s_a \cap s_j^c} w_i + \sum_{i \in \CC \cap s_a \cap s_j^c} w_i \times \mathds{1}[y_i \neq \hat{y}_i^{t}]\right) +  \lambda H_{t_{a \cap j^c}}.
    \end{align*}
Noting that $s_a \cap s_j$ and $s_a \cap s_j^c$ partition $s_a$:
    \begin{align*}
    lb_\textrm{max}(s_a, d, \lambda) &\leq  \frac{1}{\sum_{i = 1}^{N} w_i} \left(\sum_{i \in \MC \cap s_a} w_i + \sum_{i \in \CC \cap s_a} w_i \times \mathds{1}[y_i \neq \hat{y}_i^{t}]\right) + \lambda H_{t_a}.
    \end{align*}
    This equation matches Equation \eqref{eq:lbguesslambda}, as required. 
\end{enumerate}

Thus we have proved the inductive step holds for all possible cases. By induction, then, we have proved Equation \eqref{eq:lbguesslambda} holds for any internal or leaf node in $t$. Since the root node is an internal node of $t$, we have also proven Equation \eqref{eq:lbguesslambda} holds for the root problem. As per the justification given when claiming Equation \eqref{eq:lbguesslambda} was sufficient, that also means
$$R(t_{g}, \tilde{\x}, \y) \leq \frac{1}{\sum_{i = 1}^{N} w_i} \left(\sum_{i \in \MC} w_i + \sum_{i \in \CC} w_i \times \mathds{1}[y_i \neq \hat{y}_i^{t}]\right) + \lambda H_{t}$$
which is what we wished to show.
\end{proof}

\subsection{Proof that Approximate Min Loss is not that far from True Min Loss}
\label{app:theorem_2}
Recall the objective 
\[R(\t) := \frac{1}{\sum_{i = 1}^{N} w_i}\sum_i w_i I_i(\t) + \lambda \#\textrm{leaves}.\] Define the objective with the approximate weights as
\[\Rt(\t) := \frac{1}{\sum_{i = 1}^{N} \tilde{w}_i}\sum_i \tilde{w}_i I_i(\t) + \lambda \#\textrm{leaves}.\] When we rounded, we ensured that the weights did not change much. That is, we know by design that $\|\mathbf{w}-\mathbf{\tilde{w}}\|_\infty\leq \epsilon$. 

\begin{theorem} Let $t^*$ be a minimizer of the objective as $t^* \in \argmin_t R(t)$, and $\tt$ be a minimizer of the approximate loss function as $\tt\in \argmin_t \Rt(t)$. If $\|\mathbf{w}-\mathbf{\tilde{w}}\|_\infty\leq \epsilon$, we have:
\begin{equation*}
|R(t^*) - \Rt(\tt)|\leq \\max\left\{ \frac{(\zeta - 1) \psi + \epsilon}{\zeta}, \frac{(\eta - 1) \psi + \epsilon}{\eta} \right\},
\end{equation*}
where $\eta = \max_{1 \leq i \leq N} \left\{\frac{w_i}{\tilde{w}_i} \right\}$, $\zeta = \max_{1 \leq i \leq N} \left\{\frac{\tilde{w}_i}{w_i} \right\}$, and $\psi = \frac{\max_i\left\{w_i, \tilde{w}_i\right\}}{\min_i \left\{ w_i, \tilde{w}_i \right\}}$.
\end{theorem}

\begin{proof}
Since we know that $\forall 1 \leq i \leq N\: :\: \frac{1}{\zeta} \leq \frac{w_i}{\tilde{w_i}} \leq \eta$, and also $\forall 1 \leq i \leq N \: :\:$ $|w_i - \tilde{w}_i | \leq \epsilon$, then for any tree $\t$,
\begin{eqnarray}\nonumber
|R(\t)-\Rt(\t)| &=& \left|\frac{1}{\sum_{i = 1}^{N} w_i}\sum_i w_i I_i(\t)+\lambda\#\textrm{leaves} - \frac{1}{\sum_{i = 1}^{N} \tilde{w}_i}\sum_{i}\tilde{w}_iI_i(\t)-\lambda\#\textrm{leaves} \right| \\\nonumber
&=& \left|\sum_i \left(\frac{w_i}{\sum_{i = 1}^{N} w_i} - \frac{\tilde{w}_i}{\sum_{i = 1}^{N} \tilde{w}_i}\right) I_i(\t)\right|\\\nonumber
&\leq& \sum_i \left|\frac{w_i}{\sum_{i = 1}^{N} w_i} - \frac{\tilde{w}_i}{\sum_{i = 1}^{N} \tilde{w}_i}\right| I_i(\t) \leq \sum_i \max \left\{  \frac{w_i}{\sum_i w_i} - \frac{\tilde{w}_i}{\zeta \sum_i w_i},  \frac{\tilde{w}_i}{\sum_i \tilde{w}_i} - \frac{w_i}{\eta \sum_i \tilde{w}_i}  \right\} I_i(\t)\\\nonumber
&\leq& \sum_i  \max\left\{ \frac{(\zeta - 1) w_i + \epsilon}{\zeta\sum_{i = 1}^{N} w_i}, \frac{(\eta - 1) \tilde{w}_i + \epsilon}{\eta\sum_{i = 1}^{N} \tilde{w}_i} \right\} I_i(\t) \\ \nonumber
&\leq&  \sum_i  \max\left\{ \frac{(\zeta - 1) \max_i\{w_i\} + \epsilon}{\zeta \times N \times \min_i\{w_i\}}, \frac{(\eta - 1) \max_i\{\tilde{w}_i\}+ \epsilon}{\eta \times N \times \min_i\{\tilde{w}_i\}} \right\} I_i(\t) \;\;\;\; \textrm{(since}\: N \times \max_i\{w_i\} \geq \sum_i^N w_i \geq N \times \min_i\{w_i\})\\ \nonumber
&=& \sum_i \frac{\max\left\{ \frac{(\zeta - 1) \psi + \epsilon}{\zeta}, \frac{(\eta - 1) \psi + \epsilon}{\eta} \right\}}{N} I_i(\t) \;\;\;\; \textrm{(where we used that } \psi = \frac{\max_i\left\{w_i, \tilde{w}_i\right\}}{\min_i \left\{ w_i, \tilde{w}_i \right\}})\\
&=& \max \left\{ \frac{(\zeta - 1) \psi + \epsilon}{\zeta}, \frac{(\eta - 1) \psi + \epsilon}{\eta} \right\} \mathcal{L}(\t). \label{RRtBound}
\end{eqnarray}
For simplicity, we let $K = \max\left\{ \frac{(\zeta - 1) \psi + \epsilon}{\zeta}, \frac{(\eta - 1) \psi + \epsilon}{\eta} \right\}$. We will use proof-by-contradiction. Let us assume that the statement of the theorem is false. This implies:
\begin{equation}\label{eq:assume}
 K <|R(t^*)-\Rt(\tt)|=|\Rt(\tt)-R(t^*)|.
\end{equation}
Since the error rate $\mathcal{L}(\t)$ is always less than 1 for any tree, we have that when $\Rt(\tt)\geq R(t^*)$,
\begin{eqnarray}\nonumber
K \times \mathcal{L}(t^*)&<&  K <|\Rt(\tt) - R(t^*)| = \Rt(\tt) - R(t^*)\\\label{e2}
R(t^*) + K  \mathcal{L}(t^*) &<&  \Rt(\tt) 
\end{eqnarray}
and conversely, when $R(t^*)\geq \Rt(\tt)$,
\begin{eqnarray}\nonumber
 K \mathcal{L}(\tt)&<& K <|R(t^*)-\Rt(\tt)| = R(t^*)-\Rt(\tt)\\
\label{e3}
\Rt(\tt)+ K \mathcal{L}(\tt) &<& R(t^*).
\end{eqnarray}

Using \eqref{RRtBound} for $t^*$, we have:
\begin{eqnarray*}
\Rt(t^*)-R(t^*)&\leq& |R(t^*)-\Rt(t^*)|\leq  K \times \mathcal{L}(t^*)\\
\Rt(t^*)&\leq& R(t^*)+ K \mathcal{L}(t^*).
\end{eqnarray*}
Since $\tt$ is the minimizer of $\Rt$, we know $\Rt(\tt)\leq \Rt(t^*)$. Combining that with the equation above, we have: \begin{equation}\label{e4}
    \Rt(\tt)\leq R(t^*)+  K \mathcal{L}(t^*).
\end{equation}
Analogously, using \eqref{RRtBound} for $\tt$, we have:
\begin{eqnarray*}
R(\tt)-\Rt(\tt)&\leq& |R(\tt)-\Rt(\tt)|\leq  K \mathcal{L}(\tt)\\
R(\tt)&\leq& \Rt(\tt)+  K \mathcal{L}(\tt).
\end{eqnarray*}
Since $t^*$ is the minimizer of $R$, we know $R(t^*)\leq R(\tt)$. Combining that with the equation above, we have: \begin{equation}\label{e5}
    R(t^*)\leq \Rt(\tt)+  K \mathcal{L}(\tt).
\end{equation}
Combining \eqref{e2} with \eqref{e4}, we have
\begin{equation}
R(t^*) +  K \mathcal{L}(t^*) <  \Rt(\tt) \leq R(t^*)+  K \mathcal{L}(t^*),
\end{equation}
which is a contradiction since one cannot have a number strictly less than itself. Similarly, combining \eqref{e3} and \eqref{e5}, we have 
\begin{equation}
    \Rt(\tt)+ K \mathcal{L}(\tt) < R(t^*)\leq \Rt(\tt)+ K \mathcal{L}(\tt),
\end{equation}
which again leads to a contradiction. 

These contradictions, which occur whether $\Rt(\tt)\geq R(t^*)$ or $R(t^*)\geq \Rt(\tt)$, imply that our assumption, namely \eqref{eq:assume}, is incorrect, thus, we have its opposite, 
$|R(t^*)-\Rt(\tt)|\leq K$.
\end{proof}

\subsection{Proof of Theorem~\ref{thm:weighted_sampling}}
\begin{theorem}
Given a weighted dataset $D = \{(\x_i, y_i, w_i)\}_{i=1}^N$, an arbitrary positive real number $r > 0$, an arbitrary positive real number $\varepsilon > 0$, and a tree $\t$, if we sample $S = r \times N$ data points from $D$, $\tilde{D} = \{(\tilde{\x}_i, \tilde{y}_i)\}_{i=1}^S$, we have:
\begin{equation*}
    \mathbb{P}\left( | \tilde{\mathcal{L}}(\t, \tilde{\x}, \tilde{\y}) -  \mathcal{L}_{\w}(\t, \x, \y)| \geq \varepsilon \right) \leq 2 \exp\left( - \frac{2\varepsilon^2}{S^2} \right).
\end{equation*}
\end{theorem}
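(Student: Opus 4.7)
The plan is essentially a one-line application of Hoeffding's inequality: recognize $\tilde{\mathcal{L}}$ as a sample mean of i.i.d.\ bounded random variables whose common expectation is $\mathcal{L}_\w$, then plug in. The key conceptual ingredient, namely the unbiasedness $\mathbb{E}[\tilde{\mathcal{L}}] = \mathcal{L}_\w$, is already asserted in the paragraph preceding the theorem, so what remains is a short verification of independence plus a quantitative tail bound.

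Concretely, for each $i \in \{1, \ldots, S\}$ I would set $Z_i := \mathds{1}[\tilde{y}_i \neq \hat{y}^\t_i]$. Since the $S$ sampling draws are performed independently and each picks $(\x_j, y_j)$ with probability $w_j / \sum_k w_k$, the $Z_i$ are i.i.d.\ Bernoulli random variables with
\begin{equation*}
\mathbb{E}[Z_i] \;=\; \sum_{j=1}^N \frac{w_j}{\sum_k w_k}\, \mathds{1}[y_j \neq \hat{y}^\t_j] \;=\; \mathcal{L}_\w(\t, \x, \y),
\end{equation*}
which recovers the claimed unbiasedness. Because each $Z_i \in [0,1]$, Hoeffding's inequality applied to $\tilde{\mathcal{L}} = \frac{1}{S}\sum_{i=1}^{S} Z_i$ yields $\mathbb{P}(|\tilde{\mathcal{L}} - \mathcal{L}_\w| \geq \varepsilon) \leq 2\exp(-2 S \varepsilon^2)$; if instead one compares the unnormalized sum $\sum_i Z_i$ to $S\mathcal{L}_\w$, the Hoeffding bound reads $2\exp(-2\varepsilon^2 / S)$, matching the right-hand side in the theorem statement.

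There is no substantive obstacle: the argument is dictated by Hoeffding once the $Z_i$ are in hand. The only care required is bookkeeping around the normalization of $\tilde{\mathcal{L}}$, since the precise position of $S$ in the exponent depends on whether $\tilde{\mathcal{L}}$ is treated as a sum or as an average, and that choice must be applied consistently on both sides of the inequality before Hoeffding is invoked. I would fix the convention at the outset to the one used in \autoref{eq:misclassification} (restricted to the sampled dataset, with all weights equal to $1/S$) and then read off the bound directly.
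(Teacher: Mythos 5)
Your proposal is correct and takes essentially the same route as the paper: both define indicator variables for the sampled points, observe they are i.i.d.\ Bernoulli with mean $\mathcal{L}_\w(\t,\x,\y)$, and invoke Hoeffding's inequality. You are in fact more careful than the paper about the sum-versus-average normalization of $\tilde{\mathcal{L}}$ --- the paper writes $\tilde{\mathcal{L}} = \sum_{i=1}^{S} X_i$ yet asserts $\mathbb{E}[\tilde{\mathcal{L}}] = \mathcal{L}_\w$, an inconsistency your bookkeeping remark correctly flags (and the exponent in the statement as printed here, $2\varepsilon^2/S^2$, should read $2\varepsilon^2/S$ to match the derivation).
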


\begin{proof}
    For a given tree $\t$, let $X_i$ be a random variable such that $X_i = \mathds{1}[y_i \neq \hat{y}_i^T]$. Accordingly, for the sampled dataset, we have: 
    \begin{equation*}
        \tilde{\mathcal{L}} = \sum_{i = 1}^S X_i. 
    \end{equation*}
    Since for each $1 \leq i \leq S$ we have $0 \leq X_i \leq 1$, based on Hoeffding's inequality, we have:
    \begin{equation*}
        \mathbb{P}\left( | \tilde{\mathcal{L}}(\t, \tilde{\x}, \tilde{\y}) -  \mathbb{E}\left[\tilde{\mathcal{L}}(\t, \tilde{\x}, \tilde{\y}) \right]| \geq \varepsilon \right) \leq 2 \exp\left( - \frac{2\varepsilon^2}{S} \right).
    \end{equation*}
    As we discussed, $\mathbb{E}\left[\tilde{\mathcal{L}}(\t, \tilde{\x}, \tilde{\y}) \right] = \mathcal{L}_{\w}(\t, \x, \y)$. So we can easily conclude the theorem.
\end{proof}

\section{Experimental Details}

\subsection{Datasets}\label{app:datasets}
\noindent\textbf{Lalonde}:  The Lalonde dataset is from the National Supported Work Demonstration \cite{dehejia1999causal, Lalonde_reference}, a labour market experiment in which participants were randomized between treatment (on-the-job training lasting between nine months and a year) and control groups. Accordingly, for each unit $U_i$, we have a pre-treatment covariate vector $X_i$ and observed assigned treatment $Z_i$. Let $Y^1_i$ be the potential outcome if unit $U_i$ received the treatment, and $Y^2_i$ be the potential outcome if it was not treated. When a unit is treated, we do not observe the potential outcome if it was not treated and vice versa. To address this issue, we use the MALTS model~\cite{parikh2018malts} that estimates these missing values by matching. MALTS gives us an estimation of the conditional average treatment effect, which we call $TE$. Since our weighted decision tree is designed for classification, we classify participants into three groups -- ``should be treated,'' ``should be treated if budget allows,'' and ``should not be treated'' -- based on their estimated conditional average treatment effect ($TE$). We selected features \textit{age, education, black, hispanic, married, nodegree, re75} for the pre-treatment covariate vector. Here, re75 is earnings in 1975. Then we labelled the data points as $2, 1,$ and $0$ if the estimated treatment effect is larger than $2000$, between $-5000$ and $2000$, and less than $-5000$, respectively. Here, the penalty for each misclassification is defined as follows:

\[   
\text{cost} = 
     \begin{cases}
       \text{0,} &\quad\text{if we correctly classify them},\\
       200 + 3 \times age &\quad\text{if their label is 0 and we misclassify,} \\
       100 + 3 \times age &\quad\text{if their label is 1 and we misclassify,} \\
       300  &\quad\text{if their label is 2 and we misclassify.} \\ 
     \end{cases}
\]

We linearly scale the above costs to the range from 1 to 100, and in the case of data-duplication, we round them to integers and treat them as weights of the dataset.  Figure~\ref{fig:hist_weights} shows the distribution of weights. 

\begin{figure*}[t]
    \centering
    \includegraphics[width=0.32\textwidth]{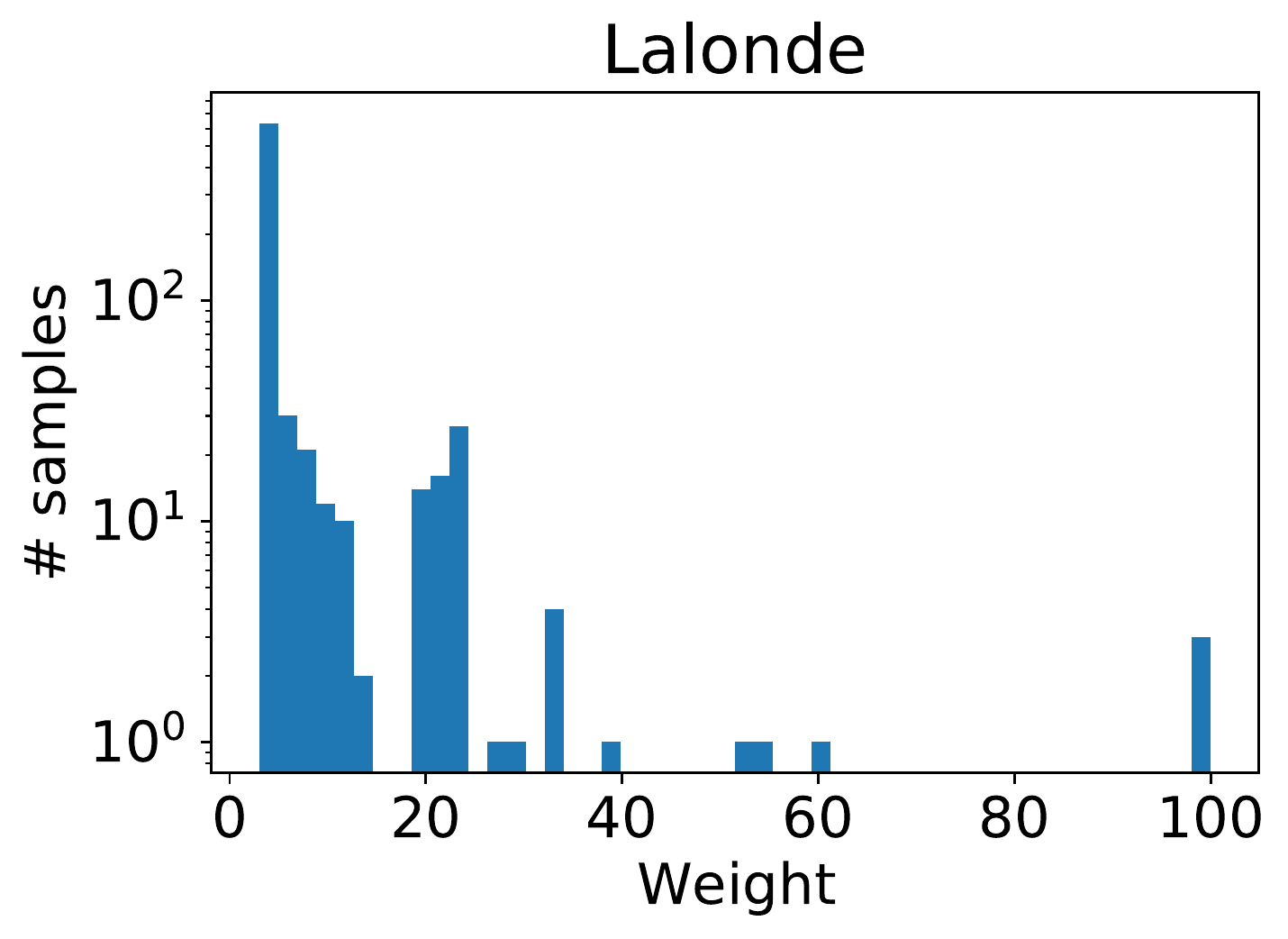}
    \includegraphics[width=0.32\textwidth]{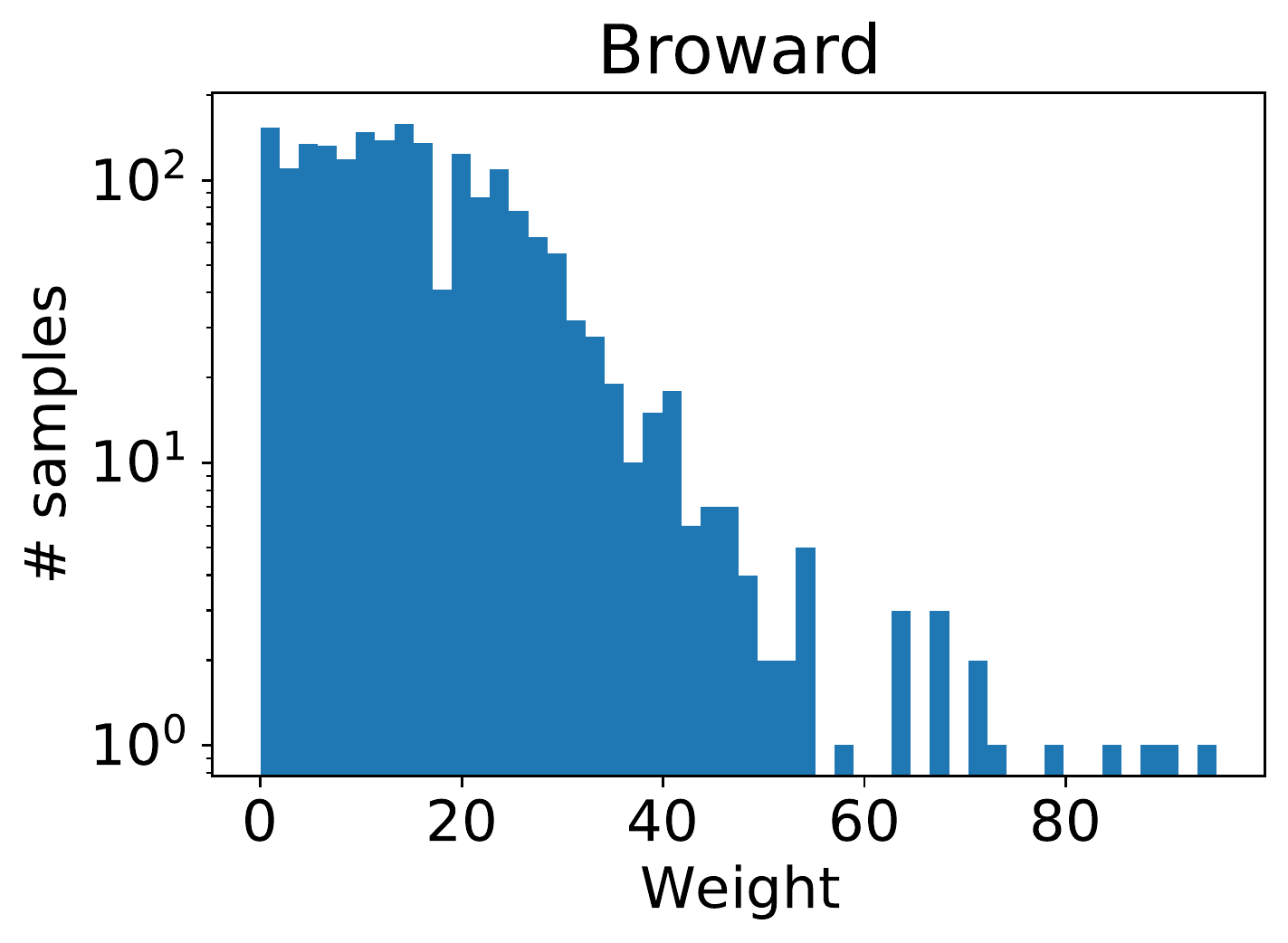}
    \includegraphics[width=0.32\textwidth]{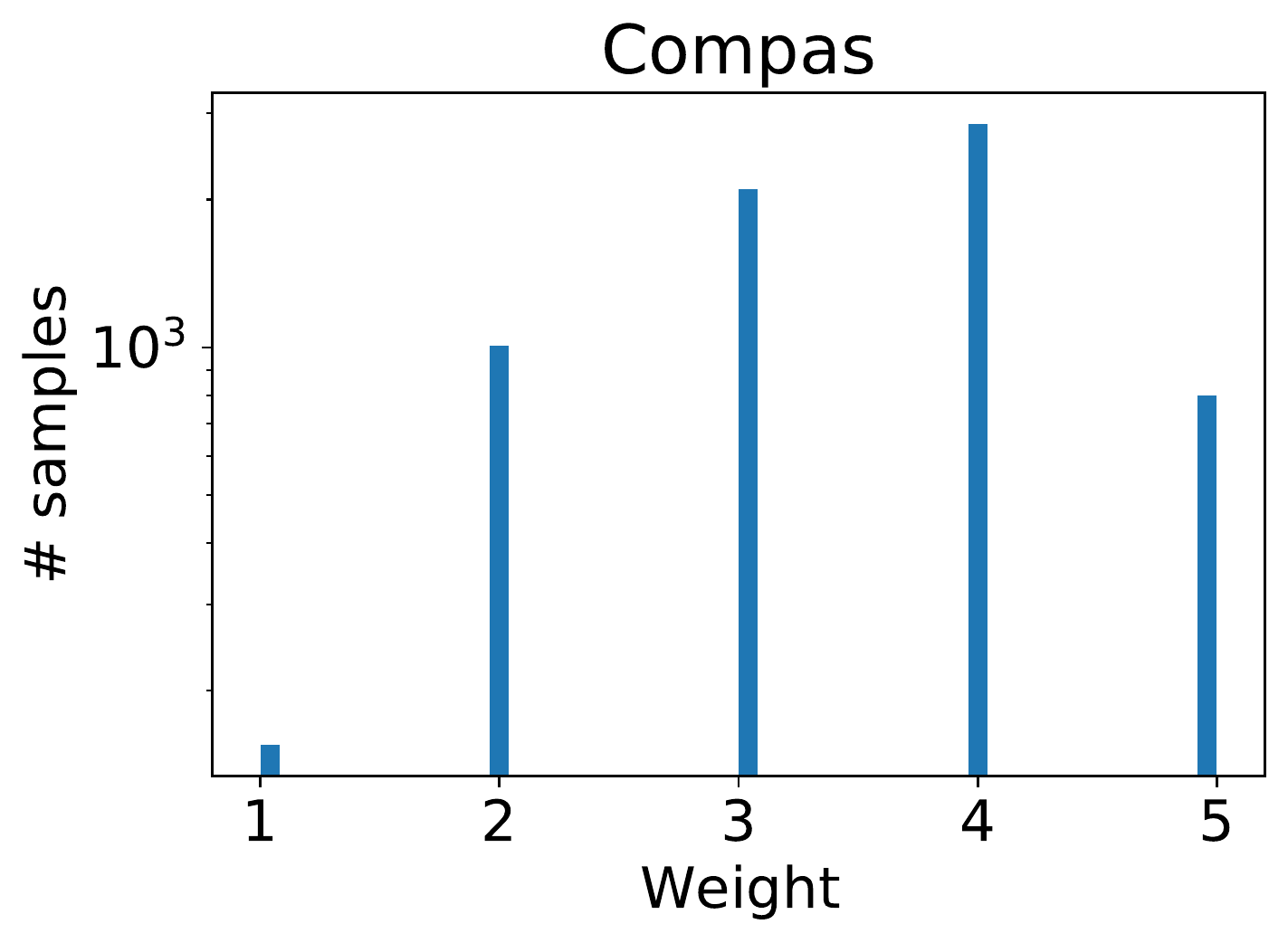}\\
    \includegraphics[width=0.32\textwidth]{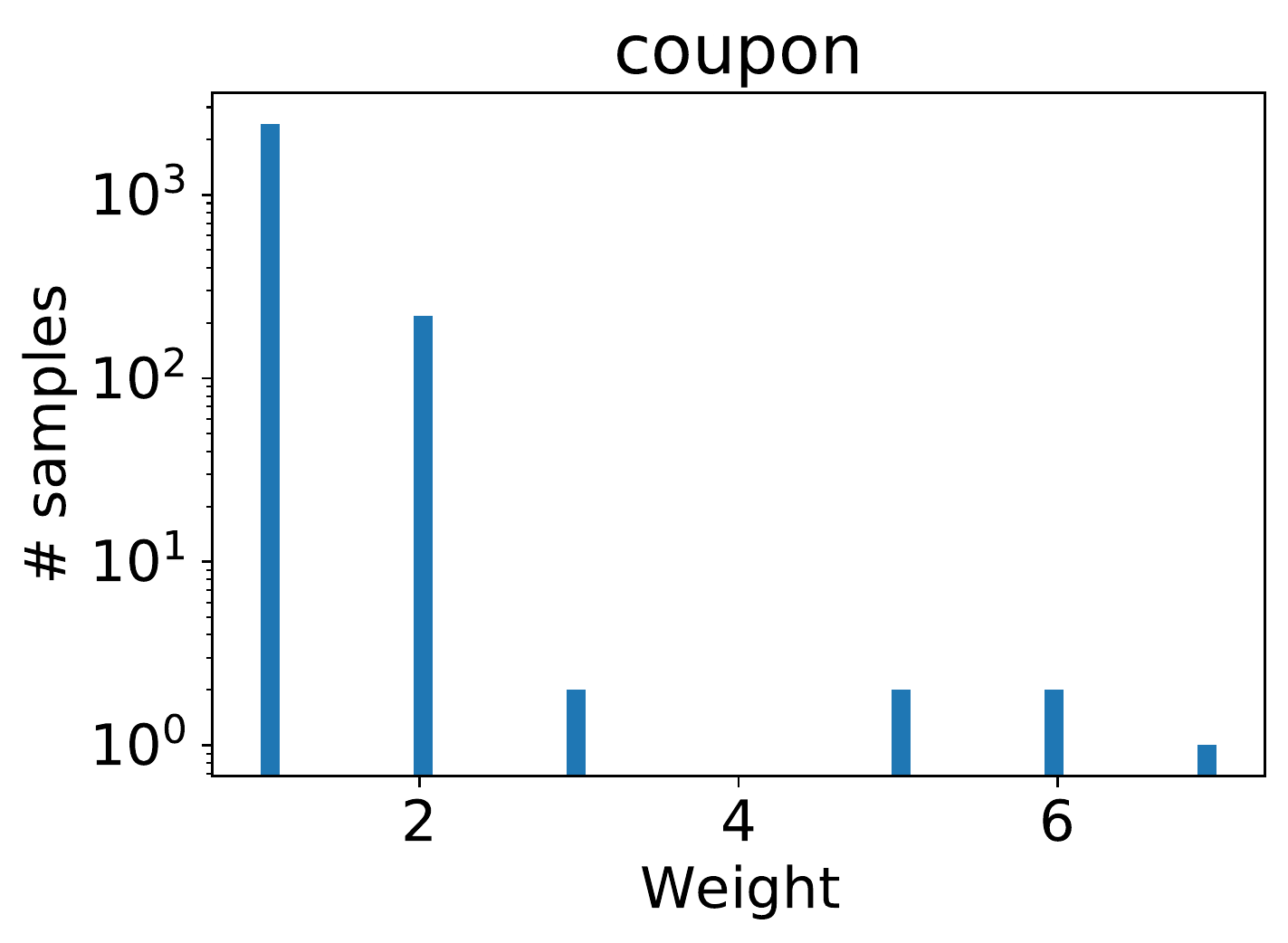}
    \includegraphics[width=0.32\textwidth]{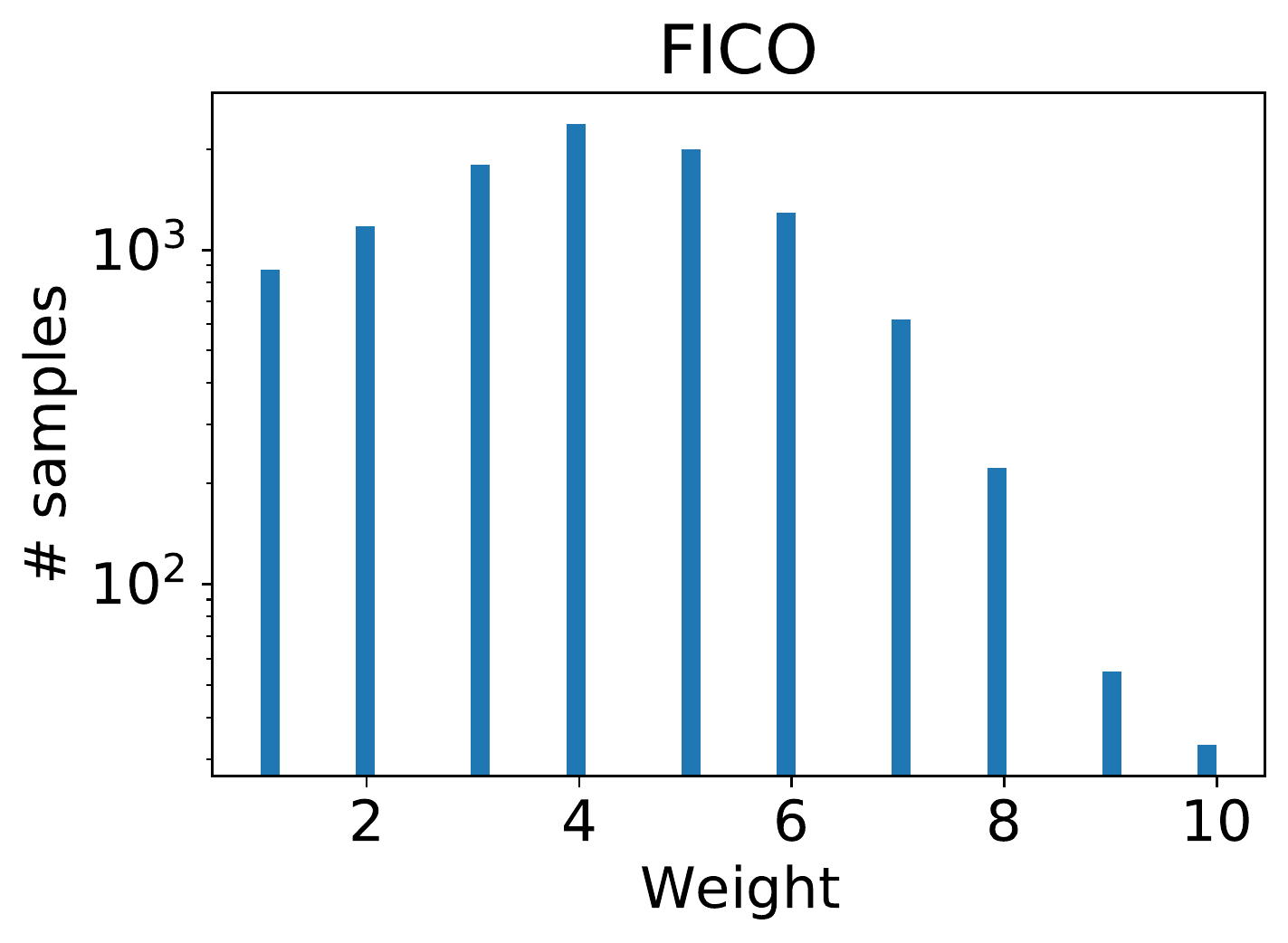}
    \includegraphics[width=0.32\textwidth]{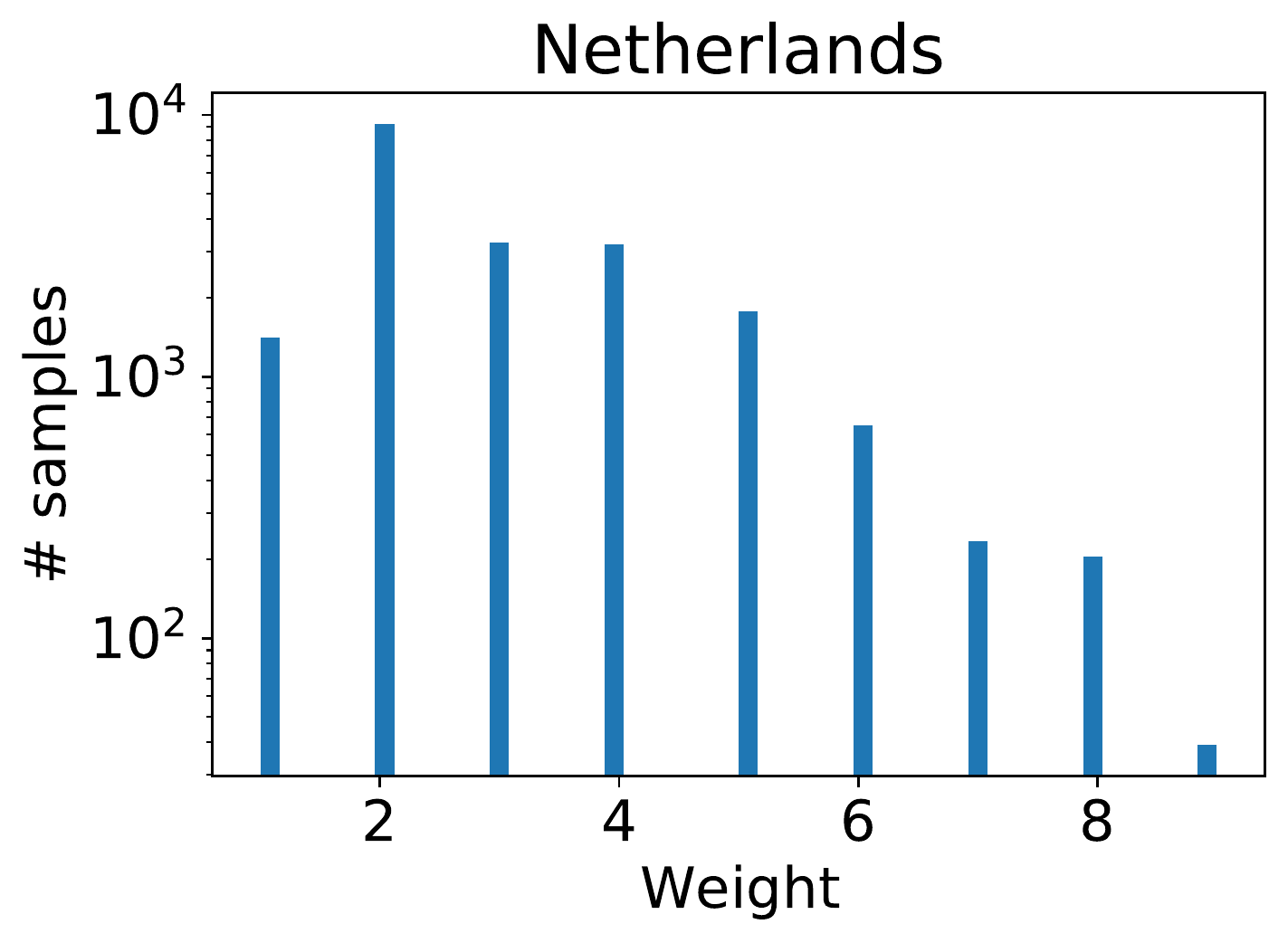}
    \includegraphics[width=0.32\textwidth]{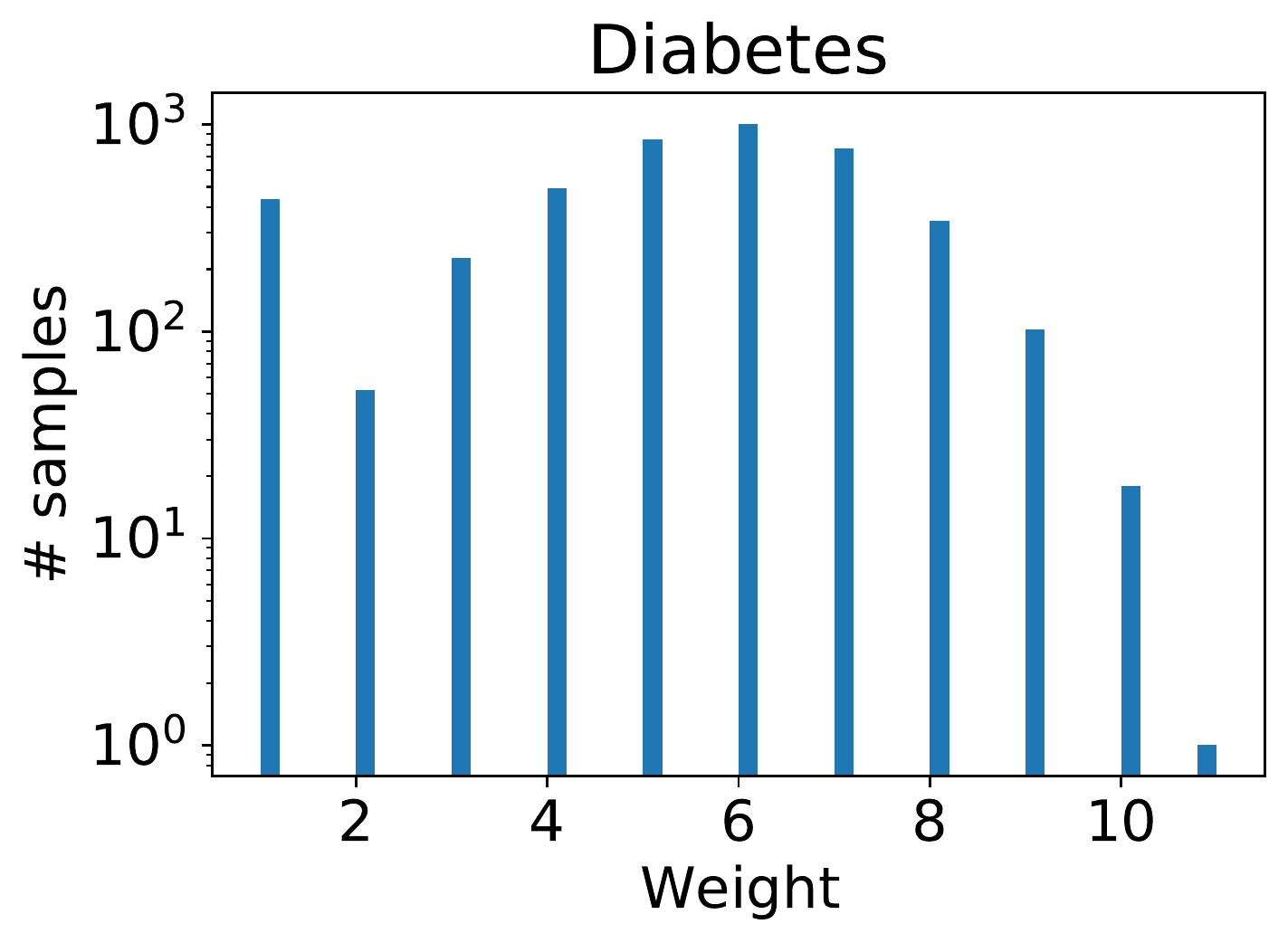}
    \caption{Distribution of weights.}
    \label{fig:hist_weights}
\end{figure*}

\noindent\textbf{Broward}: In this dataset \cite{wang2020pursuit}, we predict whether defendants have any type of charge (for which they were eventually convicted) within two years from the current charge/release date. We selected features \textit{sex, age\_at\_current\_charge, age\_at\_first\_charge,
p\_charges, p\_incarceration, p\_probation, p\_juv\_fel\_count, p\_felprop\_viol, p\_murder, p\_felassault, p\_misdeassault, p\_famviol, p\_sex\_offense, p\_weapon, p\_fta\_two\_year, p\_fta\_two\_year\_plus, current\_violence, current\_violence20, p\_pending\_charge, p\_felony, p\_misdemeanor, p\_violence, total\_convictions, p\_arrest, p\_property, p\_traffic, p\_drug, p\_dui, p\_domestic, p\_stalking, p\_voyeurism, p\_fraud, p\_stealing, p\_trespass, six\_month, one\_year, three\_year, and five\_year} and the label \textit{general\_two\_year}. We use the inverse propensity scores as the weights to balance the number of samples for each value of feature ``sex.'' Figure~\ref{fig:hist_weights} shows the distribution of weights. 

\noindent\textbf{COMPAS}: In this dataset~\cite{LarsonMaKiAn16}, we predict whether individuals are arrested within two years of release. We selected features \textit{sex, age, juv\_fel\_count, juv\_misd\_count, juv\_other\_count, priors\_count, and c\_charge\_degree} and the label \textit{two\_year\_recid}. We use inverse propensity scores as the weights to balance the number of samples for each value of feature ``sex.'' 

\noindent\textbf{Coupon}: In this dataset~\cite{wang2015or}, we predict whether a customer will accept a coupon for takeaway food or a cheap restaurant depending on their coupon usage history, current conditions while driving, and coupon expiration time. We selected features \textit{destination, passanger, weather, temperature, time, expiration, gender, age, maritalStatus, Childrennumber, education, occupation, income, Bar, CoffeeHouse, CarryAway, RestaurantLessThan20, Restaurant20To50, toCoupon\_GEQ15min, toCoupon\_GEQ25min, direction\_same} and the label \textit{Y}, and removed observations with missing values. We then used one-hot encoding to transform these categorical features into binary features. We use inverse propensity scores as the weights to balance the number of samples for each value of feature ``destination.''

\noindent\textbf{FICO}: In the FICO dataset \cite{competition}, we predict whether an individual will default on a loan. We use this dataset without preprocessing, and use propensity scores as the weights to balance the number of samples for each value of feature ``ExternalRiskEstimate.''

\noindent\textbf{Netherlands}: The prediction task in this dataset is to see whether defendants have any type of charge within four years \cite{tollenaar2013method}. We translated the feature names from Dutch to English and then selected features \textit{sex, country of birth, log \# of previous penal cases, age in years, age at first penal case, offence type, 11-20 previous case, $>$20 previous case, and age squared} and the label recidivism\_in\_4y. We use inverse propensity scores as the weights to balance the number of samples for each value of feature "sex".

\noindent\textbf{Diabetes}:
The Diabetes dataset~\cite{Diabetes_dataset} includes the data of 10 years of clinical care at 130 US hospitals, that has over 50 features representing patient and hospital outcomes. In this dataset we predict whether an individual will readmit to the hospital or not. We selected features \textit{race, gender, age, medical-specialty, diag1, diag2, diag3, max-glu-serum, A1Cresult, metformin, repaglinide, nateglinide, chlorpropamide, glimepiride, acetohexamide, glipizide, glyburide, tolbutamide, pioglitazone, rosiglitazone, acarbose, miglitol, troglitazone, tolazamide, examide, citoglipton, insulin, glyburide-metformin, glipizide-metformin, glimepiride-pioglitazone, metformin-rosiglitazone, metformin-pioglitazone, change, diabetesMed}. 
We use inverse propensity scores as the weights to balance the number of samples for each value of feature "gender".

\subsection{Configurations}\label{app:exp_detail}
Table~\ref{tab:tree_config} lists the configurations used for each dataset when training decision trees. For \GHOUL, DL8.5, and CART, we set the depth limit from 2 to 5. Also, the last column shows the different values of the regularizer, $\lambda$, for different datasets. Also, we have used different numbers of samples in our weighted samplinng approach. We vary the percentage of samplig from $100\%$ to $600\%$.

\begin{table}[t]
    \caption{Configurations used to train decision tree models}
\resizebox{0.5\textwidth}{!} {
    \centering
    \begin{tabular}{c|c|l}
    \toprule
    Dataset & Depth Limit & Regularizer ($\lambda$)\\\hline\hline
    Lalonde & 2, 3, 4, 5, 6  & 0.1, 0.05, 0.02, 0.01 \\ \hline 
    Broward & 2, 3, 4, 5, 6 & 0.005, 0.002, 0.001, 0.0005 \\\hline
    Coupon & 2, 3, 4, 5, 6 & 0.005, 0.002, 0.001, 0.0005\\\hline
    Diabetes & 2, 3, 4, 5, 6 & 0.005, 0.002, 0.001, 0.0005\\\hline
    COMPAS & 2, 3, 4, 5, 6 & 0.005, 0.002, 0.001, 0.0005, 0.0002\\\hline
    FICO & 2, 3, 4, 5, 6 & 0.01, 0.005, 0.002, 0.001, 0.0005\\\hline
    Netherlands & 2, 3, 4, 5, 6 & 0.01, 0.005, 0.002, 0.001 \\
    \bottomrule
    \end{tabular}
}
\vspace{5ex}
    \label{tab:tree_config}
\end{table}

\subsection{Evaluation Platform}\label{app:machine_property}
 All reported times are from a 32-core dual Intel E5-2683 v4 Broadwell processor running at 2.1 Ghz, with approximately 125 GB of available memory. We ran all tests single-threaded (i.e., we used only one of the 32 cores) on the Cedar cluster of Compute Canada. For the experiment reported in Figure~\ref{fig:run_time_p}, we also ran on a 2.6GHz 6-core Intel Core i7 processor and 16GB of 2400MHz DDR4 onboard memory.

\subsection{Software Packages Used}\label{app:software}
\noindent\textbf{GOSDT and DL8.5 with Guessing}: 
For guessing technique proposed by \citet{McTavishZhong}, we use their publicly released code (https://github.com/ubc-systopia/gosdt-guesses) and (https://github.com/ubc-systopia/pydl8.5-lbguess).

\noindent\textbf{DL8.5}
For DL8.5 of \citet{aglin2020learning}, we used the implementation in their main repository (https://github.com/aia-uclouvain/pydl8.5).

\noindent\textbf{CART}: We run CART using the Python implementation from Sci-Kit Learn.

\clearpage
\section{More Experimental Results}

\subsection{Sparsity vs$.$ Test Accuracy}
\label{app:sparse_accuracy}

In Section~\ref{sec:experiments}, we discussed the results for three datasets. Figure~\ref{fig:sparsity-test-accuracy_appendix} shows the results of sparsity vs$.$ test accuracy for all datasets.  \GHOUL{} produced excellent test accuracy with a small number of leaves and compared to other decision tree models, achieves higher accuracy for every level of sparsity.

\begin{figure*}
\includegraphics[width=0.247\textwidth]{main_Lalonde_NumberofLeaves_TestAccuracy.pdf}
\includegraphics[width=0.247\textwidth]{main_Broward_NumberofLeaves_TestAccuracy.pdf}
\includegraphics[width=0.247\textwidth]{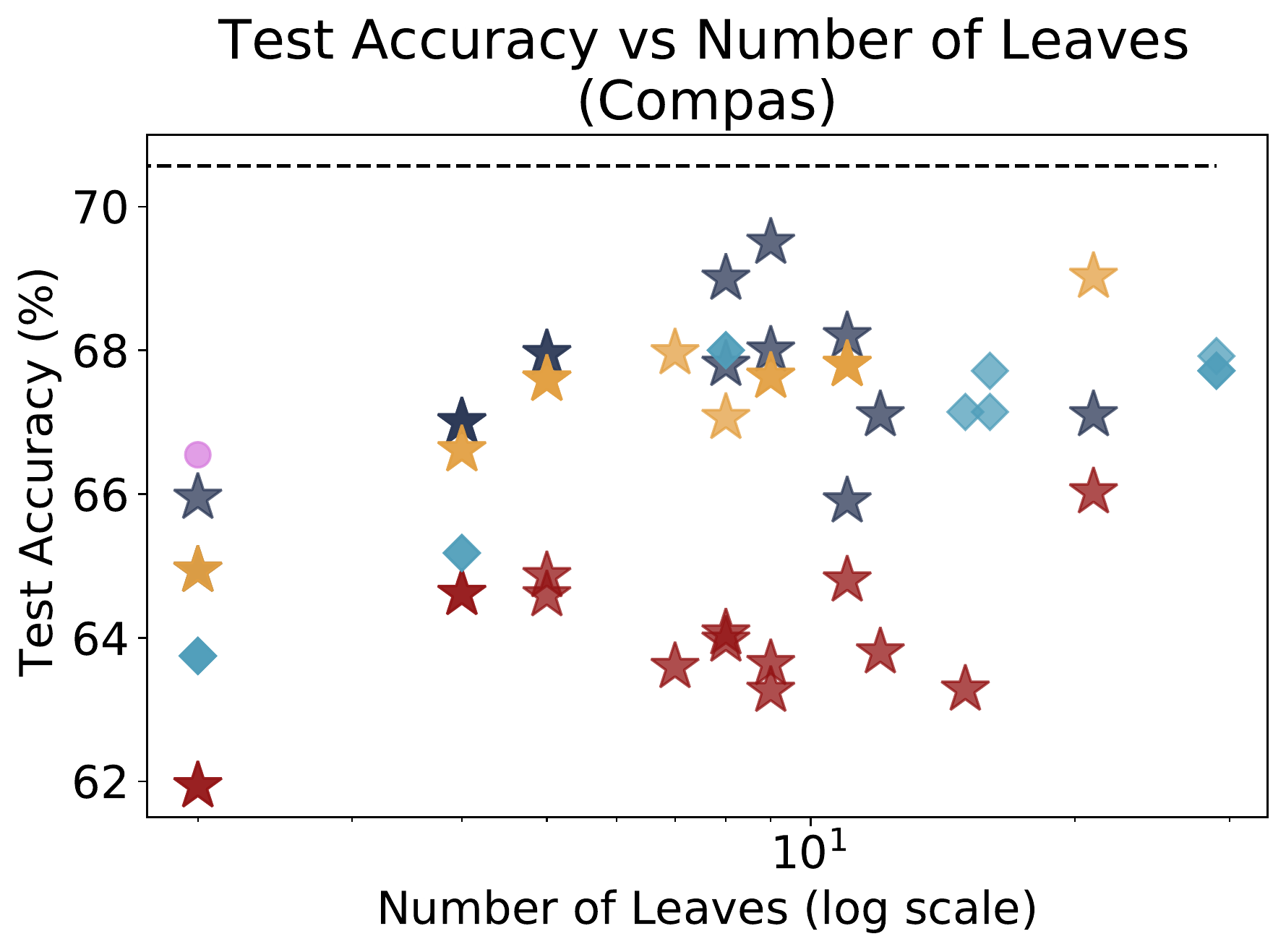}
\includegraphics[width=0.247\textwidth]{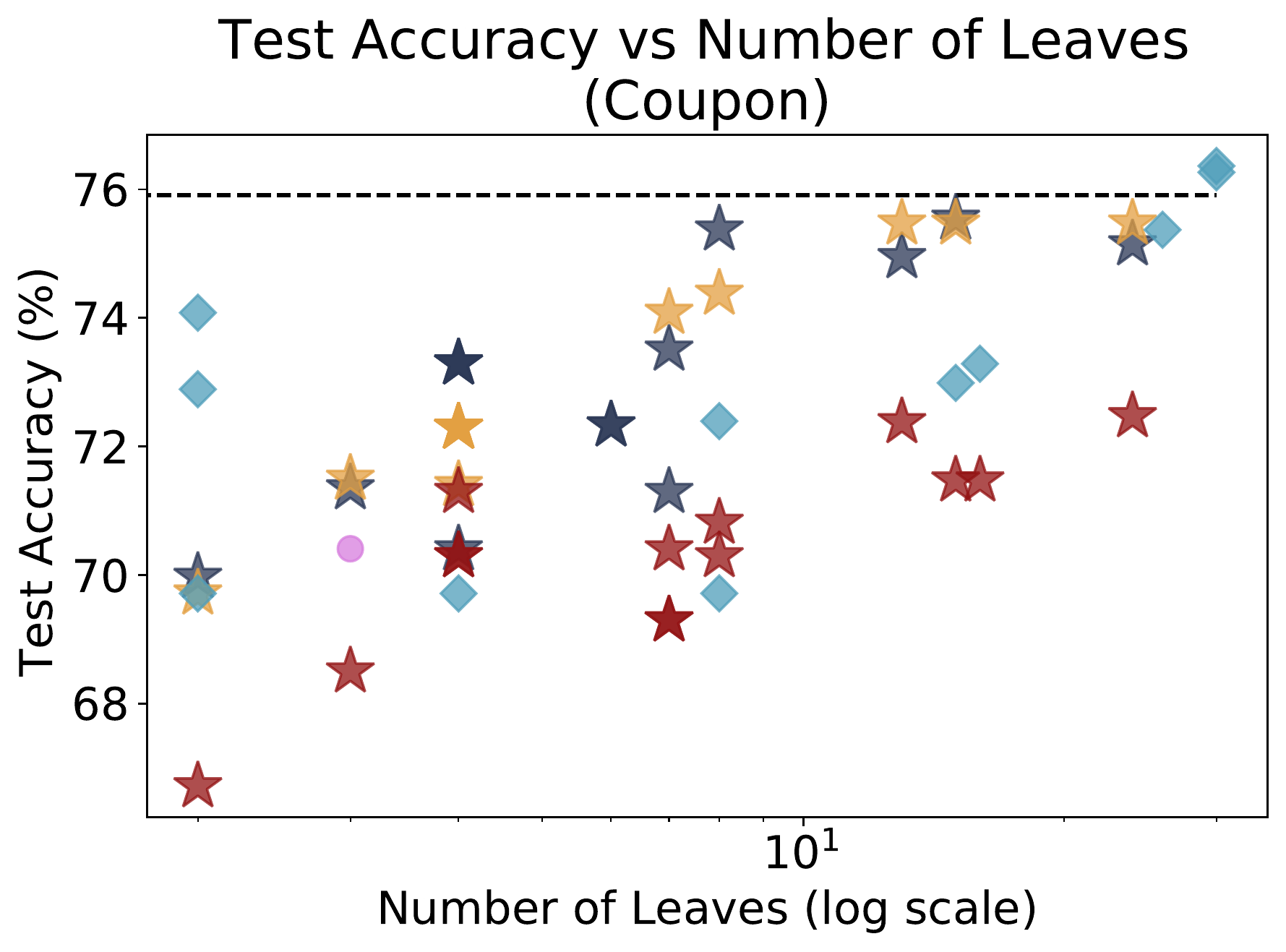}\\
\includegraphics[width=0.247\textwidth]{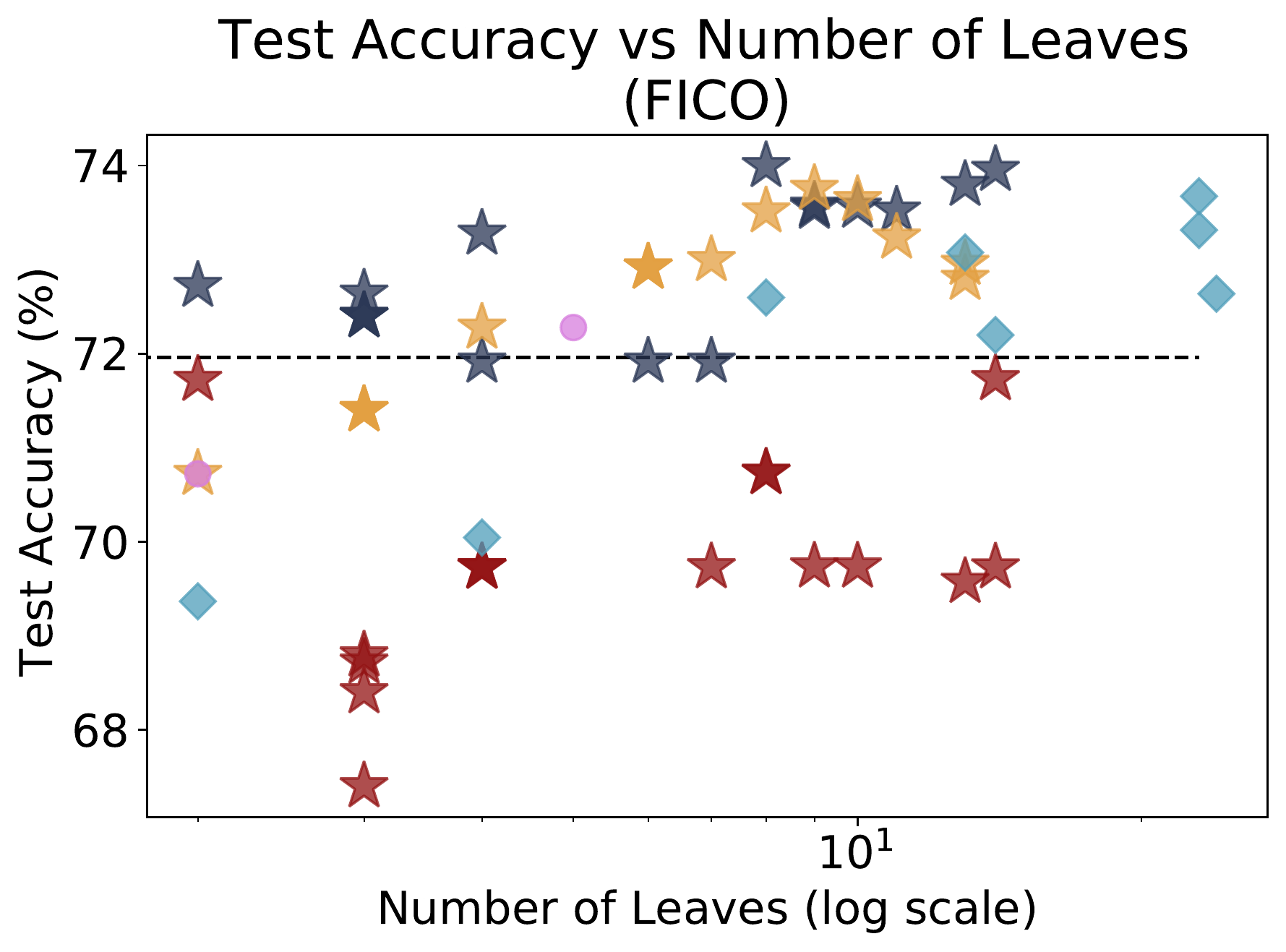}
\includegraphics[width=0.247\textwidth]{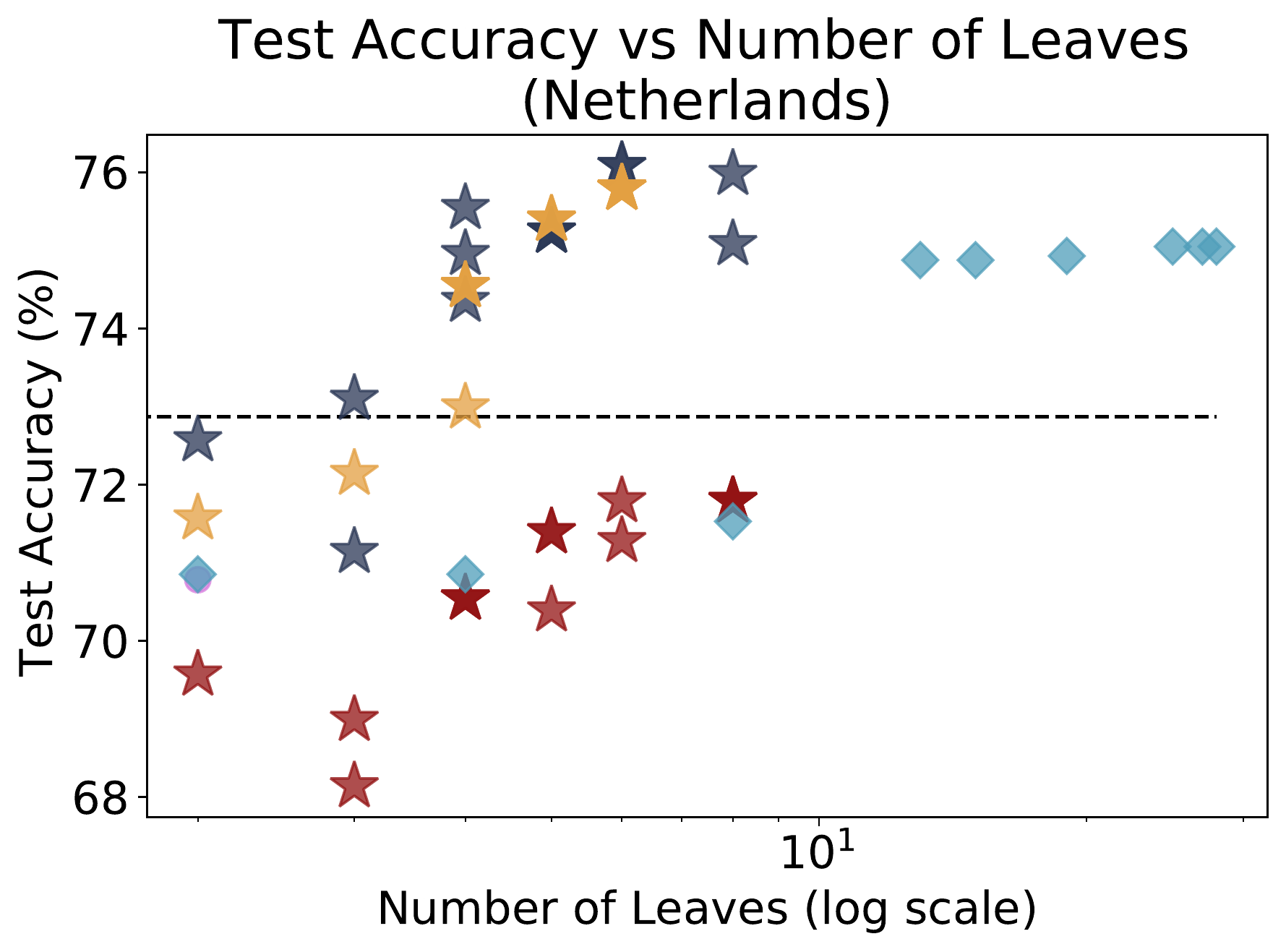}
\includegraphics[width=0.247\textwidth]{main_Diabetes_NumberofLeaves_TestAccuracy.pdf}
\includegraphics[width=0.247\textwidth]{legend.pdf}
\caption{\textbf{Sparsity vs. test accuracy: } All methods but CART and GBDT use guessed thresholds. GBDT and DL8.5 use data duplication. DL8.5 frequently times out, so there are fewer markers for it. \GHOUL{} achieves the highest training accuracy for every level of sparsity. \GHOUL{} also achieves the highest test accuracy for \textbf{\textit{almost}} every level of sparsity. }
\label{fig:sparsity-test-accuracy_appendix}
\end{figure*}

\subsection{Training Time vs$.$ Test Accuracy}
\label{app:training_accuracy}

In Section~\ref{sec:experiments}, we discussed the results of training time and test accuracy of our methods and baselines on three datasets. Figure~\ref{fig:acc-time-test-appendix} reports the results of training time vs$.$ test accuracy on all datasets. While the training times of \GHOUL{} and CART are almost the same, \GHOUL{} achieves the highest test accuracy in almost all cases. DL8.5 struggled with the 1-hour termination condition for all datasets except Lalonde; because DL8.5 did not solve to optimality, it was outperformed by both CART and \GHOUL. 

\begin{figure*}
\includegraphics[width=0.247\textwidth]{main_Lalonde_TrainingTime_TestAccuracy.pdf}
\includegraphics[width=0.247\textwidth]{main_Broward_TrainingTime_TestAccuracy.pdf}
\includegraphics[width=0.247\textwidth]{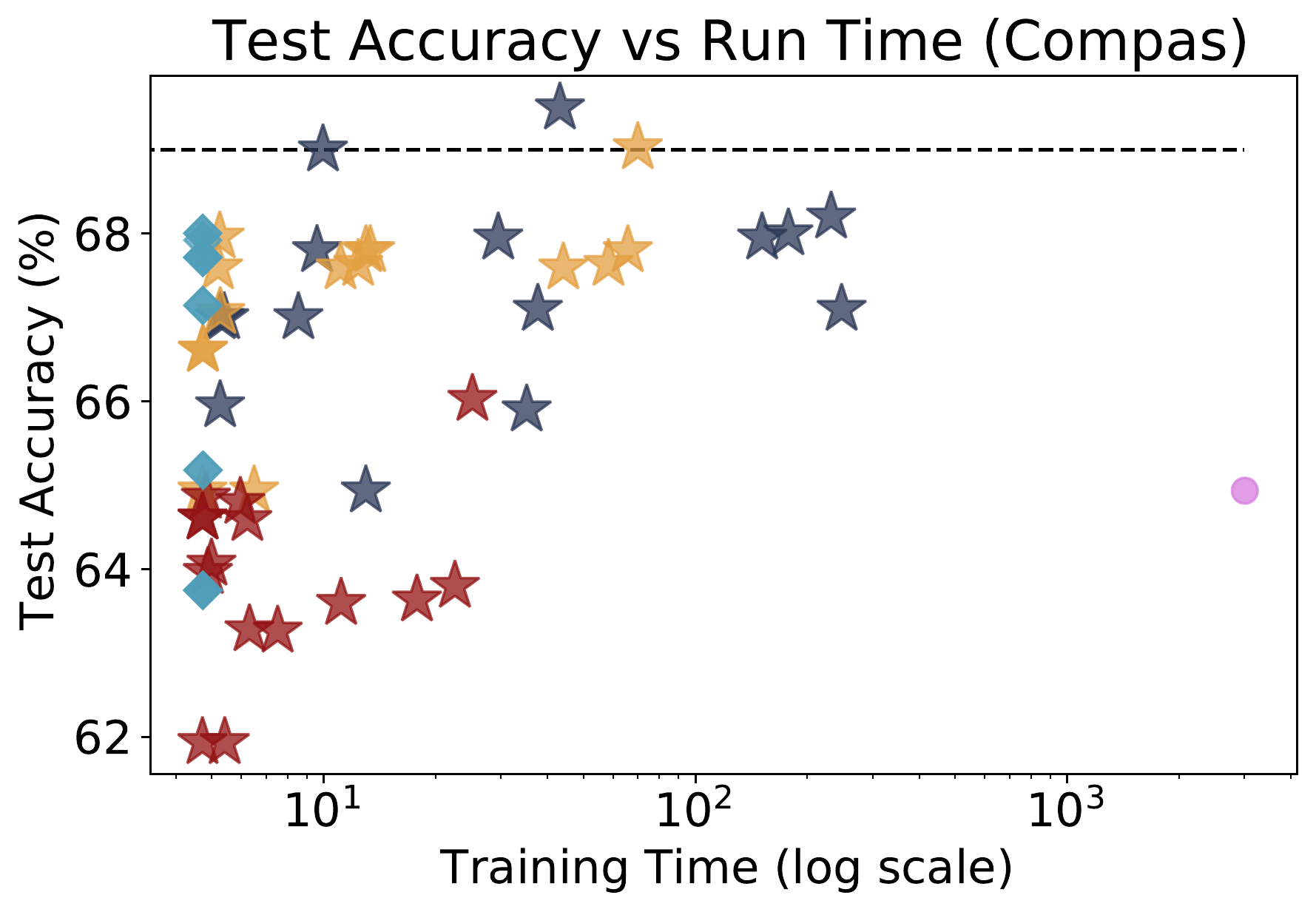}
\includegraphics[width=0.247\textwidth]{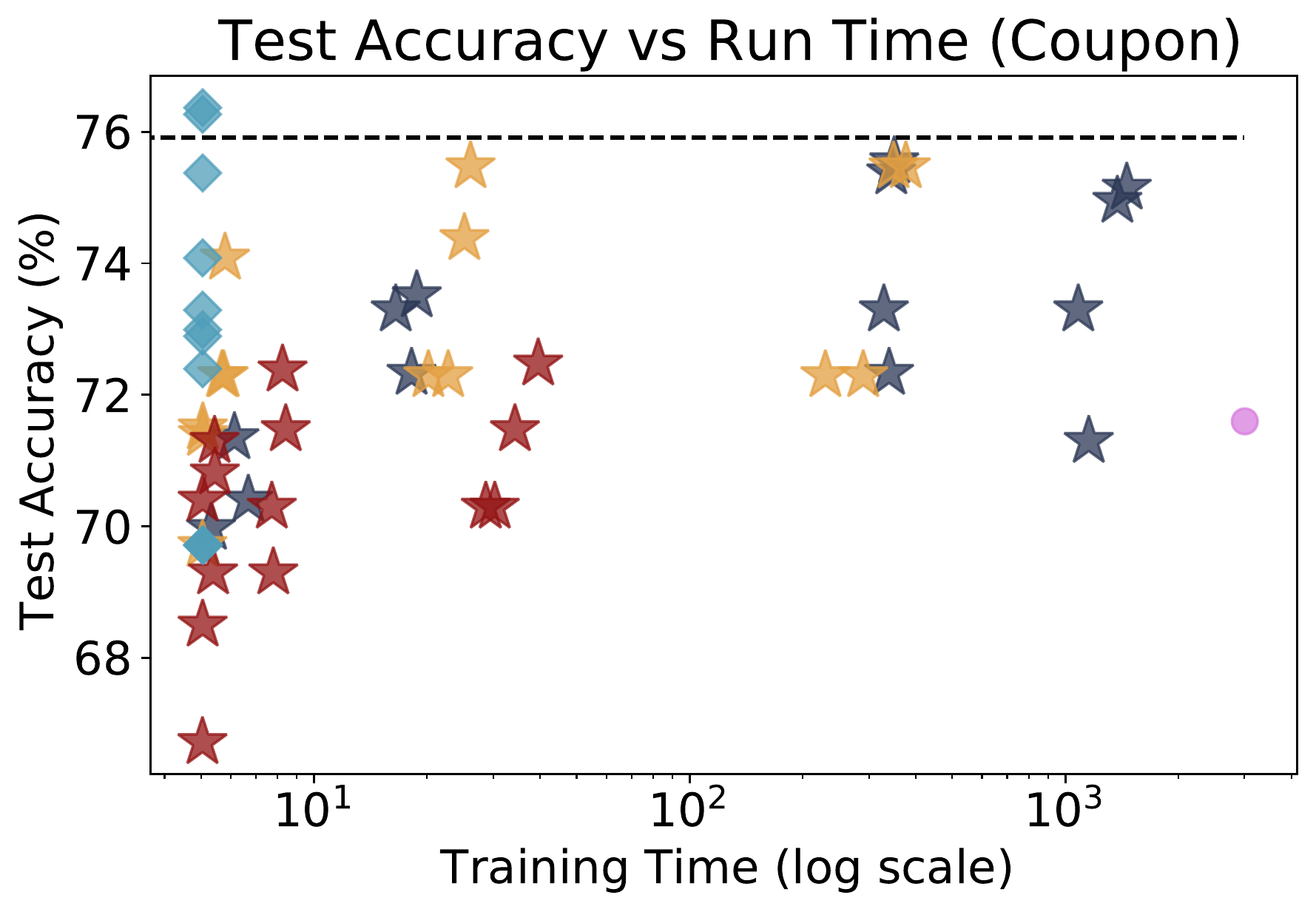}\\
\includegraphics[width=0.247\textwidth]{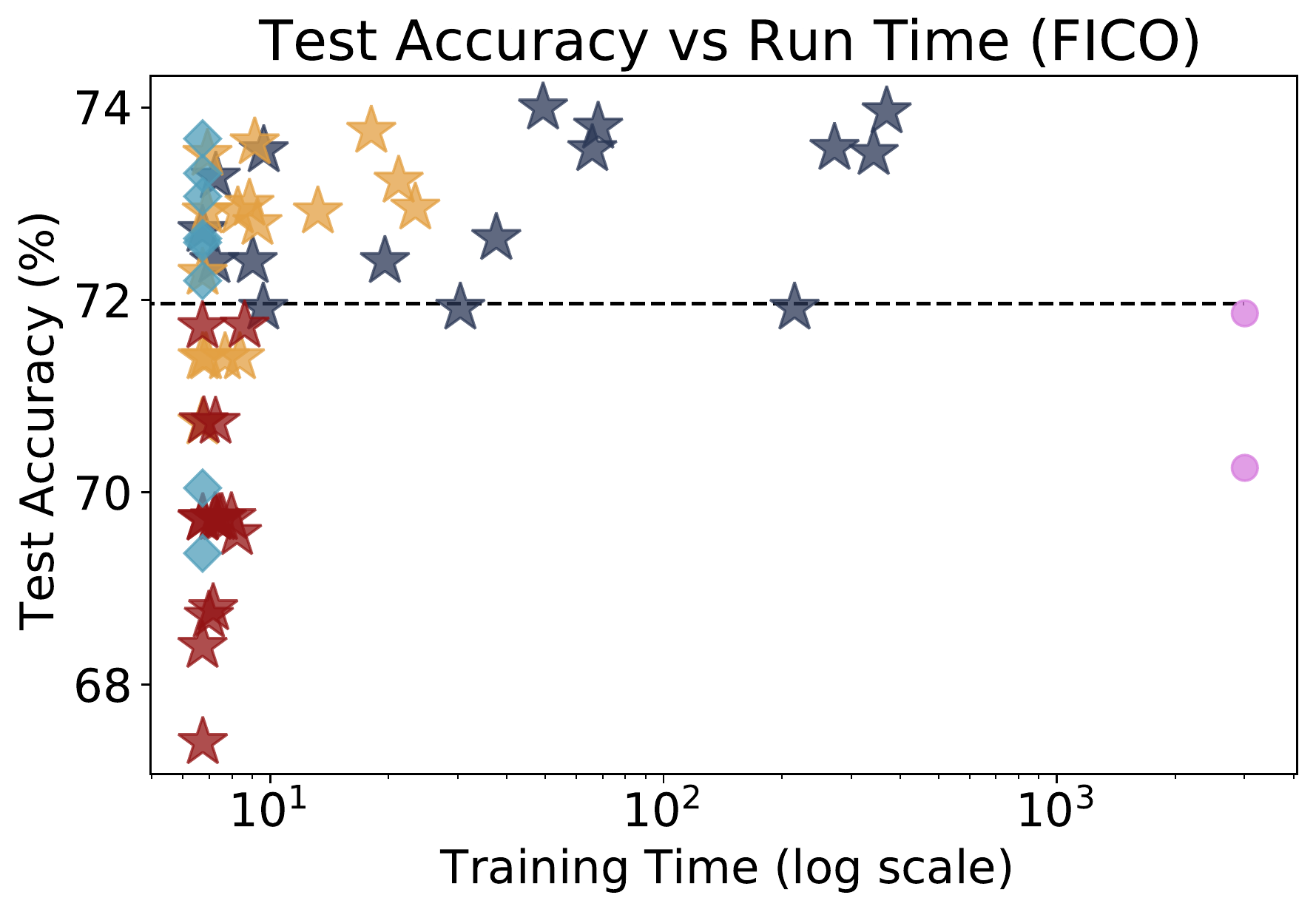}
\includegraphics[width=0.247\textwidth]{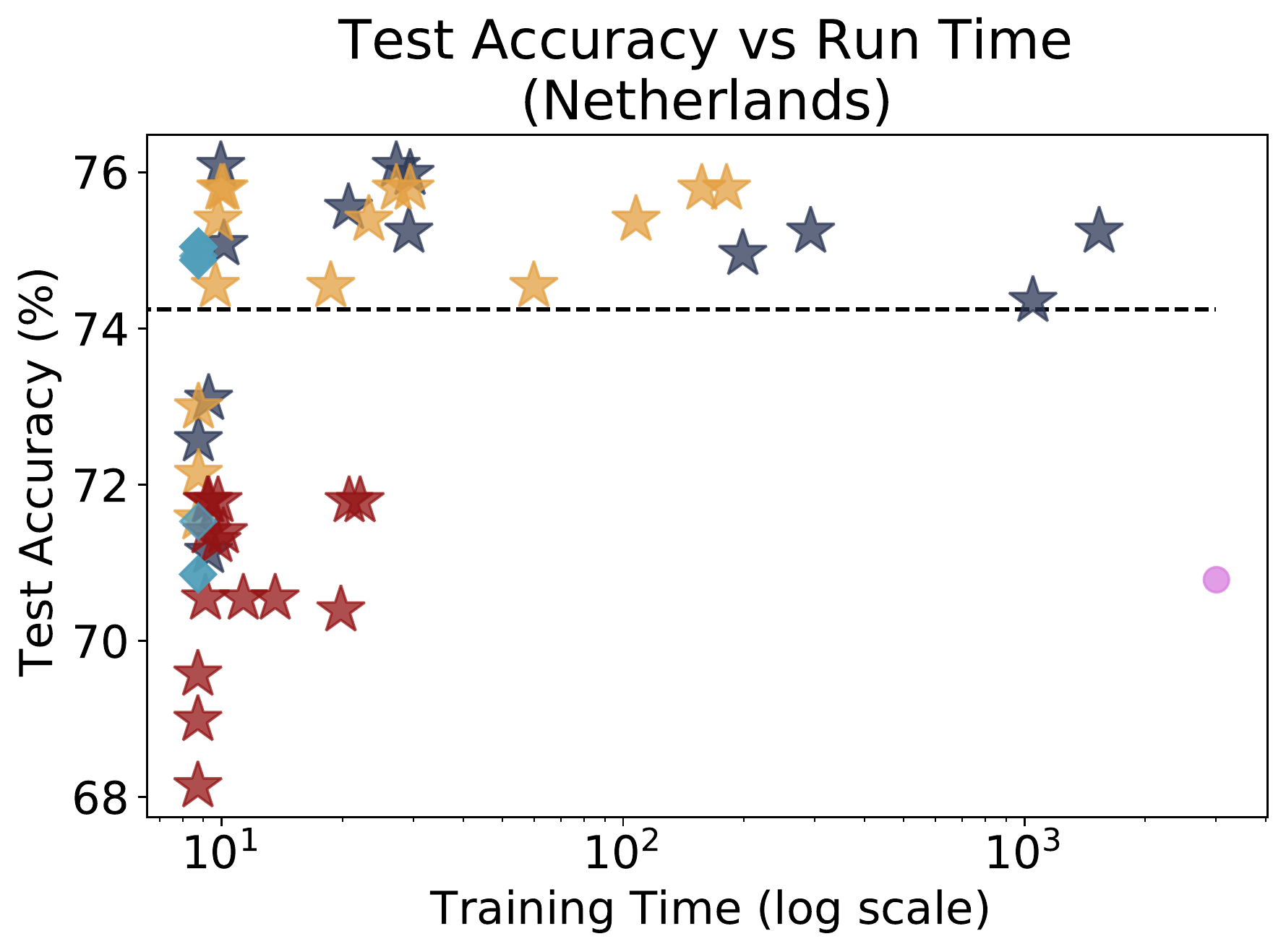}
\includegraphics[width=0.247\textwidth]{main_Diabetes_TrainingTime_TestAccuracy.pdf}
\includegraphics[width=0.247\textwidth]{legend.pdf}
\caption{\textbf{Training time vs$.$ test accuracy:} All methods but CART and GBDT use guessed thresholds. GBDT and DL8.5 use data duplication. DL8.5 frequently times out, so there are fewer markers for it. While CART is the fastest algorithm, \GHOUL uses its additional runtime to produce models with higher accuracy and generalize better.}
\label{fig:acc-time-test-appendix}
\end{figure*}

\subsection{Effect of the Number Samples}
\label{app:effect-samples}
Intuitavely, the accuracy of weighted sampling approach increase with the sample size. To show this effect, Figure~\ref{fig:sample-size} shows the effect of sample size on accuracy. Results indicates that increasing the sample size lets us to have a more accurate model.

\begin{figure*}
\includegraphics[width=0.245\textwidth]{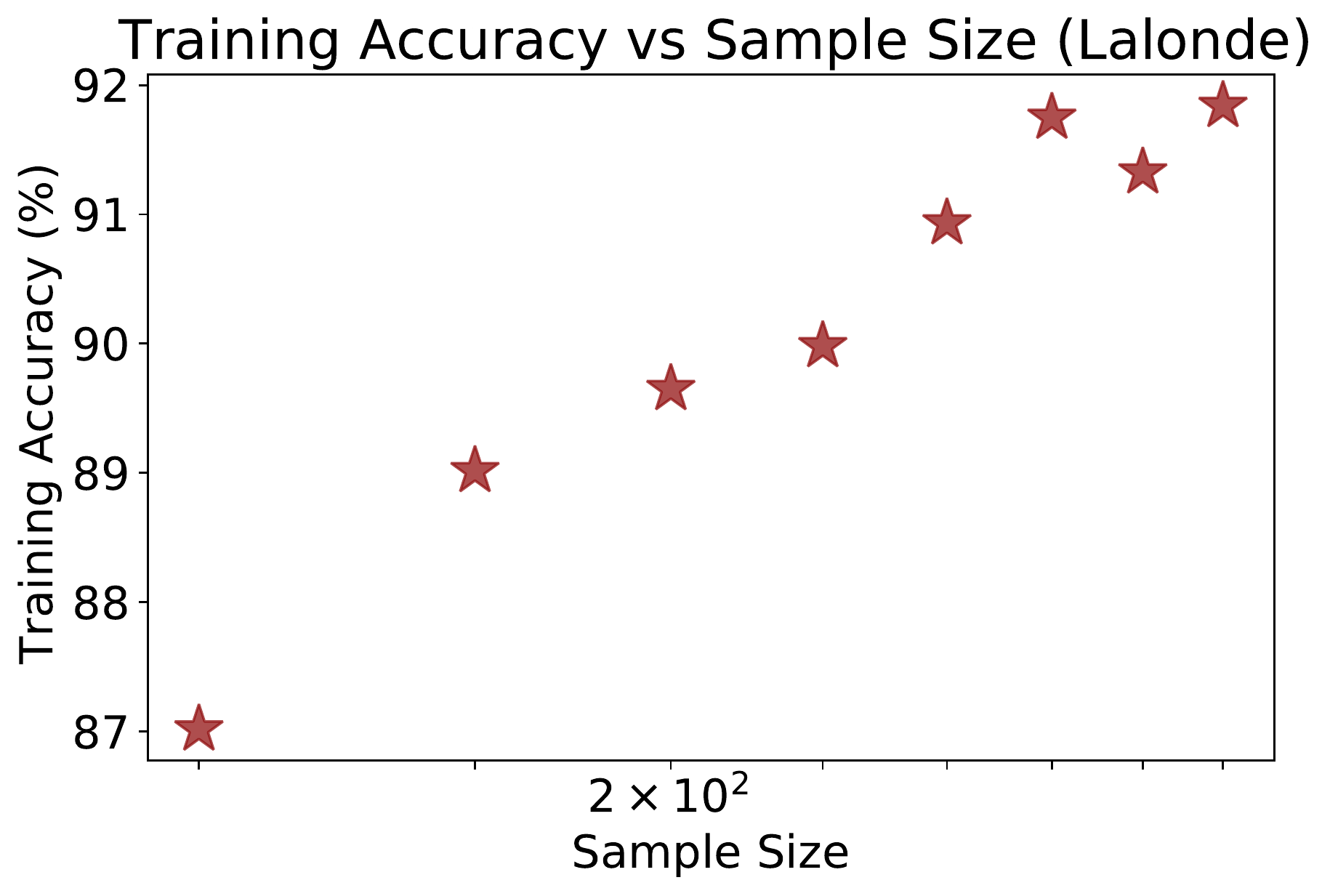}
\includegraphics[width=0.245\textwidth]{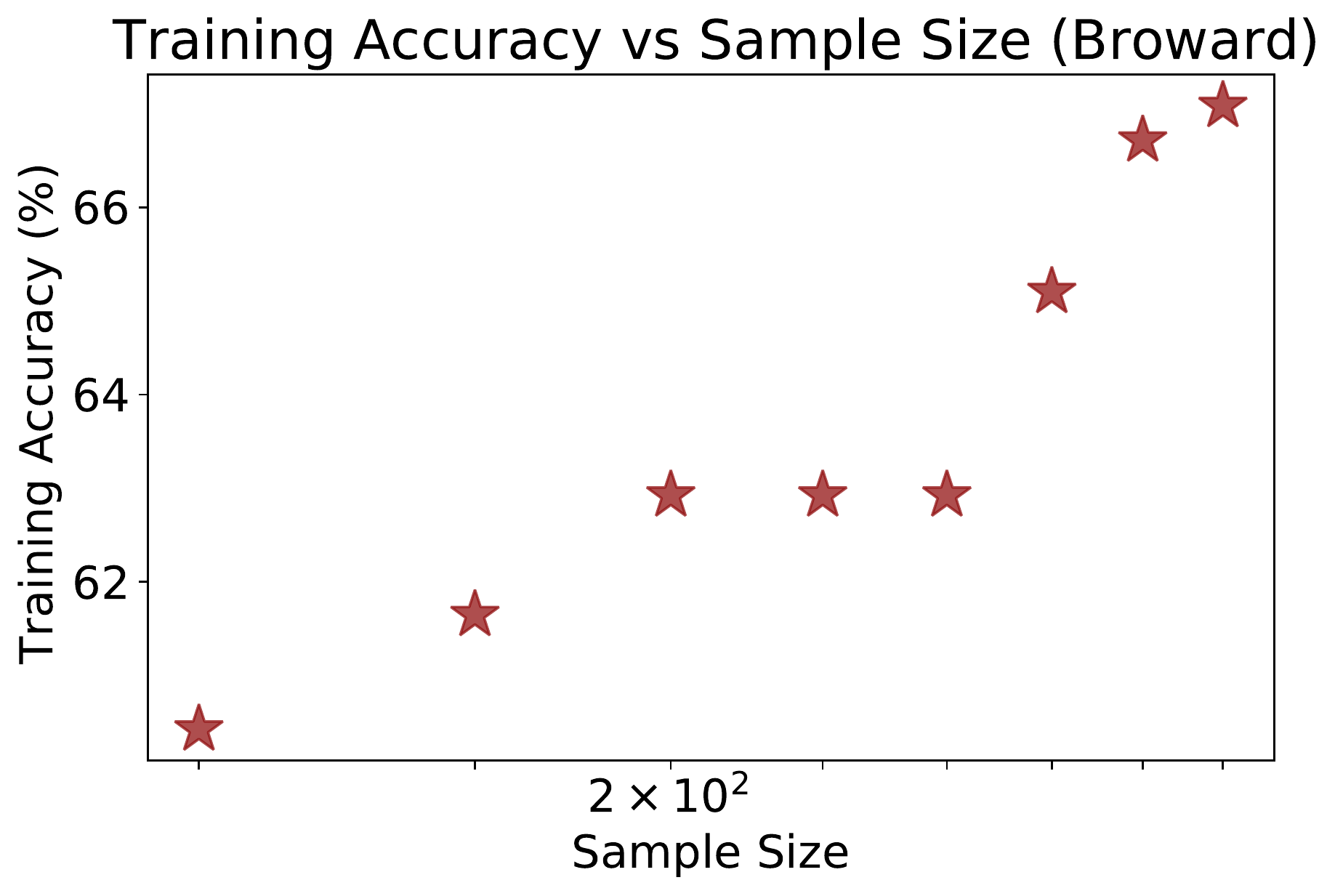}
\includegraphics[width=0.247\textwidth]{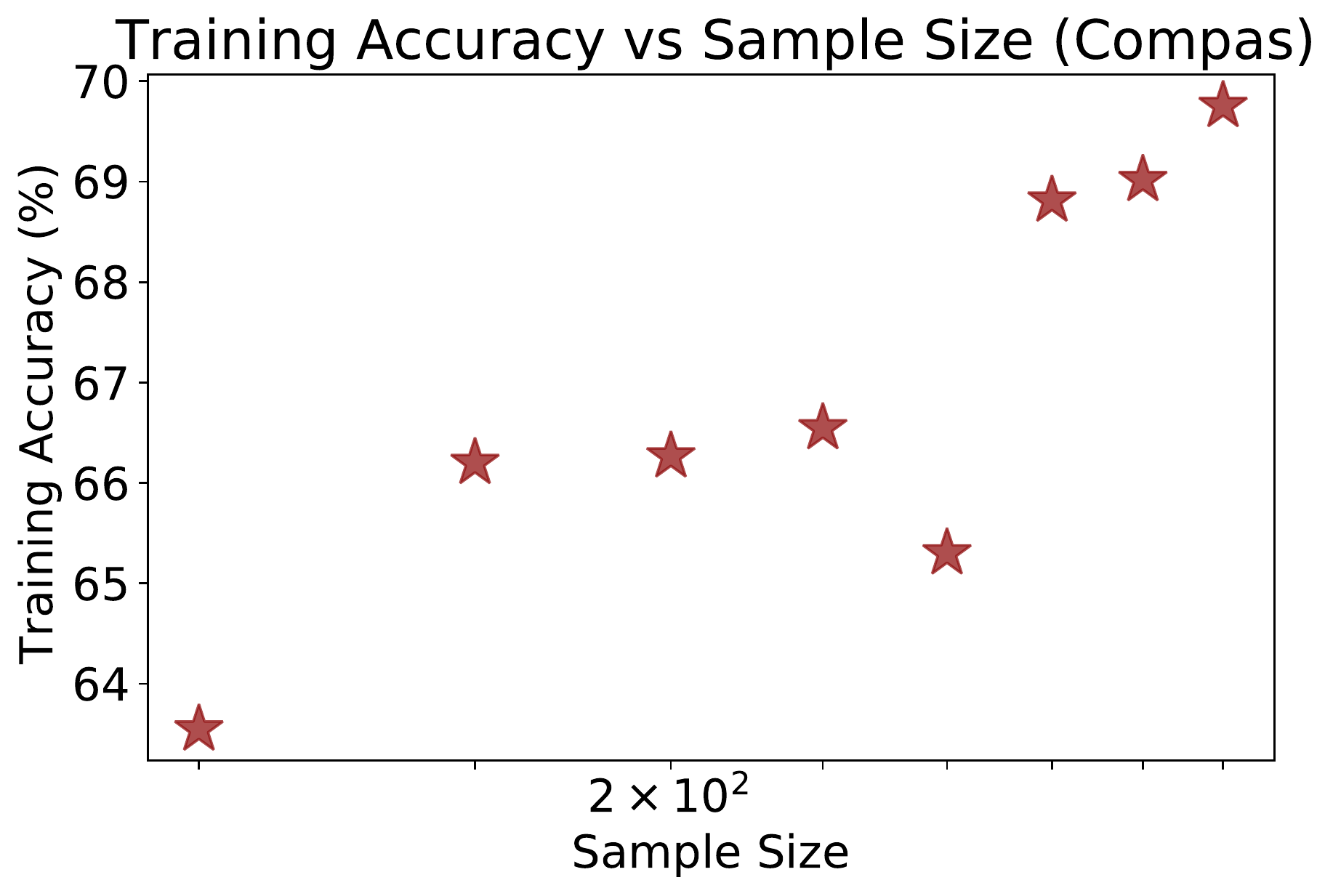}
\includegraphics[width=0.247\textwidth]{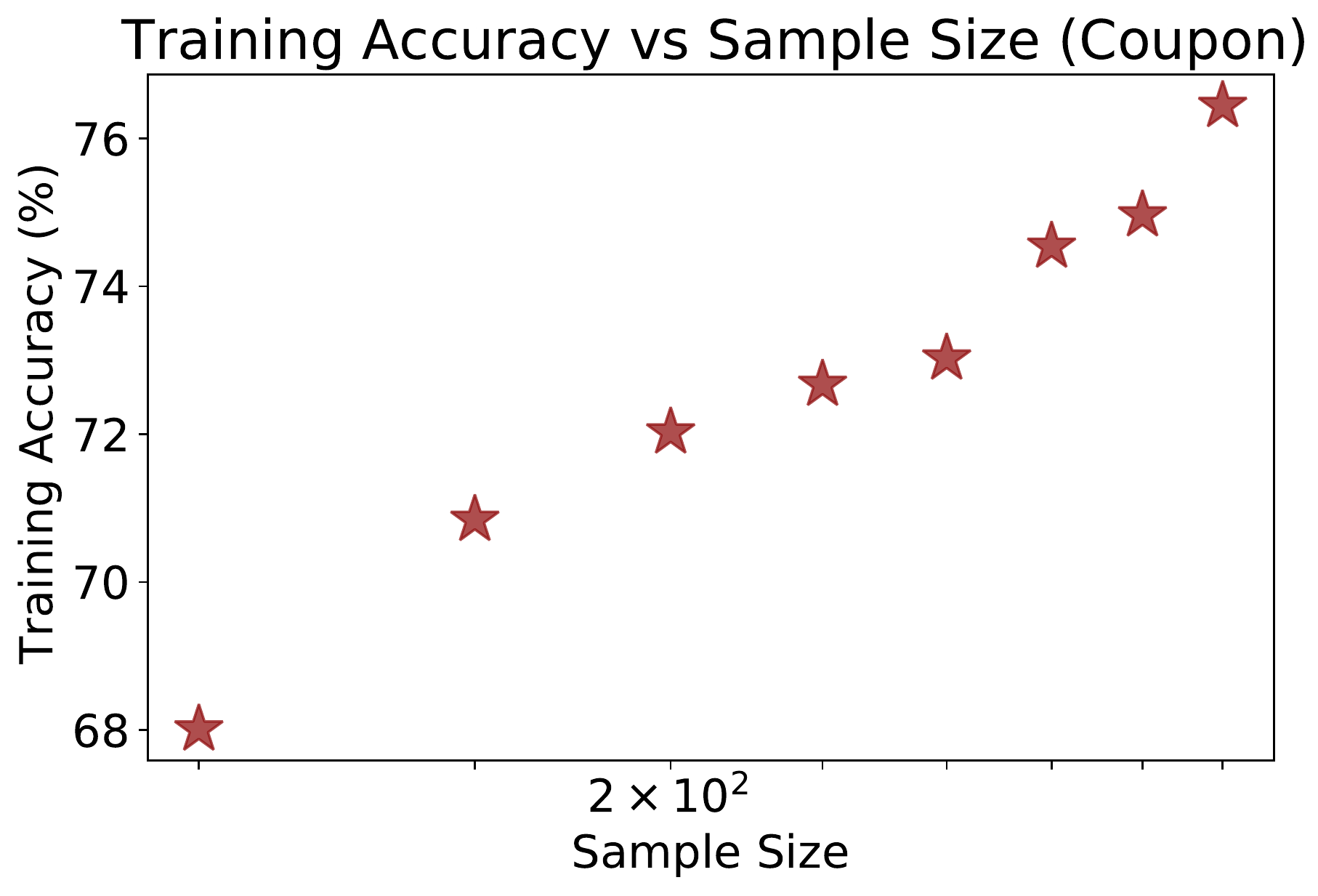}\\
\includegraphics[width=0.247\textwidth]{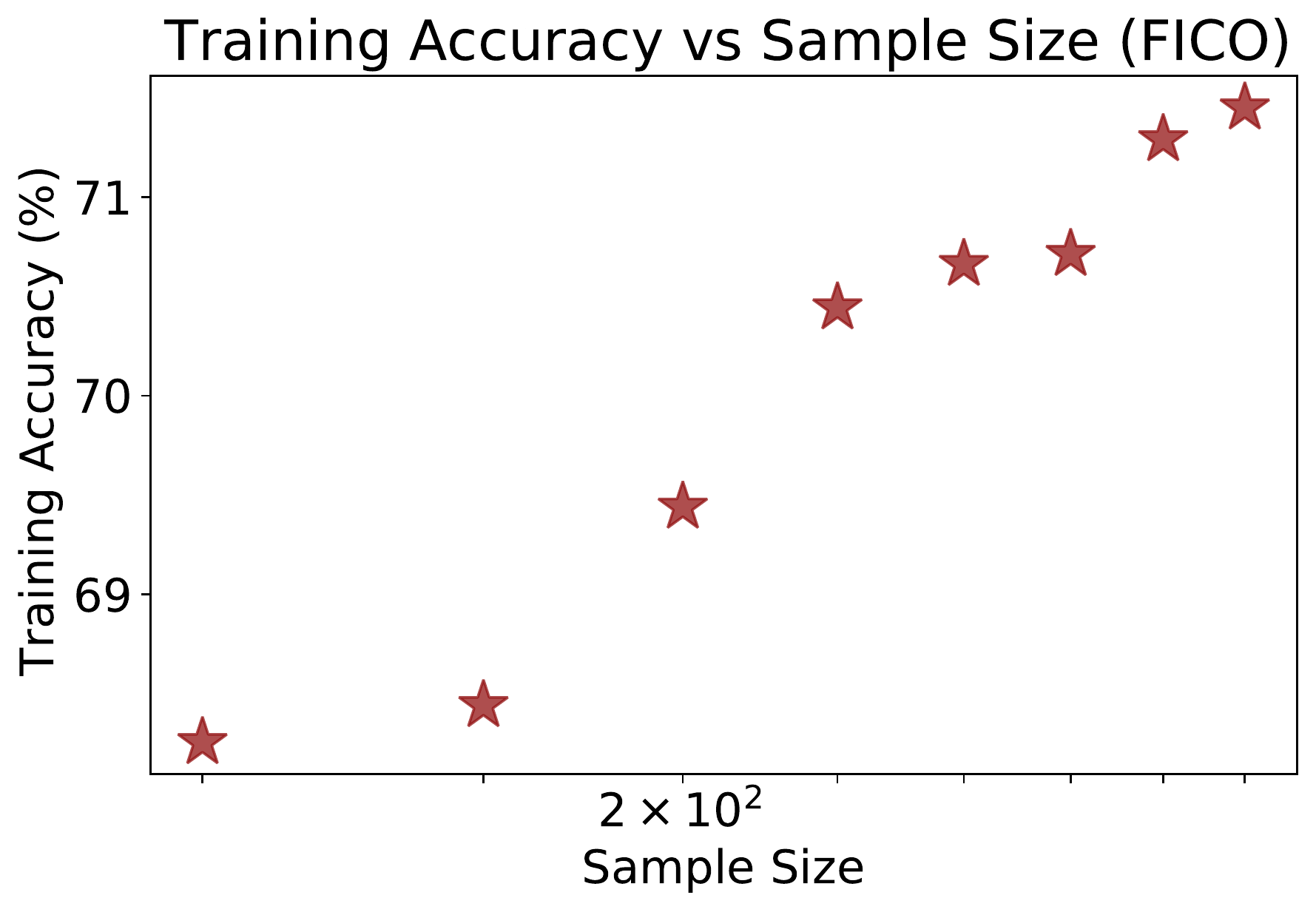}
\includegraphics[width=0.247\textwidth]{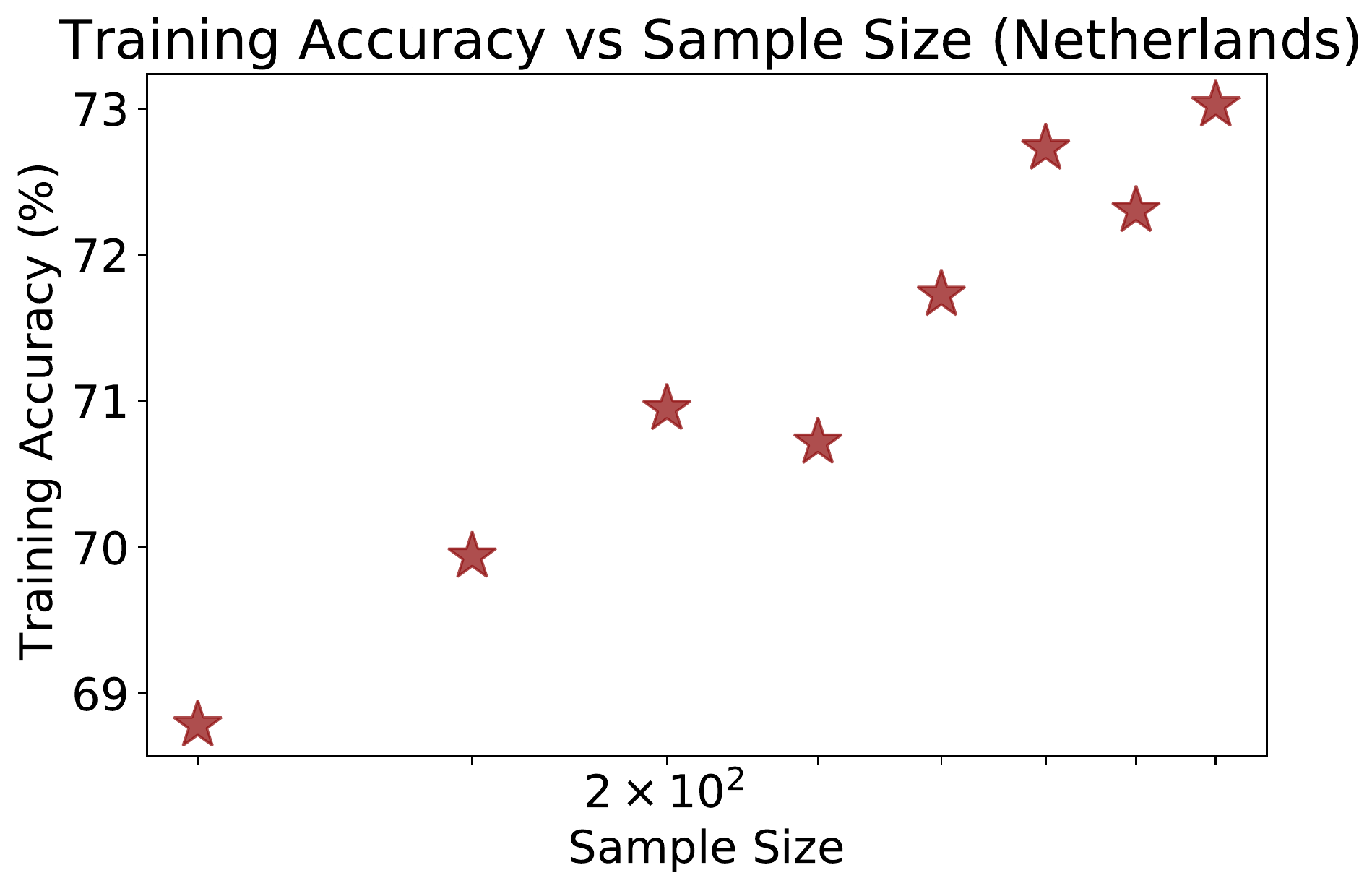}
\includegraphics[width=0.245\textwidth]{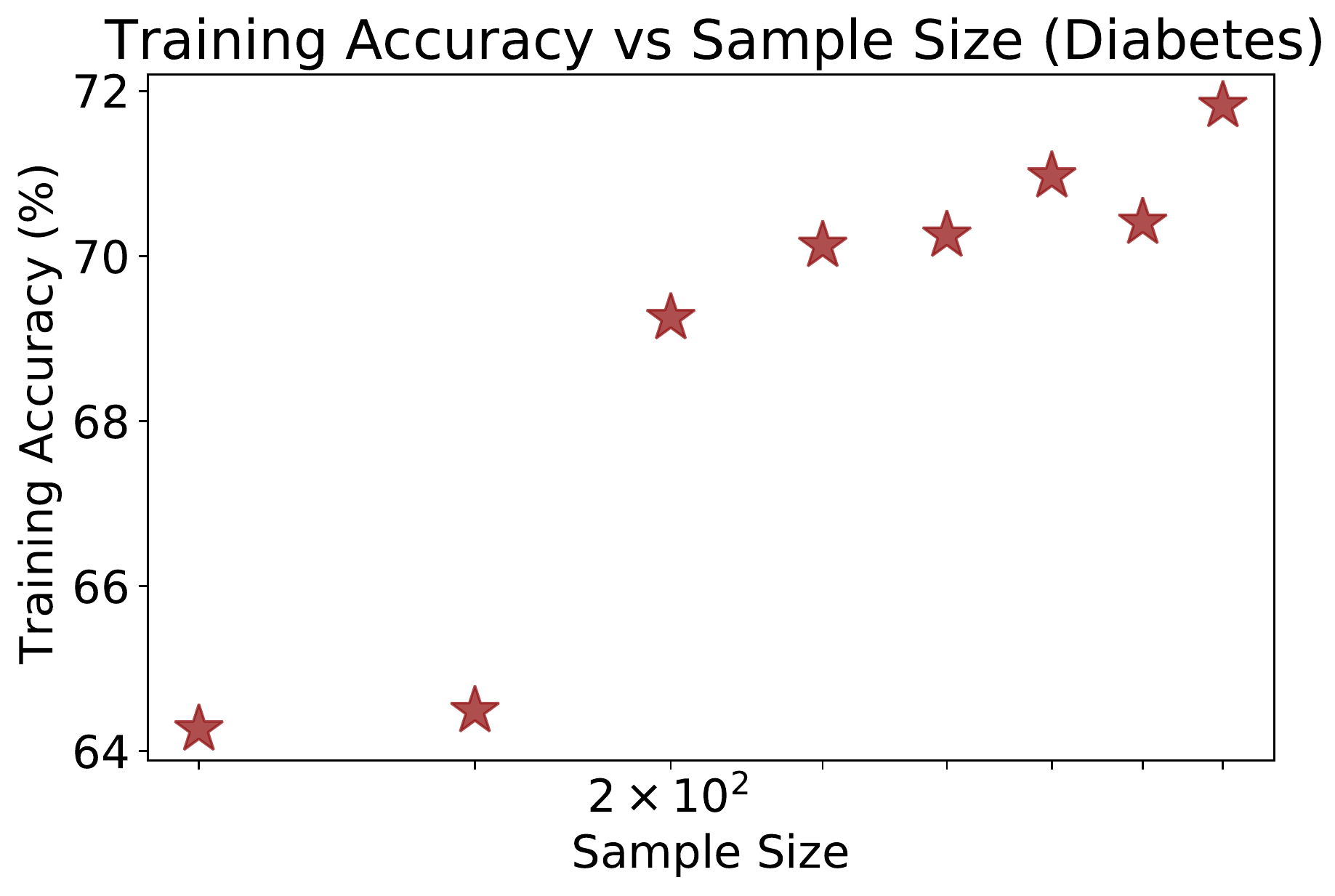}
\vspace{-1ex}
\caption{\textbf{Training accuracy vs$.$ sample size:} the accuracy of weighted sampling approach increase with the sample size.}
\label{fig:sample-size}
\end{figure*}

\subsection{Lalonde Case Study}\label{app:lalonde_trees}
As discussed in Section~\ref{sec:experiments} (Lalonde Case Study), to show the effectiveness of our approach for real-world problems, we conducted a case study on the Lalonde dataset. Trees produced by \GHOUL{} with depth limits of 2, 3, 4, and 5 are found in Figures \ref{fig:lalonde_depth2}-\ref{fig:lalonde_depth5}.

\begin{figure}
    \centering
\begin{frame}{}  
\scalebox{0.75}{
\begin{forest}
    [ $re75 \le 897.4097$ [ $education \le 11.5$ [ $class$ [ $1$ ] ] [ $class$ [ $2$ ] ] ] [ $re75 \le 21497.51$ [ $class$ [ $1$ ] ] [ $class$ [ $0$ ] ] ] ]
\end{forest}
}
\end{frame}
    \caption{The tree generated by weighted \GHOUL{} (depth limit 2) on the Lalonde dataset.}
    \label{fig:lalonde_depth2}
\end{figure}
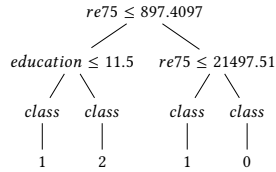

\begin{figure}
    \centering
\begin{frame}{}  
\scalebox{0.75}{
\begin{forest}
    [ $education \le 11.5$ [ $age \le 31.5$ [ $re75 \le 21497.509$ [ $class$ [ $1$ ] ] [ $class$ [ $0$ ] ] ] [ $hispanic \le 0.5$ [ $class$ [ $1$ ] ] [ $class$ [ $0$ ] ] ] ] [ $re75 \le 897.409$ [ $age \le 18.5$ [ $class$ [ $1$ ] ] [ $class$ [ $2$ ] ] ] [ $re75 \le 21497.509$ [ $class$ [ $1$ ] ] [ $class$ [ $0$ ] ] ] ] ]
\end{forest}
}
\end{frame}
    \caption{The tree generated by weighted \GHOUL{} (depth limit 3) on the Lalonde dataset.}
    \label{fig:lalonde_depth3}
\end{figure}

\begin{figure}
    \centering
\begin{frame}{}  
\scalebox{0.75}{
\begin{forest}
    [ $re75 \le 897.4097$ [ $age \le 32.0$ [ $age \le 18.5$ [ $class$ [ $1$ ] ] [ $education \le 10.5$ [ $class$ [ $1$ ] ] [ $class$ [ $2$ ] ] ] ] [ $education \le 11.5$ [ $married \le 0.5$ [ $class$ [ $1$ ] ] [ $class$ [ $2$ ] ] ] [ $class$ [ $2$ ] ] ] ] [ $hispanic \le 0.5$ [ $married \le 0.5$ [ $re75 \le 21497.509$ [ $class$ [ $1$ ] ] [ $class$ [ $0$ ] ] ] [ $re75 \le 32975.189$ [ $class$ [ $1$ ] ] [ $class$ [ $0$ ] ] ] ] [ $age \le 31.5$ [ $re75 \le 13916.55$ [ $class$ [ $1$ ] ] [ $class$ [ $0$ ] ] ] [ $education \le 11.5$ [ $class$ [ $0$ ] ] [ $class$ [ $1$ ] ] ] ] ] ]
\end{forest}
}
\end{frame}
    \caption{The tree generated by weighted \GHOUL{} (depth limit 4) on the Lalonde dataset.}
    \label{fig:lalonde_depth4}
\end{figure}

\begin{figure}
    \centering
  \hspace{-3ex}
\begin{frame}{}  
\scalebox{0.53}{
\begin{forest}
    [ $married \le 0.5$ [ $education \le 10.5$ [ $education \le 4.5$ [ $black \le 0.5$ [ $class$ [ $1$ ] ] [ $re75 \le 897.409$ [ $class$ [ $1$ ] ] [ $class$ [ $2$ ] ] ] ] [ $age \le 31.5$ [ $re75 \le 13916.55$ [ $class$ [ $1$ ] ] [ $class$ [ $0$ ] ] ] [ $hispanic \le 0.5$ [ $class$ [ $1$ ] ] [ $class$ [ $2$ ] ] ] ] ] [ $age \le 33.5$ [ $re75 \le 897.409$ [ $age \le 18.5$ [ $class$ [ $1$ ] ] [ $class$ [ $2$ ] ] ] [ $re75 \le 21497.50$ [ $class$ [ $1$ ] ] [ $class$ [ $0$ ] ] ] ] [ $education \le 11.5$ [ $black \le 0.5$ [ $class$ [ $0$ ] ] [ $class$ [ $1$ ] ] ] [ $re75 \le 897.40$ [ $class$ [ $2$ ] ] [ $class$ [ $1$ ] ] ] ] ] ] [ $re75 \le 2594.9949$ [ $age \le 24.0$ [ $education \le 10.5$ [ $class$ [ $1$ ] ] [ $black \le 0.5$ [ $class$ [ $1$ ] ] [ $class$ [ $2$ ] ] ] ] [ $education \le 9.5$ [ $age \le 31.5$ [ $class$ [ $1$ ] ] [ $class$ [ $2$ ] ] ] [ $re75 \le 1436.94$ [ $class$ [ $2$ ] ] [ $class$ [ $1$ ] ] ] ] ] [ $re75 \le 8066.70$ [ $age \le 31.5$ [ $education \le 9.5$ [ $class$ [ $2$ ] ] [ $class$ [ $1$ ] ] ] [ $class$ [ $1$ ] ] ] [ $re75 \le 13916.55$ [ $age \le 24.0$ [ $class$ [ $1$ ] ] [ $class$ [ $2$ ] ] ] [ $re75 \le 32975.18$ [ $class$ [ $1$ ] ] [ $class$ [ $0$ ] ] ] ] ] ] ]
\end{forest}
}
\end{frame}
    \caption{The tree generated by weighted \GHOUL{} (depth limit 5) on the Lalonde dataset.}
    \label{fig:lalonde_depth5}
\end{figure}

\end{document}